\documentclass{article}

\PassOptionsToPackage{numbers}{natbib}
\usepackage[preprint]{neurips_2026}

\usepackage[utf8]{inputenc} % allow utf-8 input
\usepackage[T1]{fontenc}    % use 8-bit T1 fonts
\usepackage{hyperref}       % hyperlinks
\hypersetup{
    pdftitle={Counterfactual Online Conformal Prediction Under Adaptive Logging},
    pdfauthor={Xinyu Qiao, Yichen Lin, Kaihong Ji, Xue Wang, Tao Yao},
    colorlinks=true,
    linkcolor=blue,
    filecolor=magenta,
    urlcolor=cyan,
    citecolor=blue,
}
\usepackage{url}            % simple URL typesetting
\usepackage{booktabs}       % professional-quality tables
\usepackage{amsfonts}       % blackboard math symbols
\usepackage{nicefrac}       % compact symbols for 1/2, etc.
\usepackage{microtype}      % microtypography
\usepackage{xcolor}         % colors
\usepackage{graphicx}       % graphics
\usepackage{amsmath,amssymb,amsthm}
\usepackage{mathtools}
\usepackage{algorithm}
\usepackage{algorithmic}
\usepackage{enumitem}
\usepackage{tikz}
\usepackage{wrapfig}
\usepackage{subcaption}
\usetikzlibrary{arrows.meta,positioning,fit,backgrounds}

\theoremstyle{plain}
\newtheorem{theorem}{Theorem}[section]

\newtheorem{lemma}[theorem]{Lemma}
\newtheorem{corollary}[theorem]{Corollary}
\theoremstyle{definition}
\newtheorem{definition}[theorem]{Definition}
\newtheorem{assumption}[theorem]{Assumption}
\theoremstyle{remark}
\newtheorem{remark}[theorem]{Remark}

\newcommand{\R}{\mathbb{R}}
\newcommand{\E}{\mathbb{E}}
\newcommand{\Prob}{\mathbb{P}}
\newcommand{\1}{\mathbf{1}}
\newcommand{\cC}{\mathcal{C}}

\newcommand{\cX}{\mathcal{X}}
\newcommand{\cY}{\mathcal{Y}}
\newcommand{\cA}{\mathcal{A}}
\newcommand{\cH}{\mathcal{H}}
\newcommand{\cF}{\mathcal{F}}
\newcommand{\cN}{\mathcal{N}}
\newcommand{\cD}{\mathcal{D}}
\newcommand{\cS}{\mathcal{S}}
\newcommand{\cW}{\mathcal{W}}
\newcommand{\DTV}{D_{\mathrm{TV}}}
\newcommand{\MCov}{\mathrm{MCov}}
\newcommand{\ACov}{\mathrm{ACov}}
\newcommand{\CCov}{\mathrm{CCov}}
\DeclareMathOperator*{\argmin}{arg\,min}

\newcommand{\rowlabel}[1]{\makebox[1.5em]{\rotatebox{90}{\small{#1}}}}

\title{Counterfactual Online Conformal Prediction Under Adaptive Logging}

\author{%
  Xinyu Qiao\textsuperscript{1} \quad
  Yichen Lin\textsuperscript{1} \quad
  Kaihong Ji\textsuperscript{1} \quad
  Xue Wang\textsuperscript{2} \quad
  Tao Yao\textsuperscript{1}\thanks{Corresponding author.} \\[0.5ex]
  {\normalfont\textsuperscript{1}Shanghai Jiao Tong University, Shanghai, China} \\
  {\normalfont\textsuperscript{2}Alibaba Group} \\[0.5ex]
  {\normalfont\textsuperscript{1}\texttt{\{qxinyu25,linyichen,jkh040607,taoyao\}@sjtu.edu.cn}} \\
  {\normalfont\textsuperscript{2}\texttt{wxie91@gmail.com}}
}

\begin{document}
\maketitle

\begin{abstract}
Online conformal prediction can fail when predictions shape actions and actions determine which outcomes enter calibration. Standard adaptive methods may retain marginal coverage while systematically miscovering the counterfactual outcomes of rarely selected actions. This paper formalizes the failure through counterfactual coverage and introduces Propensity-Weighted Online Conformal Prediction, an inverse-propensity-weighted recursion that debiases calibration. A doubly robust variant further reduces nuisance bias to the product of outcome-model and propensity errors. Under positivity, the resulting coverage rate matches an information-theoretic lower bound up to logarithmic factors. Experiments on synthetic decision tasks, open bandit data, and financial rebalancing show that PW-OCP and DR-OCP improve counterfactual coverage and downstream regret without sacrificing prediction-set sharpness.
\end{abstract}

\section{Introduction}\label{sec:intro}

Modern decision systems use conformal prediction to gate clinical trials, route content, and price inventory \citep{lei2021conformal,taufiq2022conformal,patel2023conformal,bao2025optimal}. These same recommendations also reshape the data used to validate them: a logging policy selects which action is taken, and only that action's outcome enters calibration. Standard online conformal methods \citep{gibbs2021adaptive,gibbs2022conformal,angelopoulos2023conformal} preserve average coverage on the realized stream, but the rarely selected actions—often exactly the alternatives a downstream rule must compare—remain miscalibrated.

The relevant target is therefore not the marginal coverage of the played action but the \emph{counterfactual coverage} of every action: would the prediction set have covered the outcome had that action been chosen \citep{lei2021conformal,taufiq2022conformal,zhang2023conformal}? We show that this gap is structural under endogenous logging. The recursion of standard online CP converges to the action-conditional quantile of the \emph{logged} score distribution rather than the counterfactual one, and the two differ by an amount that does not vanish with the horizon.

Our correction, Propensity-Weighted Online Conformal Prediction (PW-OCP), replaces the observed error indicator with an inverse-propensity-weighted estimator of counterfactual miscoverage, realigning the recursion's fixed point with the counterfactual target. A doubly robust variant further reduces nuisance bias to the product of outcome-model and propensity errors. Under positivity, PW-OCP attains the minimax counterfactual coverage rate up to logarithmic factors, and a sharpness--coverage decomposition translates this into a downstream decision-regret guarantee.

\textbf{Contributions.} Our contributions are fourfold:\nopagebreak[4]

\textbf{(1).} We formulate \emph{counterfactual coverage} for online CP under endogenous logging and prove that every logged-quantile per-action calibrator suffers a counterfactual gap of order $\Omega(\rho)$, where $\rho$ quantifies the action-induced shift in the outcome law.

\textbf{(2).} We introduce PW-OCP and its doubly robust extension DR-OCP, and prove a high-probability counterfactual coverage rate that matches an information-theoretic lower bound up to logarithmic factors.

\textbf{(3).} A sharpness--coverage decomposition translates the coverage gap into a robust pseudo-regret bound, explaining why counterfactual calibration improves downstream decisions through tighter prediction sets, not through marginal coverage alone.

\textbf{(4).} On seven synthetic, open-bandit, and financial benchmarks, PW-OCP and DR-OCP consistently reduce counterfactual gaps and decision regret while retaining sharp prediction sets.

\section{Problem Formulation}\label{sec:problem}
Let $\cX$ denote the context space, $\cA$ a finite action space with $|\cA|=K$, and $\cY$ the outcome space. The interaction lasts $T$ rounds. Write $\cH_{t-1}=(X_s,a_s,Y_s)_{s<t}$ for the observed history before round $t$. The logging policy may be history-dependent without being i.i.d.\ or stationary. We impose only the causal and positivity conditions below.

\textbf{Nonconformity scores and prediction sets.} Fix a nonconformity score $s:\cX\times\cY\times\cA\to\R_{\geq 0}$ \citep{vovk2005algorithmic}; standard choices include residual scores $s(x,y;a)=|y-\hat\mu(x,a)|$ and conformalized quantile regression \citep{romano2019conformalized}. Given a threshold $\hat q_t(a)$, the prediction set for action $a$ at round $t$ is
\begin{align}\label{eq:pred-set}
    \cC_t(a)=\{y\in\cY: s(X_t,y;a)\leq \hat q_t(a)\}.
\end{align}
The learner maintains one threshold $\hat q_t(a)$ per action at target miscoverage level $\alpha\in(0,1)$.

\textbf{Sequential interaction.} At each round $t$:
\begin{enumerate}[leftmargin=1.5em,itemsep=1pt,topsep=2pt]
    \item the environment reveals $X_t$;
    \item the learner outputs $\{\cC_t(a)\}_{a\in\cA}$;
    \item the logger draws $a_t\sim\pi_t(\cdot\mid X_t,\cH_{t-1})$, where $\pi_t(a\mid X_t,\cH_{t-1}):=\Prob(a_t=a\mid X_t,\cH_{t-1})$;
    \item the environment reveals $Y_t\sim P(\cdot\mid X_t,a_t)$.
\end{enumerate}
The classical online conformal setting corresponds to $P(\cdot\mid X_t,a_t)=P(\cdot\mid X_t)$ \citep{gibbs2021adaptive}.

\subsection{Potential Outcomes and Structural Assumptions}
For each $a\in\cA$, let $Y_t(a)\in\cY$ denote the potential outcome under the intervention $a_t=a$ \citep{rubin1974estimating}, so that $Y_t=Y_t(a_t)$. The environment specifies a joint law over $(X_t,\{Y_t(a)\}_{a\in\cA})$.

\begin{assumption}[No unmeasured confounders]\label{ass:nuc}
$\{Y_t(a)\}_{a\in\cA} \perp\!\!\!\perp a_t\mid X_t,\cH_{t-1}.$
\end{assumption}

This is sequential ignorability for contextual bandits. It holds whenever every variable used by the logger is included in $(X_t,\cH_{t-1})$, e.g., randomized trials and $\varepsilon$-greedy production logging.

\begin{assumption}[Positivity]\label{ass:pos}
$\pi_t(a\mid X_t,\cH_{t-1})\geq \pi_{\min}>0$ for all $t$, $a\in\cA$, and $X_t\in\cX$.
\end{assumption}

\begin{remark}[Design role of $\pi_{\min}$]\label{rem:pos-design}
$\pi_{\min}$ is a known design constant in randomized assignment, A/B allocation, and fairness-constrained exposure. The regime $\pi_{\min}\to 0$ (e.g., UCB, Thompson sampling without forced exploration) is excluded; Theorem~\ref{thm:lower} shows this restriction is information-theoretic rather than technical.
\end{remark}
\subsection{Coverage Targets}
\begin{definition}[Coverage targets]
\begin{enumerate}[leftmargin=1.5em,itemsep=1pt,topsep=1pt]
    \item \textbf{Marginal coverage.} The realized marginal coverage is
    \begin{align}\label{def:mcov}
        \MCov_T:=\frac{1}{T}\sum_{t=1}^T\1\{Y_t\in \cC_t(a_t)\}.
    \end{align}
    % \item Action-Conditional Coverage: For each action $a\in\cA$, given a constant $\epsilon\in(0,1)$, the action-conditional coverage is
    % \begin{align}\label{def:acov}
    %     \ACov_T(a):=\frac{\sum_{t=1}^T\1\{a_t=a,Y_t(a)\in\cC_t(a)\}}{\epsilon+\sum_{t=1}^T\1\{a_t=a\}}.
    % \end{align}

    \item \textbf{Action-conditional coverage.} Let $N_T(a)=\sum_{t=1}^T\1\{a_t=a\}$. For each action $a\in\cA$ with $N_T(a)>0$, the action-conditional coverage is
    \begin{align}\label{def:acov}
        \ACov_T(a):=
        \frac{\sum_{t=1}^T\1\{a_t=a,Y_t(a)\in\cC_t(a)\}}{N_T(a)}.
    \end{align}
    % When $N_T(a)=0$, $\ACov_T(a)$ is undefined and is omitted from the reported action-conditional metric.
    If $N_T(a)=0$, this diagnostic is undefined and omitted.

    \item \textbf{Counterfactual coverage.} For each action $a\in\cA$, the counterfactual coverage is
    \begin{align}\label{def:ccov}
        \CCov_T(a):=\frac{1}{T}\sum_{t=1}^T\1\{Y_t(a)\in\cC_t(a)\}.
    \end{align}
\end{enumerate}
\end{definition}

Only $\MCov_T$ is observable. $\ACov_T(a)$ averages over rounds with $a_t=a$ and is therefore biased by the policy-induced context distribution. $\CCov_T(a)$ averages over the marginal context distribution and is the sequential analogue of batch counterfactual coverage \citep{lei2021conformal}. The three quantities are not deterministically ordered. When $\rho>0$ (Definition~\ref{def:rho}), Theorem~\ref{thm:impossibility} shows that control of $\MCov_T$ or $\ACov_T(a)$ need not imply control of $\CCov_T(a)$.

\subsection{Endogeneity Coefficient}
\begin{definition}[Endogeneity coefficient]\label{def:rho}
\begin{align}\label{eq:rho}
    \rho := \sup_{x\in\cX}\max_{a\neq a'}\DTV\!\bigl(P(\cdot\mid x,a),\,P(\cdot\mid x,a')\bigr) \;\in\; [0,1],
\end{align}
where $\DTV(P,Q)=\tfrac{1}{2}\int|p(y)-q(y)|\,dy$ is the total-variation distance.
\end{definition}

The case $\rho=0$ recovers exchangeable online CP. 
%For $\rho>0$, the logged scores are drawn from a policy-dependent mixture rather than the per-action laws required for counterfactual calibration; calibrating $\{\cC_t(a)\}_{a\in\cA}$ becomes a causal estimation problem under selection-biased sampling.
For $\rho>0$, the logged scores are drawn from a policy-dependent mixture rather than the per-action laws required for counterfactual calibration. Calibrating $\{\cC_t(a)\}_{a\in\cA}$ then becomes a causal estimation problem under selection-biased sampling.

\section{Failure of Standard Online CP Under Endogenous Logging}\label{sec:impossibility}

Standard online conformal methods calibrate the recursion against the realized logged error stream. This controls the empirical mixture $\MCov_T$ (Theorem~\ref{thm:marginal}) but, under endogenous logging, leaves the per-action quantity $\CCov_T(a)$ uncontrolled by an amount that does not vanish with $T$ (Theorem~\ref{thm:impossibility}).

\subsection{Logged Empirical Coverage Survives}

\begin{theorem}[Long-run empirical coverage robustness, \citep{gibbs2021adaptive}]\label{thm:marginal}
Fix $\alpha\in(0,1)$, step size $\gamma>0$, and initial value $\alpha_1\in[0,1]$. Let $\hat Q_t:[0,1]\to\R\cup\{\pm\infty\}$ be a nondecreasing $(\cH_{t-1},X_t)$-measurable score-quantile map satisfying the ACI boundary convention
\begin{align}\label{eq:aci-boundary}
    \hat Q_t(u)=-\infty \quad\text{for } u\leq 0,\qquad
    \hat Q_t(u)=+\infty \quad\text{for } u\geq 1 .
\end{align}
Set $\hat q_t\!:=\!\hat Q_t(1\!-\!\alpha_t)$, $\cC_t(a)\!:=\!\{y\!\in\!\cY\!:\!s(X_t,y;a)\!\leq\! \hat q_t\}$, $\mathrm{err}_t\!:=\!\1\{Y_t\notin\cC_t(a_t)\}$, and run ACI update
\begin{align}\label{eq:aci-update}
    \alpha_{t+1}=\alpha_t+\gamma(\alpha-\mathrm{err}_t).
\end{align}
Then, for every sample path $(X_t,a_t,Y_t)_{t=1}^T$,
\begin{align}\label{eq:marginal-bound}
|\MCov_T - (1-\alpha)| \leq
\frac{\max\{\alpha_1,\,1-\alpha_1\}+\gamma}{\gamma T}.
\end{align}
\end{theorem}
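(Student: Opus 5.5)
The plan is to reproduce the deterministic argument of \citet{gibbs2021adaptive}: first show that the ACI iterate $\alpha_t$ never leaves a bounded interval, then telescope the update \eqref{eq:aci-update}. Nothing about the logging policy, the endogeneity coefficient, or Assumptions~\ref{ass:nuc}--\ref{ass:pos} is needed. The statement is pathwise, and $\mathrm{err}_t$ is simply some $\{0,1\}$-valued sequence tied to $\alpha_t$ through the boundary convention \eqref{eq:aci-boundary}.

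\textbf{Step 1 (boundedness).} I will prove by induction that $\alpha_t\in[-\gamma,1+\gamma]$ for all $t$. The base case holds because $\alpha_1\in[0,1]$. For the inductive step, split into three cases.
\begin{itemize}
\item If $\alpha_t\in[0,1]$, then one update moves $\alpha_t$ by at most $\gamma\max\{\alpha,1-\alpha\}\le\gamma$, so $\alpha_{t+1}\in[-\gamma,1+\gamma]$.
\item If $\alpha_t>1$, then $1-\alpha_t<0$, so $\hat q_t=-\infty$ by \eqref{eq:aci-boundary}. The set $\cC_t(a_t)$ is empty because scores are real-valued, so $\mathrm{err}_t=1$. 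Hence $\alpha_{t+1}=\alpha_t-\gamma(1-\alpha)<\alpha_t$, and also $\alpha_{t+1}>1-\gamma$, so the iterate stays in range.
\item If $\alpha_t<0$, then $1-\alpha_t>1$, so $\hat q_t=+\infty$. The set $\cC_t(a_t)$ is all of $\cY$, so $\mathrm{err}_t=0$ and $\alpha_{t+1}=\alpha_t+\gamma\alpha>\alpha_t$, which is still at most $\gamma\alpha<\gamma$.
\end{itemize}
A sharper version of this step keeps track of the excursion relative to $\alpha_1$, giving $\alpha_{T+1}-\alpha_1\in[-\alpha_1-\gamma,\,1-\alpha_1+\gamma]$. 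This is the source of the constant $\max\{\alpha_1,1-\alpha_1\}+\gamma$ in \eqref{eq:marginal-bound}.

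\textbf{Step 2 (telescoping).} Summing \eqref{eq:aci-update} over $t=1,\dots,T$ gives
\[
\alpha_{T+1}-\alpha_1=\gamma\Bigl(T\alpha-\sum_{t=1}^T\mathrm{err}_t\Bigr).
\]
Since $\MCov_T=1-\tfrac1T\sum_t\mathrm{err}_t$, this rearranges to
\[
\MCov_T-(1-\alpha)=\frac{\alpha_{T+1}-\alpha_1}{\gamma T}.
\]
Substituting the range from Step 1 yields \eqref{eq:marginal-bound}.

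\textbf{Main obstacle.} The delicate point is the boundary case analysis in Step 1. I need the convention \eqref{eq:aci-boundary} together with the fact that the score $s$ takes finite real values, so that $\hat q_t=\pm\infty$ really does force $\mathrm{err}_t\in\{1,0\}$ respectively. I also have to handle the endpoints $u=0$ and $u=1$ correctly, which the convention covers by using non-strict inequalities. Once that is settled, the remaining work is bookkeeping to confirm that the constant comes out exactly as $\max\{\alpha_1,1-\alpha_1\}+\gamma$ and not something looser.
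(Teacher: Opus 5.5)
Your proposal is correct and follows the paper's proof: the boundary convention forces $\mathrm{err}_t=1$ when $\alpha_t>1$ and $\mathrm{err}_t=0$ when $\alpha_t<0$, which gives the envelope $\alpha_t\in[-\gamma,1+\gamma]$ (you prove it by induction, where the paper cites the standard ACI argument). Telescoping the update then gives $\MCov_T-(1-\alpha)=(\alpha_{T+1}-\alpha_1)/(\gamma T)$. Your ``sharper version'' is not an extra refinement: $\alpha_{T+1}-\alpha_1\in[-\alpha_1-\gamma,\,1-\alpha_1+\gamma]$ is just the envelope shifted by $\alpha_1$, and it directly yields the constant $\max\{\alpha_1,1-\alpha_1\}+\gamma$.
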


% The bound \eqref{eq:marginal-bound} is pathwise and distribution-free. It controls the long-run logged empirical average $\MCov_T$, not a per-round probabilistic marginal-coverage guarantee, and is agnostic to how $\hat Q_t$ is constructed: global, per-arm, or IPW-reweighted quantiles all satisfy the same telescoping identity. 
% This robustness is also its limitation. Because $\MCov_T$ averages over the realized action sequence, \eqref{eq:marginal-bound} can hold even when $\alpha_t$ leaves $[0,1]$ and \eqref{eq:aci-boundary} forces $\cC_t$ to degenerate to $\cY$ or $\emptyset$ on a non-vanishing fraction of rounds. The bound thus imposes no constraint on rarely selected actions. The proof is in Appendix~\ref{app:marginal_proof}.

The bound \eqref{eq:marginal-bound} is pathwise and distribution-free. It controls the logged empirical average $\MCov_T$, not a per-round marginal-coverage guarantee, and is agnostic to how $\hat Q_t$ is constructed: global, per-arm, or IPW-reweighted quantiles all satisfy the same telescoping identity. 
This robustness is its limitation. Because $\MCov_T$ averages over the realized action sequence, \eqref{eq:marginal-bound} can hold even when $\alpha_t$ leaves $[0,1]$ and \eqref{eq:aci-boundary} forces $\cC_t$ to degenerate to $\cY$ or $\emptyset$ on a non-vanishing fraction of rounds. The bound thus imposes no constraint on rarely selected actions. The proof is in Appendix~\ref{app:marginal_proof}.

\subsection{Counterfactual Coverage Breaks}
\begin{theorem}[Impossibility for logged-quantile per-action calibration]\label{thm:impossibility}
Fix $\alpha\in(0,1)$. There exist absolute constants $\rho_0>0$ and $c(\alpha),L>0$ such that, for every $\rho\in(0,\rho_0]$, there is a stationary two-context, two-action Gaussian instance satisfying Assumptions~\ref{ass:nuc}--\ref{ass:pos} with endogeneity coefficient $\rho$ on which the following holds. Let $\{\hat q_t(a)\}_{t,a}$ be any sequence of predictable per-action thresholds under the absolute-residual score, and let $q_a^{\log}$ denote the $(1-\alpha)$ quantile of the score distribution conditional on $a_t=a$. Set
\begin{align}\label{eq:eps-T}
    \varepsilon_T:=\max_{a\in\cA}\E\!\left[\frac{1}{T}\sum_{t=1}^T |\hat q_t(a)-q_a^{\log}|\right].
\end{align}
Then
\begin{align}\label{eq:impossibility-bound}
\E\!\left[\max_{a\in\cA} |\CCov_T(a) - (1-\alpha)|\right]
\;\geq\; c(\alpha)\,\rho - L\,\varepsilon_T.
\end{align}
In particular, $\varepsilon_T=O(T^{-1/2})$ implies $\E[\max_a|\CCov_T(a)-(1-\alpha)|]\geq c(\alpha)\rho-O(T^{-1/2})$.
\end{theorem}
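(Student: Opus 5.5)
The plan is to build an explicit instance in which the logged action-conditional score law differs from the counterfactual one, and then transfer the resulting quantile gap to a coverage gap through a Lipschitz argument.

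\textbf{Step 1 (instance).} Take two contexts $x\in\{0,1\}$, each with probability $1/2$, i.i.d.\ across rounds, and two actions. The logging policy is stationary and context-dependent: $\pi(a=1\mid x=1)=1-\pi_{\min}$ and $\pi(a=1\mid x=0)=\pi_{\min}$, with, say, $\pi_{\min}=1/4$. This is positive and ignorable, so Assumptions~\ref{ass:nuc}--\ref{ass:pos} hold. Potential outcomes are Gaussian, $Y(a)\mid x\sim N(\mu_{x,a},1)$, and the score is $|y-\hat\mu(x,a)|$ with the plug-in $\hat\mu(x,a)=0$.

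The means are chosen so that the score law depends on the context: $\mu_{0,a}=0$ and $\mu_{1,a}=\delta\,\1\{a=1\}$. Then $\DTV(N(\delta,1),N(0,1))=2\Phi(\delta/2)-1\asymp\delta$. The supremum in \eqref{eq:rho} is therefore attained at $x=1$, and picking $\delta=\delta(\rho)$ makes the endogeneity coefficient exactly $\rho$, with $\delta\asymp\rho$ for $\rho\le\rho_0$.

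\textbf{Step 2 (logged versus counterfactual quantile).} For $a=1$, the logged score law is a mixture that puts weight $w_1=\Prob(x=1\mid a_t=1)=1-\pi_{\min}$ on context $1$. The counterfactual law puts weight $1/2$ on each context. Write $G_\delta$ for the CDF of $|N(\delta,1)|$ and $G_0$ for that of $|N(0,1)|$. Then
\[
F^{\log}_1=w_1G_\delta+(1-w_1)G_0,\qquad F^{cf}_1=\tfrac12 G_\delta+\tfrac12 G_0 .
\]
A Taylor expansion gives $G_\delta(q)=G_0(q)-\kappa(q)\delta^2+O(\delta^3)$, because folding makes the first-order term vanish.

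This is a real obstacle. With symmetric folding the gap is only of order $\delta^2\asymp\rho^2$, not $\rho$. I would remove it by breaking the symmetry: either use the signed score, or shift the predictor so that $\hat\mu(1,1)=-\delta/2$. Then $|N(\delta/2,1)|$ versus $|N(-\delta/2,1)|$ are still identical in law, so that does not help. The cleaner fix is to set $\hat\mu\equiv -1$, so that the score is $|Y+1|$. Its CDF moves at first order in $\delta$: $\partial_\delta G_\delta(q)=-(\phi(q-1-\delta)-\phi(q+1+\delta))\neq0$.

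With this choice, $F^{cf}_1(q_1^{\log})-(1-\alpha)=(\tfrac12-w_1)\bigl(G_\delta-G_0\bigr)(q_1^{\log})$. Its magnitude is at least $c(\alpha)\rho$, with $c(\alpha)$ depending on $|\tfrac12-w_1|$ and on a lower bound of $|\phi(q-1)-\phi(q+1)|$ near $q_0=G_0^{-1}(1-\alpha)$. The remaining $\delta^2$ terms are absorbed by choosing $\rho_0$ small.

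\textbf{Step 3 (transfer to $\CCov_T$).} Because $\hat q_t(1)$ is predictable and the instance is stationary, $Y_t(1)$ is independent of $\cH_{t-1}$ given $X_t$, and $X_t$ is fresh. Hence
\[
\E[\1\{Y_t(1)\in\cC_t(1)\}\mid\cH_{t-1}]=F^{cf}_1(\hat q_t(1)).
\]
The CDF $F^{cf}_1$ is $L$-Lipschitz, since Gaussian densities are bounded by $L=2\phi(0)$. This gives $|F^{cf}_1(\hat q_t(1))-F^{cf}_1(q^{\log}_1)|\le L|\hat q_t(1)-q^{\log}_1|$.

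Averaging over $t$ and taking expectations, $\bigl|\E[\CCov_T(1)]-(1-\alpha)\bigr|\ge c(\alpha)\rho-L\varepsilon_T$. Finally, $\E\max_a|\cdot|\ge\E|\CCov_T(1)-(1-\alpha)|\ge|\E\CCov_T(1)-(1-\alpha)|$ by Jensen's inequality, which yields \eqref{eq:impossibility-bound}. The corollary for $\varepsilon_T=O(T^{-1/2})$ is immediate.

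The main obstacle is Step 2: obtaining a first-order-in-$\rho$ separation between the quantiles under the absolute-residual score. This requires the asymmetric choice of the base predictor described above, together with a uniform lower bound on the derivative at $q_0$ that holds for all $\rho\le\rho_0$.
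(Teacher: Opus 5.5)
Your proof is correct, but it uses a different instance from the paper's.

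\textbf{The paper's instance.} The paper keeps the residual centered at the true mean, $s=|y-5|$, and uses a variance shift: $\cN(5,1)$ versus $\cN(5,(1+\Delta)^2)$, swapped across the two contexts, with the same $3/4$--$1/4$ logger. Folding does not cancel a change of scale, so $F(q)-F(q/(1+\Delta))\approx q f(q)\Delta$ is already first order. The paper then expands the logged quantile by the implicit-function theorem, $q_\Delta=q_0+\tfrac{q_0}{4}\Delta+O(\Delta^2)$, and computes $\rho(\Delta)=2\phi(1)\Delta+O(\Delta^2)$.

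\textbf{Your instance.} You use a mean shift instead. You correctly notice that the folded score law then moves only at order $\delta^2$, which is exactly why a mean shift with a well-centered residual cannot give a first-order gap. You break the symmetry with the constant predictor $\hat\mu\equiv-1$, so that $\partial_\delta G_\delta(q)=-\{\phi(q-1-\delta)-\phi(q+1+\delta)\}<0$ for $q>0$.

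\textbf{What your route buys.} Your identity $F^{cf}_1(q^{\log}_1)-(1-\alpha)=(\tfrac12-w_1)(G_\delta-G_0)(q^{\log}_1)$ is a cleaner way to get the population gap than the quantile expansion. It needs only $q^{\log}_1\to q_0$ and a mean-value bound. It also makes explicit the step the paper compresses into ``is therefore at least $c_0(\alpha)\Delta$''. Your Step~3 (predictability, Lipschitz constant $2\phi(0)$, Jensen) is the same as the paper's.

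\textbf{What it gives up.} Your construction is not robust to the choice of $\hat\mu$. With the well-specified $\hat\mu(x,a)=\mu_{x,a}$, the logged and counterfactual score laws coincide and the gap vanishes. With any centering within $O(\delta)$ of the outcome means, the gap is only $O(\delta^2)$. So your $\Omega(\rho)$ failure relies on a residual that is miscentered by an order-one amount.

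The statement lets the instance fix the score, so your choice is admissible. The paper's variance-shift instance, however, shows the stronger fact: the failure persists with a perfectly specified mean model and is caused by selection bias alone.
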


Theorem~\ref{thm:impossibility} pinpoints the failure source. The per-action update tracks the logged quantile $q_a^{\log}$ rather than the counterfactual one. When the logging policy correlates actions with contexts and each action shifts the outcome law, the two quantiles differ by $\Theta(\rho)$, producing a corresponding gap in $\CCov_T(a)$. The bound \eqref{eq:impossibility-bound} is non-vacuous in the regime $\rho \gg \varepsilon_T$.

The result applies to per-action threshold calibrators whose population target is the logged action-conditional quantile $q_a^{\log}$, including standard per-arm ACI and the logged-quantile versions of the adaptive baselines used in our experiments. Methods with additional smoothing or aggregation are covered whenever their per-arm fixed point coincides with $q_a^{\log}$. The proof, which uses a two-context Gaussian variance-shift construction, is in Appendix~\ref{app:impossibility-proof}.

\begin{figure}[H]
    \hspace{1.5em}%
    \begin{subfigure}{0.45\textwidth}\centering\normalsize{\qquad\quad Logged coverage}\end{subfigure}\hfill
    \begin{subfigure}{0.45\textwidth}\centering\normalsize{Counterfactual coverage}\end{subfigure}
    \\[2pt] 
    \rowlabel{\qquad\qquad \normalsize{MCov Gap}}%
     \begin{subfigure}[b]{0.45\textwidth}\centering
        \includegraphics[width=\textwidth]{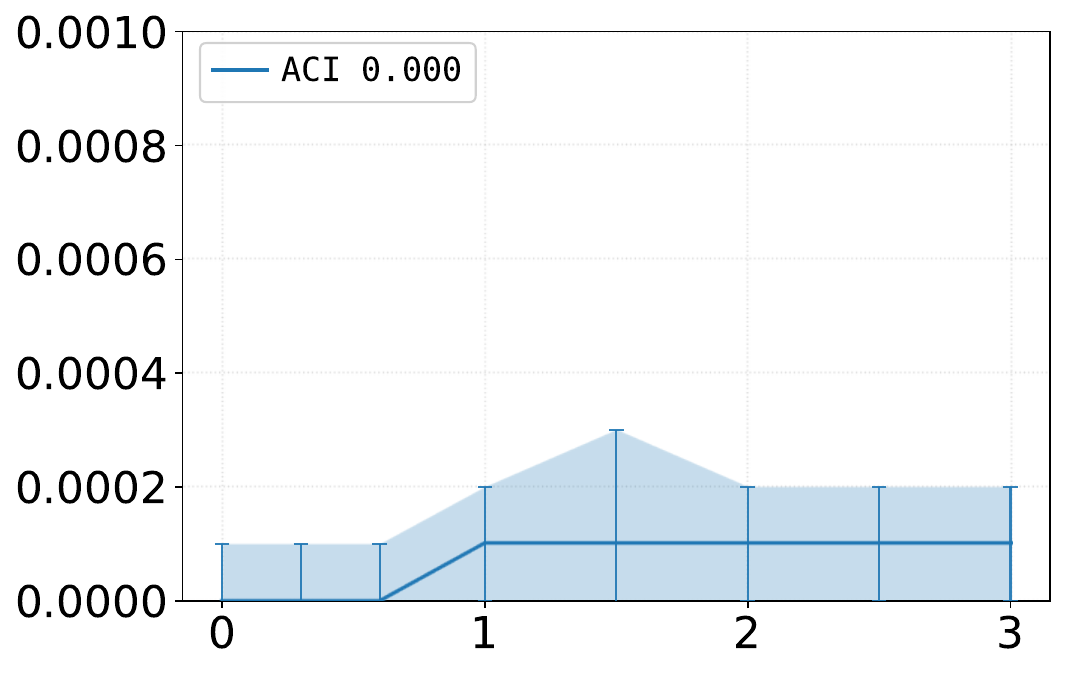}
    \end{subfigure}
    \rowlabel{\qquad\quad\quad \normalsize{CCov Gap}}%
    \begin{subfigure}[b]{0.45\textwidth}\centering
        \includegraphics[width=\textwidth]{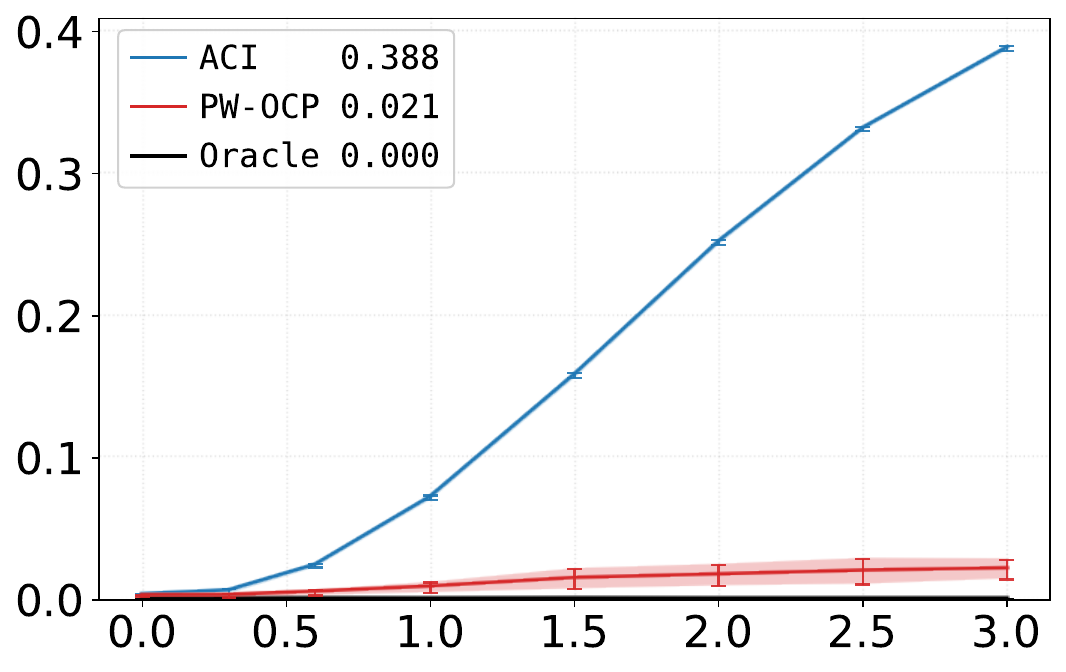}
    \end{subfigure}
    \vskip -0.05in
    \begin{subfigure}{0.45\textwidth}
        \centering
        \qquad \normalsize{\qquad\qquad Endogeneity strength $\Delta$}
    \end{subfigure}
    \begin{subfigure}{0.45\textwidth}
        \centering
        \setlength{\parindent}{0.5em}
        \qquad \normalsize{\qquad\qquad\quad Endogeneity strength $\Delta$}
    \end{subfigure}
    \vskip -0.05in
    \caption{\textbf{Logged coverage can hide counterfactual failure.} Both panels sweep the mean-shift strength $\Delta\in[0,3]$ at $T=20{,}000$ and $\alpha=0.1$. The left panel shows that ACI's logged marginal gap stays essentially zero on a $10^{-3}$ scale. The right panel shows the counterfactual gap: ACI grows with $\Delta$, while PW-OCP remains close to Oracle, reproducing and correcting the $\Omega(\rho)$ failure in Theorem~\ref{thm:impossibility}. Bands are $95\%$ confidence intervals over $10$ seeds.}
\label{fig:section4-coverage-punchline}
\end{figure}

\section{Propensity-Weighted Online Conformal Prediction}\label{sec:pwocp}
Theorem~\ref{thm:impossibility} traces standard online CP failure to the calibration target rather than the conformal quantile. PW-OCP corrects the target by replacing the logged error indicator $\mathrm{err}_t$ in the ACI recursion \eqref{eq:aci-update} with an inverse-propensity-weighted estimator of counterfactual miscoverage \citep{horvitz1952generalization,wang2017optimal}.

\subsection{The IPW Error Signal}

\begin{definition}[IPW error signal]\label{def:ipw}
\begin{align}\label{eq:ipw}
     Z_t(a) \;:=\; \frac{\1\{a_t = a\}}{\pi_t(a\mid X_t,\cH_{t-1})} \cdot \1\{Y_t \notin \cC_t(a)\}.
\end{align}
\end{definition}

\begin{lemma}[Unbiasedness]\label{lem:unbiased}
Fix $a\in\cA$ and assume $\cC_t(a)$ is $\sigma(\cH_{t-1},X_t)$-measurable. Under Assumptions~\ref{ass:nuc}--\ref{ass:pos},
\begin{align}\label{eq:ipw-unbiased}
    \E[Z_t(a)\mid X_t,\cH_{t-1}]
    \;=\;
    \Prob\bigl(Y_t(a)\notin\cC_t(a)\mid X_t,\cH_{t-1}\bigr).
\end{align}
\end{lemma}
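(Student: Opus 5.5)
The plan is a direct conditional-expectation computation that uses Assumption~\ref{ass:nuc} to decouple the action indicator from the potential outcome. Write $\cF_t:=\sigma(X_t,\cH_{t-1})$ and $\pi_t(a):=\pi_t(a\mid X_t,\cH_{t-1})$.

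\textbf{Step 1: replace the realized outcome by the potential outcome.} On the event $\{a_t=a\}$ we have $Y_t=Y_t(a_t)=Y_t(a)$. Hence
\[
\1\{a_t=a\}\1\{Y_t\notin\cC_t(a)\}=\1\{a_t=a\}\1\{Y_t(a)\notin\cC_t(a)\},
\]
and so $Z_t(a)=\pi_t(a)^{-1}\1\{a_t=a\}\1\{Y_t(a)\notin\cC_t(a)\}$. This removes any dependence on the realized outcome through the action actually played.

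\textbf{Step 2: pull out the measurable factors.} By definition $\pi_t(a)$ is $\cF_t$-measurable, and Assumption~\ref{ass:pos} gives $\pi_t(a)\ge\pi_{\min}>0$. So $1/\pi_t(a)$ is well defined and bounded by $1/\pi_{\min}$, which together with the indicators makes $Z_t(a)$ integrable. We can therefore take $1/\pi_t(a)$ outside the conditional expectation. The set $\cC_t(a)$ is $\cF_t$-measurable by hypothesis.

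\textbf{Step 3: factor the remaining product.} Assumption~\ref{ass:nuc} states that $\{Y_t(a)\}$ is independent of $a_t$ given $\cF_t$. Since $\cC_t(a)$ is $\cF_t$-measurable, this gives
\[
\E\bigl[\1\{a_t=a\}\1\{Y_t(a)\notin\cC_t(a)\}\mid\cF_t\bigr]=\Prob(a_t=a\mid\cF_t)\,\Prob\bigl(Y_t(a)\notin\cC_t(a)\mid\cF_t\bigr).
\]
The first factor equals $\pi_t(a)$ by definition, and it cancels with the weight $1/\pi_t(a)$, which yields \eqref{eq:ipw-unbiased}.

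\textbf{Main obstacle.} The only delicate point is the step inside Step 3. The event $\{Y_t(a)\notin\cC_t(a)\}$ involves both $Y_t(a)$ and the random set $\cC_t(a)$, whereas conditional independence is stated for $Y_t(a)$ alone. I would handle this with the standard lemma: if $U\perp\!\!\!\perp V\mid\cG$ and $W$ is $\cG$-measurable, then $(U,W)\perp\!\!\!\perp V\mid\cG$. Applied with $U=Y_t(a)$, $V=a_t$, $W=\cC_t(a)$, this makes the event $\{Y_t(a)\notin\cC_t(a)\}=\{s(X_t,Y_t(a);a)>\hat q_t(a)\}$ conditionally independent of $a_t$ given $\cF_t$, with measurability coming from that of $s$. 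One can equivalently verify this via a monotone-class argument on product sets.
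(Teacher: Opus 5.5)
Your proposal is correct and follows essentially the same route as the paper: consistency to replace $Y_t$ by $Y_t(a)$ on $\{a_t=a\}$, pulling out the $\sigma(\cH_{t-1},X_t)$-measurable weight, and using Assumption~\ref{ass:nuc} to factor off $\Prob(a_t=a\mid X_t,\cH_{t-1})=\pi_t(a\mid X_t,\cH_{t-1})$, which then cancels. The paper writes the factorization as conditioning on $\{a_t=a\}$ and then dropping that conditioning via no unmeasured confounding. Your direct product form is the same argument, with the measurability of the random set $\cC_t(a)$ spelled out more carefully.
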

The propensity weight cancels the logger's selection probability under no unmeasured confounding (Assumption~\ref{ass:nuc}), so $Z_t(a)$ is an unbiased estimator of the round-$t$ counterfactual miscoverage probability. Proof is given in Appendix~\ref{app:unbiased}.

\subsection{Algorithm}
PW-OCP maintains one miscoverage index $\alpha_t(a)\in\R$ per action. Let $\cS_t(a):=\{s(X_\tau,Y_\tau;a):\tau<t,\,a_\tau=a\}$ be the logged score set for action $a$, and let $\hat Q_t^{(a)}$ denote its empirical quantile under the boundary convention~\eqref{eq:aci-boundary}. The threshold and recursion are
\begin{align}\label{eq:pwocp-update}
    \hat q_t(a):=\hat Q_t^{(a)}\bigl(1-\alpha_t(a)\bigr),
    \qquad
    \alpha_{t+1}(a) := \alpha_t(a) + \gamma\bigl(\alpha - Z_t(a)\bigr),
\end{align}
where $Z_t(a)$ is the IPW signal in Definition~\ref{def:ipw}, evaluated at the prediction set $\cC_t(a):=\{y:s(X_t,y;a)\leq\hat q_t(a)\}$. Each threshold is computed from logged scores for the corresponding action, whereas the recursion is driven by the unbiased IPW estimator of counterfactual miscoverage from Lemma~\ref{lem:unbiased}. Algorithm~\ref{alg:pwocp} summarizes the online procedure. The DR-OCP variant, the Online COPP baseline, and nuisance-estimation details are deferred to Appendix~\ref{app:algorithms}.

\begin{algorithm}[H]
\caption{PW-OCP: Propensity-Weighted Online Conformal Prediction}\label{alg:pwocp}
% \small
\begin{algorithmic}[1]
% \STATE \textbf{Input}: target miscoverage $\alpha$, step size $\gamma$
\STATE \textbf{Input}: target miscoverage $\alpha$, step size $\gamma$, logging propensities $\pi_t$
\STATE Initialize $\alpha_1(a)\gets\alpha$ and $\cS_1(a)\gets\emptyset$ for all $a\in\cA$
\FOR{$t=1,\ldots,T$}
    \STATE \textcolor{blue}{{// Construct action-wise prediction sets}}
    \STATE Observe $X_t$
    \FOR{each $a\in\cA$}
        \STATE $\hat q_t(a)\gets\hat Q_t^{(a)}(1-\alpha_t(a))$
        \STATE $\cC_t(a)\gets\{y:s(X_t,y;a)\leq\hat q_t(a)\}$
    \ENDFOR
    \STATE \textcolor{blue}{{// Observe logged feedback}}
    \STATE Draw $a_t\sim\pi_t(\cdot\mid X_t,\cH_{t-1})$ and observe $Y_t$
    % \STATE Let $E_t\gets\1\{s(X_t,Y_t;a_t)>\hat q_t(a_t)\}$
    \STATE $\cS_{t+1}(a_t)\gets\cS_t(a_t)\cup\{s(X_t,Y_t;a_t)\}$
    \STATE $\cS_{t+1}(a)\gets\cS_t(a)$ for all $a\neq a_t$
    \STATE \textcolor{blue}{{// Propensity-weighted calibration}}
    \FOR{each $a\in\cA$}
        \STATE $p_t(a)\gets\pi_t(a\mid X_t,\cH_{t-1})$
        \STATE $Z_t(a)\gets\frac{\1\{a_t=a\}}{p_t(a)}\cdot \1\{s(X_t,Y_t;a_t)>\hat q_t(a_t)\}$
        \STATE $\alpha_{t+1}(a)\gets\alpha_t(a)+\gamma(\alpha-Z_t(a))$
    \ENDFOR
\ENDFOR
\end{algorithmic}
\end{algorithm}

\subsection{Doubly Robust Calibration}
% The IPW signal $Z_t(a)$ has variance scaling as $1/\pi_{\min}$ and requires the true logging propensity. Augmenting it with an outcome model yields a doubly robust signal that remains valid under estimation error in either nuisance. Let

The IPW signal $Z_t(a)$ has conditional variance controlled by the inverse propensity floor and therefore becomes noisy when $\pi_{\min}$ is small. It also requires the true logging propensity. Augmenting the IPW signal with an outcome model yields a doubly robust signal whose conditional bias is second order in the nuisance errors. Let
\vspace{-0.5em}
\begin{align}\label{eq:mu-star}
\mu_t^*(a):=\Prob\bigl(Y_t(a)\notin\cC_t(a)\mid X_t,\cH_{t-1}\bigr)
\end{align}
denote the round-$t$ counterfactual miscoverage probability — the target identified by~\eqref{eq:ipw-unbiased}. Let $\hat\mu_t(a,X_t)\in[0,1]$ be a $\sigma(\cH_{t-1},X_t)$-measurable estimator of $\mu_t^*(a)$, and let $\hat\pi_t(a\mid X_t,\cH_{t-1})\in[\hat\pi_{\min},1]$ be a $\sigma(\cH_{t-1},X_t)$-measurable propensity estimator with floor $\hat\pi_{\min}>0$.

\begin{definition}[DR error signal]\label{def:dr}
\begin{align}\label{eq:dr-signal}
    Z_t^{\mathrm{DR}}(a) := \hat\mu_t(a,X_t) + \frac{\1\{a_t = a\}}{\hat\pi_t(a\mid X_t,\cH_{t-1})}\bigl(\1\{Y_t \notin \cC_t(a)\} - \hat\mu_t(a,X_t)\bigr).
\end{align}
\end{definition}

DR-OCP replaces $Z_t(a)$ in~\eqref{eq:pwocp-update} with $Z_t^{\mathrm{DR}}(a)$. The construction reduces to PW-OCP when $\hat\mu_t\equiv 0$ and $\hat\pi_t\equiv\pi_t$. Theorem~\ref{thm:drocp} shows that the residual conditional bias is the product of the two nuisance errors $\hat\mu_t-\mu_t^*$ and $\hat\pi_t-\pi_t$. Algorithm~\ref{alg:drocp} in Appendix~\ref{app:drocp-algorithm} gives the implementation.

\section{Theoretical Analysis}\label{sec:theory}
This section establishes counterfactual coverage guarantees for PW-OCP and DR-OCP and matches them against an information-theoretic lower bound. All proofs are in Appendix~\ref{app:theory}.

\subsection{Coverage Guarantees}\label{sec:coverage}

\begin{theorem}[Counterfactual coverage of PW-OCP]\label{thm:pwocp}
Under Assumptions~\ref{ass:nuc}--\ref{ass:pos}, Algorithm~\ref{alg:pwocp} (recursion~\eqref{eq:pwocp-update}) satisfies, for every $a\in\cA$ and every sample path, the telescoping identity
\begin{align}\label{eq:pwocp-telescope}
    \frac{1}{T}\sum_{t=1}^T Z_t(a) - \alpha \;=\; \frac{\alpha_1(a) - \alpha_{T+1}(a)}{\gamma T}.
\end{align}
With probability at least $1-\delta$,
\begin{align}\label{eq:ccov-freedman}
    \max_{a\in\cA} |\CCov_T(a) - (1-\alpha)|
    \;\leq\; \max_{a\in\cA}\frac{|\alpha_1(a)-\alpha_{T+1}(a)|}{\gamma T}
    + C\!\left\{\!\sqrt{\frac{\log(K/\delta)}{T\pi_{\min}}}
    +\frac{\log(K/\delta)}{T\pi_{\min}}\!\right\},
\end{align}
for an absolute constant $C>0$, where $K=|\cA|$. With the step-size choice $\gamma^\star\asymp\sqrt{\pi_{\min}/T}$, Lemma~\ref{lem:alpha-bound} (Appendix~\ref{app:impossibility}) bounds the transient term in~\eqref{eq:ccov-freedman} by the same order as the second term, so in the non-vacuous regime $T\pi_{\min}\gtrsim\log(KT/\delta)$,
\begin{align}\label{eq:pwocp-rate}
    \max_{a\in\cA}|\CCov_T(a)-(1-\alpha)| \;=\; O\!\left(\sqrt{\frac{\log(KT/\delta)}{T\pi_{\min}}}\right).
\end{align}
\end{theorem}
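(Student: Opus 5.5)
The plan has three steps: a deterministic telescoping identity, a martingale concentration bound, and a bound on the transient term.

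\textbf{Step 1 (telescoping).} Summing the recursion \eqref{eq:pwocp-update} over $t=1,\dots,T$ gives $\alpha_{T+1}(a)-\alpha_1(a)=\gamma\sum_{t}(\alpha-Z_t(a))$. Dividing by $\gamma T$ yields \eqref{eq:pwocp-telescope} on every sample path. No probabilistic input is needed, exactly as in the proof of Theorem~\ref{thm:marginal}.

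\textbf{Step 2 (martingale decomposition).} Write
\[
1-\CCov_T(a)=\frac1T\sum_t \1\{Y_t(a)\notin\cC_t(a)\}.
\]
Decompose the counterfactual error as
\[
\CCov_T(a)-(1-\alpha)=\Bigl(\alpha-\tfrac1T\textstyle\sum_t Z_t(a)\Bigr)+\tfrac1T\textstyle\sum_t \bigl(Z_t(a)-\mu_t^*(a)\bigr)+\tfrac1T\textstyle\sum_t\bigl(\mu_t^*(a)-\1\{Y_t(a)\notin\cC_t(a)\}\bigr).
\]
Step 1 controls the first term. For the other two, use the filtration $\cF_t=\sigma(\cH_{t-1},X_t)$. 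Since $\cC_t(a)$ is $\cF_t$-measurable (the threshold uses only past scores and $\alpha_t(a)$), Lemma~\ref{lem:unbiased} gives $\E[Z_t(a)\mid\cF_t]=\mu_t^*(a)$. The indicator also has conditional mean $\mu_t^*(a)$ by definition \eqref{eq:mu-star}. Both remainder terms are therefore averages of martingale differences with respect to $\cG_t=\sigma(\cH_t)$.

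Two small points need care here. First, $Y_t(a)$ for $a\ne a_t$ is unobserved and may not lie in $\cH_t$. I will therefore enlarge the filtration to include all potential outcomes up to time $t$. Assumption~\ref{ass:nuc} ensures the conditional means are unchanged. Second, it is simpler to combine the two remainders into a single difference
\[
D_t=Z_t(a)-\1\{Y_t(a)\notin\cC_t(a)\}=\Bigl(\tfrac{\1\{a_t=a\}}{\pi_t}-1\Bigr)\1\{Y_t(a)\notin\cC_t(a)\}.
\]
This has conditional mean zero because $\E[\1\{a_t=a\}\mid \cF_t,\{Y_t(a')\}]=\pi_t$ by no unmeasured confounding.

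\textbf{Step 3 (concentration).} The increments satisfy $|D_t|\le 1/\pi_{\min}$ and $\E[D_t^2\mid\cdot]\le 1/\pi_t-1\le 1/\pi_{\min}$. Freedman's inequality then gives, with probability $1-\delta/K$,
\[
\Bigl|\tfrac1T\textstyle\sum_t D_t\Bigr|\le C\Bigl(\sqrt{\log(K/\delta)/(T\pi_{\min})}+\log(K/\delta)/(T\pi_{\min})\Bigr).
\]
A union bound over the $K$ actions and the triangle inequality give \eqref{eq:ccov-freedman}.

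\textbf{Step 4 (transient term, the main obstacle).} The bound \eqref{eq:pwocp-rate} requires $|\alpha_{T+1}(a)-\alpha_1(a)|/(\gamma T)\lesssim\sqrt{\log(KT/\delta)/(T\pi_{\min})}$, which amounts to showing $\alpha_t(a)$ stays bounded. This is invoked as Lemma~\ref{lem:alpha-bound}, but I would verify it by the standard ACI boundary argument adapted to the IPW signal:
\begin{itemize}
\item If $\alpha_t(a)>1$, then $\hat q_t(a)=-\infty$ and $Z_t(a)=\1\{a_t=a\}/\pi_t$. Each step then moves $\alpha_t(a)$ by at most $\gamma/\pi_{\min}$, and once $a$ is played, $\alpha_t(a)$ drops by $\gamma(1/\pi_t-\alpha)$.
\item If $\alpha_t(a)<0$, then $\hat q_t(a)=+\infty$, so $Z_t(a)=0$ and $\alpha_t(a)$ increases by $\gamma\alpha$.
\end{itemize}
The upper side is stochastic, since a long run without playing $a$ lets $\alpha_t(a)$ grow by $\gamma\alpha$ per round. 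By positivity, such a run has length at most $\log(KT/\delta)/\pi_{\min}$ with high probability, after a union bound over all times and actions. This gives
\[
\sup_t|\alpha_t(a)|\lesssim 1+\gamma\bigl(1/\pi_{\min}+\log(KT/\delta)/\pi_{\min}\bigr).
\]
With $\gamma^\star\asymp\sqrt{\pi_{\min}/T}$, the transient term is $O\bigl(1/(\gamma T)+\log(KT/\delta)/(T\pi_{\min})\bigr)=O\bigl(1/\sqrt{T\pi_{\min}}+\log(KT/\delta)/(T\pi_{\min})\bigr)$. In the regime $T\pi_{\min}\gtrsim\log(KT/\delta)$ this is dominated by the square-root term, which yields \eqref{eq:pwocp-rate}.
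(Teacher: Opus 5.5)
Your Steps 1--3 are correct and follow the paper's route: exact telescoping, then martingale concentration at the $1/\pi_{\min}$ variance scale, then a union bound over actions. There is one minor variation. The paper bounds $Z_t(a)-\mu_t^*(a)$ by Freedman and $\1\{Y_t(a)\notin\cC_t(a)\}-\mu_t^*(a)$ separately by Azuma--Hoeffding, and absorbs the extra $O(\sqrt{\log(K/\delta)/T})$ term. You instead merge them into the single residual $D_t=(\1\{a_t=a\}/\pi_t-1)\1\{Y_t(a)\notin\cC_t(a)\}$. This is slightly cleaner: it needs one Freedman application, never uses $\mu_t^*(a)$, and the only randomness it relies on is the logger's. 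Strictly, zero conditional mean in your enlarged filtration also requires $a_t$ to be independent of past \emph{unobserved} potential outcomes. That comes from the protocol ($a_t$ drawn from $\pi_t(\cdot\mid X_t,\cH_{t-1})$ with fresh randomization), not from Assumption~\ref{ass:nuc} as literally stated.

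The gap is in Step 4, your verification of Lemma~\ref{lem:alpha-bound}. Bounding the length of each run without a play of $a$ does not bound an excursion above $1$, because a play need not undo the preceding run. A play lowers $\alpha_t(a)$ by $\gamma(1/\pi_t(a)-\alpha)$. When $\pi_t(a)$ is large at that round, this is only about $\gamma(1-\alpha)$, which offsets roughly $(1-\alpha)/\alpha$ rounds of upward drift. Even with constant propensity $\pi$, a play-to-play cycle gains net height whenever its run exceeds $(1/\pi-\alpha)/\alpha$ rounds, which happens with probability about $e^{-1/\alpha}$ per cycle. The iterate carries such gains forward, so excursion height is governed by the cumulative sum over cycles, not by the longest run.

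What actually bounds the excursion is the drift. While $\alpha_t(a)>1$ the set is empty, so $\E[Z_t(a)\mid X_t,\cH_{t-1}]=1$ and each increment has conditional mean $-\gamma(1-\alpha)$. The paper uses this by controlling the martingale part uniformly in time (Freedman) along each excursion.

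Your sharper envelope $1+\gamma/\pi_{\min}+O(\gamma\log(KT/\delta)/\pi_{\min})$, with an $\alpha$-dependent constant, is in fact true, but it needs an exponential-supermartingale argument. Take $\theta=2(1-\alpha)\pi_{\min}/\gamma$; one checks $\E[e^{\theta(\alpha_{t+1}(a)-\alpha_t(a))}\mid X_t,\cH_{t-1}]\le 1$ whenever $\alpha_t(a)>1$. Ville's inequality with a union bound over entrance times and actions then gives the claim. Since the statement itself delegates the transient term to Lemma~\ref{lem:alpha-bound}, citing that lemma also closes the proof.
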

The IPW update restores the counterfactual error signal. The price of observing each action with probability at least $\pi_{\min}$ enters as $(T\pi_{\min})^{-1/2}$. The telescoping identity~\eqref{eq:pwocp-telescope} is exact and requires no clipping or projection of the recursion. The proof is in Appendix~\ref{app:thm-pwocp-proof}.

\begin{theorem}[Counterfactual coverage of DR-OCP]\label{thm:drocp}
Run Algorithm~\ref{alg:drocp} (DR signal~\eqref{eq:dr-signal}) with $\sigma(\cH_{t-1},X_t)$-measurable nuisance estimates $\hat\mu_t(a,X_t)\in[0,1]$ and $\hat\pi_t(a\mid X_t,\cH_{t-1})\in[\hat\pi_{\min},1]$, $\hat\pi_{\min}>0$, and suppose the propensity ratio is uniformly bounded:
\begin{align}\label{eq:kappa}
\kappa := \sup_{t,a}\frac{\pi_t(a\mid X_t,\cH_{t-1})}{\hat\pi_t(a\mid X_t,\cH_{t-1})} < \infty.
\end{align}
Define the realized nuisance RMSEs against $\mu_t^*(a)$ in~\eqref{eq:mu-star} and $\pi_t(a\mid X_t,\cH_{t-1})$,
\begin{align}\label{eq:nuisance-rmse}
    \epsilon_{\mu,a}\!:=\!\sqrt{\frac{1}{T}\sum_{t=1}^T\bigl(\hat\mu_t(a,X_t)\!-\!\mu_t^*(a)\bigr)^2}, \;
    \epsilon_{\pi,a}\!:=\!\sqrt{\frac{1}{T}\sum_{t=1}^T\bigl(\hat\pi_t(a| X_t,\!\cH_{t-1})\!-\!\pi_t(a| X_t,\!\cH_{t-1})\bigr)^2}.
\end{align}
Then, under Assumptions~\ref{ass:nuc}--\ref{ass:pos} and with probability at least $1-\delta$,
\begin{align}\label{eq:dr-bound}
    \max_{a\in\cA}|\CCov_T(a)-(1-\alpha)|
    &\leq \max_{a\in\cA}\frac{|\alpha_1(a)-\alpha_{T+1}(a)|}{\gamma T}
    + \max_{a\in\cA}\frac{\epsilon_{\mu,a}\,\epsilon_{\pi,a}}{\hat\pi_{\min}} \notag\\
    &\quad + C\!\left\{\sqrt{\frac{\kappa\log(K/\delta)}{T\hat\pi_{\min}}}+\frac{\log(K/\delta)}{T\hat\pi_{\min}}\right\},
\end{align}
for an absolute constant $C>0$. Projecting or clipping the recursion to a fixed interval reduces the transient term to $O(1/(\gamma T))$.
\end{theorem}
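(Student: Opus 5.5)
The plan mirrors the proof of Theorem~\ref{thm:pwocp}, with $Z_t(a)$ replaced by $Z_t^{\mathrm{DR}}(a)$, and splits the coverage error into three parts: a deterministic telescoping term, a conditional-bias term, and a martingale term.

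\textbf{Step 1: telescoping.} The DR-OCP recursion $\alpha_{t+1}(a)=\alpha_t(a)+\gamma(\alpha-Z_t^{\mathrm{DR}}(a))$ telescopes pathwise to
\[
\frac{1}{T}\sum_{t=1}^T Z_t^{\mathrm{DR}}(a)-\alpha=\frac{\alpha_1(a)-\alpha_{T+1}(a)}{\gamma T}.
\]
Since $1-\CCov_T(a)=\frac1T\sum_t \1\{Y_t(a)\notin\cC_t(a)\}$, we decompose
\[
(1-\CCov_T(a))-\alpha=-\frac{\alpha_1(a)-\alpha_{T+1}(a)}{\gamma T}\cdot(-1)+\frac1T\sum_t\bigl(\mu_t^*(a)-\E_{t}[Z_t^{\mathrm{DR}}(a)]\bigr)+M_T(a)+M'_T(a).
\]
Here $\E_t$ denotes conditioning on $(X_t,\cH_{t-1})$. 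The two martingale averages are
\[
M_T(a)=\tfrac1T\sum_t\bigl(\E_t Z^{\mathrm{DR}}_t-Z^{\mathrm{DR}}_t\bigr),\qquad M'_T(a)=\tfrac1T\sum_t\bigl(\1\{Y_t(a)\notin\cC_t(a)\}-\mu_t^*(a)\bigr).
\]

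\textbf{Step 2: conditional bias.} Because $\cC_t(a)$ is $\sigma(\cH_{t-1},X_t)$-measurable, we can apply Assumption~\ref{ass:nuc} exactly as in Lemma~\ref{lem:unbiased}. This gives
\[
\E_t[\1\{a_t=a\}\1\{Y_t\notin\cC_t(a)\}]=\pi_t\,\mu_t^*,\qquad \E_t[\1\{a_t=a\}]=\pi_t.
\]
It follows that
\[
\E_t Z_t^{\mathrm{DR}}-\mu_t^*=(\hat\mu_t-\mu_t^*)\Bigl(1-\frac{\pi_t}{\hat\pi_t}\Bigr)=\frac{(\hat\mu_t-\mu_t^*)(\hat\pi_t-\pi_t)}{\hat\pi_t}.
\]
We bound $1/\hat\pi_t\le 1/\hat\pi_{\min}$ and apply Cauchy--Schwarz over $t$. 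The averaged bias is then at most $\epsilon_{\mu,a}\epsilon_{\pi,a}/\hat\pi_{\min}$, which is the product term in~\eqref{eq:dr-bound}.

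\textbf{Step 3: concentration.} This is the main obstacle. We apply Freedman's inequality to each martingale and take a union bound over the $K$ actions and the two martingales, which gives the $\log(K/\delta)$ factor.

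For $M'_T$ the increments lie in $[-1,1]$, so its contribution is $O(\sqrt{\log(K/\delta)/T})$, which is absorbed into the stated term.

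For $M_T$ the increments are bounded by $O(1/\hat\pi_{\min})$, since $\hat\mu_t\in[0,1]$ and the weight is at most $1/\hat\pi_{\min}$. The conditional variance is
\[
\le \E_t\bigl[\1\{a_t=a\}/\hat\pi_t^2\bigr]\cdot O(1)=O\bigl(\pi_t/\hat\pi_t^2\bigr)\le O\bigl(\kappa/\hat\pi_{\min}\bigr),
\]
using~\eqref{eq:kappa}. The predictable quadratic variation is therefore at most $T\kappa/\hat\pi_{\min}$. Because this bound is deterministic, Freedman applies without peeling. It yields
\[
|M_T(a)|\le C\Bigl\{\sqrt{\kappa\log(K/\delta)/(T\hat\pi_{\min})}+\log(K/\delta)/(T\hat\pi_{\min})\Bigr\}.
\]

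The delicate points are two. First, the increment bound must be kept at range $1/\hat\pi_{\min}$ and not mixed with $\kappa$. Second, the conditional variance must use $\E_t\1\{a_t=a\}=\pi_t$ to gain the factor $\pi_t/\hat\pi_t\le\kappa$, rather than the crude $1/\hat\pi_{\min}^2$.

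\textbf{Step 4: combine.} Summing the bounds from Steps 1--3 and taking the maximum over $a$ gives~\eqref{eq:dr-bound}. For the final remark: if the recursion is projected onto a fixed interval $[-b,1+b]$, the telescoping identity becomes an inequality with projection slack. Via the standard ACI argument, $|\alpha_1-\alpha_{T+1}|$ is then bounded by the interval length plus $\gamma\cdot O(1/\hat\pi_{\min})$, which yields the transient term $O(1/(\gamma T))$.
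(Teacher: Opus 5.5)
Your proposal matches the paper's proof essentially step for step: exact telescoping of the DR recursion, the product-form conditional bias $(\hat\mu_t-\mu_t^*)(1-\pi_t/\hat\pi_t)$ averaged via Cauchy--Schwarz into $\epsilon_{\mu,a}\epsilon_{\pi,a}/\hat\pi_{\min}$, Freedman's inequality with increment range $O(1/\hat\pi_{\min})$ and conditional variance $\pi_t/\hat\pi_t^2\le\kappa/\hat\pi_{\min}$, Azuma--Hoeffding for $\1\{Y_t(a)\notin\cC_t(a)\}-\mu_t^*(a)$, and a union bound over actions. One caveat, which the paper's one-line projection remark shares: once the iterates are clipped, the identity picks up an accumulated term $-\sum_t S_t/(\gamma T)$, where $S_t$ is the amount the projection removes at step $t$ and can have either sign, so bounding $|\alpha_1(a)-\alpha_{T+1}(a)|$ by the interval length alone does not recover~\eqref{eq:dr-bound}; you would also need to show that clipping is rare or its total slack is small, e.g.\ via an envelope argument in the spirit of Lemma~\ref{lem:alpha-bound}.
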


The product term $\epsilon_{\mu,a}\epsilon_{\pi,a}/\hat\pi_{\min}$ is the standard doubly robust bias and vanishes when either nuisance is consistent. Setting $\hat\pi_t\equiv\pi_t$ recovers PW-OCP with $\hat\mu_t$ acting as a variance-reduction control variate. Setting $\hat\mu_t\equiv\mu_t^*$ removes the first-order bias from propensity estimation. The proof is in Appendix~\ref{app:thm-drocp-proof}.

% The final term is the usual doubly robust product error. If the propensity is correct, DR-OCP reduces to PW-OCP plus a control variate. If the outcome model is correct, the first-order bias from propensity estimation disappears. This is the practical role of the augmentation in Definition~\ref{def:dr}.

\begin{remark}[Scope of the DR guarantee]\label{rem:dr-scope}
Theorem~\ref{thm:drocp} is conditional on the realized nuisance errors $\epsilon_{\mu,a},\epsilon_{\pi,a}$, following the standard semiparametric convention \citep[e.g.,][]{lei2021conformal,barber2022conformal}. The online nuisance protocol used in the experiments is in Appendix~\ref{app:nuisance-protocol}. We do not claim a general cross-fitting rate for arbitrary adaptive logging. Establishing such rates under endogenous online data is a separate problem.
\end{remark}

\subsection{Information-Theoretic Lower Bound}\label{sec:minimax}
The leading dependence in Theorem~\ref{thm:pwocp} is unimprovable in $T$, $K$, and $\pi_{\min}$ within the predictable score-threshold class considered in this paper.

\begin{theorem}[Minimax lower bound for threshold CP]\label{thm:lower}
Fix $\alpha\in(0,1)$. For all $K\geq 4$, $T\geq 1$, and $\pi_{\min}\in(0,1/K]$, there exists $c(\alpha)>0$ such that
\begin{align}\label{eq:lower-bound}
\inf_{\cA_{\mathrm{thr}}}\;\sup_{P,\pi}\;
\E\!\left[\max_{a\in\cA} |\CCov_T(a)-(1-\alpha)|\right]
\;\geq\;
c(\alpha)\,\min\!\left\{1,\;\sqrt{\frac{\log K}{T\pi_{\min}}}\right\},
\end{align}
where the infimum is over predictable threshold algorithms producing nested score sets $\cC_t(a)=\{y:s(y;a)\leq\hat q_t(a)\}$ and the supremum is over context-free Gaussian potential-outcome instances with fixed logging policies satisfying Assumptions~\ref{ass:nuc}--\ref{ass:pos}.
\end{theorem}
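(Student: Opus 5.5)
We prove the bound by reduction to multiple hypothesis testing (Fano's method). We build $K$ context-free instances that are pairwise hard to tell apart from logged data. On them, the threshold for the rarely observed action must be off by an amount of order $\sqrt{\log K/(T\pi_{\min})}$ in quantile space. Through the Gaussian density, this offset becomes a counterfactual coverage error of the same order.

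\textbf{Construction.} Take a fixed, context-free logging policy $\pi$ with $\pi(a)=\pi_{\min}$ for $a\neq a_0$ and the remaining mass on a reference action $a_0$; this satisfies Assumption~\ref{ass:pos} because $\pi_{\min}\le 1/K$. Let the potential outcomes be independent Gaussians, $Y_t(a)\sim\cN(m_a,1)$, with score $s(y;a)=|y|$ (or the signed score, whose thresholds are nested as well). For $j\in\{1,\dots,K-1\}$, instance $P_j$ sets $m_j=\eta$ and $m_a=0$ for all $a\neq j$, with
\[
\eta = c_0\min\{1,\sqrt{\log K/(T\pi_{\min})}\}.
\]
Assumption~\ref{ass:nuc} holds trivially because the logger ignores outcomes. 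Each round reveals only $(a_t,Y_t)$. By the chain rule for adaptively collected data, with a fixed logger,
\[
\mathrm{KL}(P_j^{T}\|P_k^{T})=\sum_a \E[N_T(a)]\,\mathrm{KL}_a = 2T\pi_{\min}\cdot\eta^2/2 = T\pi_{\min}\eta^2 \lesssim c_0^2\log K.
\]
For small $c_0$, Fano's inequality then gives that any $\cH_T$-measurable estimator $\hat J$ of $j$ errs with probability at least $1/2$.

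\textbf{From coverage to testing.} For each action $a$, define the oracle coverage map $F_a^{(P)}(q)=\Prob_P(s(Y(a);a)\le q)$. Under $P_j$ we have $\CCov_T(a)=\frac1T\sum_t \1\{Y_t(a)\in\cC_t(a)\}$. Since $\hat q_t(a)$ is predictable and $Y_t(a)$ is independent of the past, the martingale difference between $\CCov_T(a)$ and $\bar F_a:=\frac1T\sum_t F_a^{(P_j)}(\hat q_t(a))$ has expectation zero. I therefore lower bound $\E\max_a|\bar F_a-(1-\alpha)|$ and control the martingale part only where needed, noting that $\E|\cdot|$ of a nonnegative-target quantity dominates the signed version. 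A shift of the mean by $\eta$ changes the $(1-\alpha)$-level coverage at the correct threshold by at least $c_1(\alpha)\eta$, because the Gaussian density is bounded below near the quantile. Hence for $j\neq k$, no single threshold value can be within $c_1\eta/2$ of target coverage under both $P_j$ and $P_k$ on actions $j$ and $k$. Define the test $\hat J=\arg\min_j \max_{a}|\bar F_a^{(P_j)}-(1-\alpha)|$, computed from the averaged thresholds, which are history-measurable. If the algorithm achieved $\E_{P_j}\max_a|\bar F_a-(1-\alpha)|<c_1\eta/8$ for all $j$, then Markov's inequality plus the separation would make $\hat J$ correct with probability at least $1/2+\Omega(1)$. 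This contradicts Fano, so some $P_j$ incurs error at least $c(\alpha)\eta$. For $\eta=c_0$ (the regime where the minimum equals $1$), the same argument gives a constant lower bound.

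\textbf{Main obstacle.} The delicate step is the separation argument. Coverage for action $a$ depends on the whole threshold path, and the averaged coverage $\bar F_a$ is nonlinear in the thresholds. I would handle this by monotonicity: $F_a^{(P_j)}(q)-F_a^{(P_0)}(q)$ has a fixed sign and magnitude $\gtrsim\eta$ uniformly over $q$ in a compact window around the quantile. Thresholds outside that window already cause coverage error of constant order, so they pay the bound directly. It is this separation that forces the factor $\sqrt{\log K}$ in place of a constant from a two-point Le Cam argument.
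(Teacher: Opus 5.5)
\textbf{The coverage-to-testing step fails, and the inequality is false as stated.} Your architecture is the paper's: Fano over $K-1$ single-rare-arm perturbations, $\mathrm{KL}\lesssim T\pi_{\min}\Delta^2$, then a coverage-based test. The problem is the step that turns small coverage error into a reliable test. Your ``main obstacle'' paragraph asserts exactly the false claim: thresholds outside the window do \emph{not} pay the bound directly. $\CCov_T(a)$ is a time average, so per-round errors of opposite sign cancel. Full rounds ($\hat q_t(a)=+\infty$, coverage $1$) and empty rounds ($\hat q_t(a)<0$, coverage $0$ since $s\ge 0$) have identical coverage under every hypothesis.

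Consider the schedule that plays the full set on $n$ rounds, with $n$ the integer nearest $(1-\alpha)T$, and the empty set on the rest. It is a deterministic predictable threshold algorithm with nested score sets. It gives $|\CCov_T(a)-(1-\alpha)|\le 1/(2T)$ for every arm, every instance and every logger. On your bounded-variance instances, a large finite threshold in place of $+\infty$ adds only $O(1/T)$. Once $T\ge\log K$ we have $\min\{1,\sqrt{\log K/(T\pi_{\min})}\}\ge\sqrt{\log K/T}$, so the target inequality fails for all large $T$. The statement is false for the class as written, and no argument can close this step.

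The paper's own proof has the same hole. Its ``standard quantile-identification fact'' (calibration forces $M_v$ above the other $M_a$ by order $\Delta$) fails for mixtures of degenerate thresholds. The reduction does go through, for your version or the paper's, under either of two changes:
\begin{itemize}
\item restrict $\cA_{\mathrm{thr}}$ to thresholds in a fixed compact window $W\subset(0,\infty)$ around the $(1-\alpha)$-quantile; or
\item replace the criterion by a non-averaged one, such as the threshold error $\eta_t$.
\end{itemize}
Then the null and perturbed coverage maps differ by at least $c\Delta$ at every $q\in W$, so the averaged coverages separate and your argmin (or the paper's argmax) test succeeds.

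\textbf{Two further problems specific to your version.}

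First, with $s(y)=|y|$ a mean shift is invisible at first order. The coverage $\Prob(|\cN(\eta,1)|\le q)=\Phi(q-\eta)-\Phi(-q-\eta)$ is even in $\eta$. So the coverage loss at a fixed $q$ is $q\phi(q)\eta^2+O(\eta^4)$, not at least $c_1(\alpha)\eta$. A density lower bound does not help, because mass enters on one side of $\pm q$ as it leaves on the other. With this score your construction yields only a $\log K/(T\pi_{\min})$ rate. There are two fixes:
\begin{itemize}
\item use a one-sided score made nonnegative by a monotone map, e.g.\ $s(y;a)=e^{y}$, where the coverage change is approximately $\phi(q)\eta$; or
\item use a variance shift as the paper does. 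With $F$ the CDF of $|\cN(0,1)|$ and $f=F'$, the derivative of $F(q/(1+\Delta))$ at $\Delta=0$ is $-qf(q)\neq 0$.
\end{itemize}

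Second, passing from realized to conditional coverage is not the free step you suggest. The inequality $\E|X+D|\ge\E|X|$ needs $\E[D\mid X]=0$. That fails here because later thresholds depend on the logged $Y_t(a)$. Only the unlogged rounds, whose outcomes are never observed, can be averaged out. The logged indicators should therefore stay in the test statistic.
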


The bound captures the $T\pi_{\min}$ effective observations available for the rarest action: above the testing threshold $T\pi_{\min}\gtrsim\log K$ this gives the rate $\sqrt{\log K/(T\pi_{\min})}$, and below it the bound saturates at a constant. The proof is in Appendix~\ref{app:thm-lower-proof}.

\begin{corollary}[Near-optimality of PW-OCP]\label{cor:minimax}
Under Assumptions~\ref{ass:nuc}--\ref{ass:pos} with the step-size choice $\gamma^\star\asymp\sqrt{\pi_{\min}/T}$, the PW-OCP rate~\eqref{eq:pwocp-rate} matches the lower bound~\eqref{eq:lower-bound} in the non-vacuous regime $T\pi_{\min}\gtrsim\log(KT/\delta)$, up to logarithmic factors and the gap between high-probability and in-expectation risk. PW-OCP is therefore minimax-optimal within the predictable score-threshold class.
\end{corollary}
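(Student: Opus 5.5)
Corollary~\ref{cor:minimax} follows by putting the upper bound of Theorem~\ref{thm:pwocp} next to the lower bound of Theorem~\ref{thm:lower}. The work is to make sure both are stated for the same risk, class, and regime.

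\textbf{Step 1: upper bound.} Take $\gamma=\gamma^\star\asymp\sqrt{\pi_{\min}/T}$ and work in the regime $T\pi_{\min}\gtrsim\log(KT/\delta)$. In \eqref{eq:ccov-freedman}, Lemma~\ref{lem:alpha-bound} bounds the transient term $\max_a|\alpha_1(a)-\alpha_{T+1}(a)|/(\gamma T)$ by the order of the square-root term. In this regime the linear term $\log(K/\delta)/(T\pi_{\min})$ is dominated by its square root. This gives \eqref{eq:pwocp-rate}: with probability $1-\delta$, $\max_a|\CCov_T(a)-(1-\alpha)|=O\bigl(\sqrt{\log(KT/\delta)/(T\pi_{\min})}\bigr)$.

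\textbf{Step 2: in-expectation risk.} The lower bound \eqref{eq:lower-bound} concerns $\E[\max_a|\CCov_T(a)-(1-\alpha)|]$. The loss is bounded, since $\CCov_T(a)\in[0,1]$ and so the loss is at most $\max\{\alpha,1-\alpha\}\le 1$. Choose $\delta=1/T$ and split the expectation on the good event. This gives
\[
\E\Bigl[\max_a|\CCov_T(a)-(1-\alpha)|\Bigr]\lesssim\sqrt{\log(KT)/(T\pi_{\min})}+1/T,
\]
and the $1/T$ term is lower order whenever $\pi_{\min}\le 1$. This conversion is the ``gap between high-probability and in-expectation risk'' mentioned in the statement; it costs only the $\log T$ factor.

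\textbf{Step 3: PW-OCP is in the class.} PW-OCP must belong to the class over which the infimum in Theorem~\ref{thm:lower} is taken. Its thresholds $\hat q_t(a)=\hat Q^{(a)}_t(1-\alpha_t(a))$ depend only on $\cH_{t-1}$ and the index $\alpha_t(a)$, which is itself $\cH_{t-1}$-measurable. Hence they are predictable, and the sets $\cC_t(a)$ are nested score sets. The hard instances in Theorem~\ref{thm:lower} are context-free Gaussian instances with fixed logging satisfying Assumptions~\ref{ass:nuc}--\ref{ass:pos}, so Theorem~\ref{thm:pwocp} applies to them. Therefore the worst-case risk of PW-OCP is at least the minimax value $c(\alpha)\sqrt{\log K/(T\pi_{\min})}$ (taking the nonsaturated branch, since $T\pi_{\min}\gtrsim\log K$). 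It is also at most the bound from Step~2.

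\textbf{Step 4: comparing the rates.} The ratio of upper to lower bound is $O\bigl(\sqrt{\log(KT)/\log K}\bigr)$, which is at most $O(\sqrt{\log T})$. The two rates therefore agree in $T$, $K$, and $\pi_{\min}$ up to logarithmic factors, which is the claimed near-optimality.

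\textbf{Main obstacle.} I expect the delicate step to be the transient term in Step~1. Theorem~\ref{thm:pwocp} does not project $\alpha_t(a)$, so bounding $|\alpha_{T+1}(a)|$ depends on Lemma~\ref{lem:alpha-bound} holding with $\gamma^\star$ uniformly over instances. If that lemma holds only with high probability, its failure event must be merged into the $\delta$ budget before taking expectations in Step~2.
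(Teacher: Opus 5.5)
Your route is the one the paper intends: it gives no separate argument, and the corollary is just \eqref{eq:pwocp-rate} set against \eqref{eq:lower-bound}. Steps~1, 2 and~4 are fine. The $\delta=1/T$ conversion even closes the high-probability/in-expectation gap that the statement leaves open, and the transient term you single out is already controlled by Lemma~\ref{lem:alpha-bound} on an event you can fold into $\delta$.

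The step that fails is the one you take for granted in Step~3: that the minimax value over predictable threshold algorithms is at least $c(\alpha)\sqrt{\log K/(T\pi_{\min})}$. Your own membership check exposes the problem. PW-OCP outputs $\hat q_t(a)=\pm\infty$ whenever $\alpha_t(a)\notin(0,1)$, so the class must admit such thresholds. Now let $n$ be the integer closest to $(1-\alpha)T$, and take the deterministic schedule $\hat q_t(a)=+\infty$ for $t\leq n$ and $\hat q_t(a)=-\infty$ for $t>n$. This is also a predictable threshold algorithm, and it yields $\CCov_T(a)=n/T$ for every arm, every instance and every sample path. Its risk is therefore at most $1/(2T)$, and exactly $0$ when $(1-\alpha)T$ is an integer. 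So the infimum in \eqref{eq:lower-bound} is not bounded below as claimed.

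PW-OCP, by contrast, cannot reach $O(1/T)$. Its sets are non-degenerate on most rounds, and the never-observed indicators $\1\{Y_t(a)\notin\cC_t(a)\}$ with $a_t\neq a$ alone fluctuate at order $T^{-1/2}$. Hence the last sentence of the corollary is false as written. Inside the proof of Theorem~\ref{thm:lower}, the break is the ``quantile-identification'' testing step. With $F$ the CDF of $|\cN(0,1)|$, that step asserts that small coverage error forces the null-scale average $T^{-1}\sum_t F(\hat q_t(a))$ of the perturbed arm to exceed those of the other arms by order $\Delta$. This holds only when the thresholds lie where the score density is bounded below, because $F(q)-F(q/(1+\Delta))$ vanishes for $q\leq 0$ and as $q\to\infty$.

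To obtain a true statement you need a smaller comparison class, or a risk that also penalizes threshold error. One example is predictable thresholds confined to a region where the score density is bounded below, which is precisely what the Fano testing step uses. You would then have to check that some version of PW-OCP lies in that class and still attains \eqref{eq:pwocp-rate}. Without such a restriction, your argument, like the paper's, shows only that PW-OCP's upper bound has the same form as the Fano rate for quantile-tracking rules. It does not show minimax optimality over the whole predictable score-threshold class.
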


\textbf{Sharpness and decisions.} Coverage alone is not a useful endpoint if the learner attains it by returning uninformative sets. Appendix~\ref{app:additional-theory} makes this distinction formal. Under stationary logging and local score-density regularity, Theorem~\ref{thm:cdq} shows that projected PW-OCP also controls the average threshold error relative to the oracle counterfactual quantile at rate $(T\pi_{\min})^{-1/4}$ up to logarithmic factors. Corollary~\ref{cor:sharpness-comparison} shows that logged-quantile calibrators retain an $\Omega(\rho)$ threshold error on the impossibility instance, while Theorem~\ref{thm:coverage_regret_bridge} decomposes robust pseudo-regret into sharpness, logged coverage, and exploration terms. Thus the theory rules out the trivial ``cover by widening'' solution and explains why counterfactual calibration can improve downstream decisions.

\section{Experiments}\label{sec:experiments}

We evaluate whether counterfactual calibration improves both coverage and decisions under selection-biased logging. All entries are means $\pm 95\%$ confidence intervals over $10$ seeds. PW-OCP and DR-OCP use the rate-optimal step size $\gamma^\star\asymp\sqrt{\pi_{\min}/T}$ from Theorem~\ref{thm:pwocp}. Full setup details and pseudocode are in Appendices~\ref{app:experiments} and~\ref{app:algorithms}.

\textbf{Methods and benchmarks.} Baselines are ACI \citep{gibbs2021adaptive}, FACI \citep{gibbs2022conformal}, AgACI \citep{zaffran2022adaptive}, Conformal PID \citep{angelopoulos2023conformal}, SAOCP \citep{bhatnagar2023improved}, NEx-CP \citep{barber2022conformal}, ECI \citep{wu2025error}, and Online COPP \citep{taufiq2022conformal,zhang2023conformal} (Algorithm~\ref{alg:online-copp}). Our methods are PW-OCP and DR-OCP; Oracle calibrates directly on counterfactual outcomes and is shown only as an unattainable reference. Synthetic benchmarks cover mean shift, variance shift, multi-arm scaling, propensity misspecification, convergence rate, and prediction-set quality. Real-data benchmarks use ZOZOTOWN Open Bandit Women/Men \citep{saito2020large} and DJIA rebalancing with momentum-biased logging.

\begin{figure}[!t]
    \hspace{1.5em}%
    \begin{subfigure}{0.32\textwidth}\centering\small{Mean shift}\end{subfigure}\hfill
    \begin{subfigure}{0.32\textwidth}\centering\small{Variance shift}\end{subfigure}\hfill
    \begin{subfigure}{0.32\textwidth}\centering\small{Multi-arm scaling}\end{subfigure}
    \\[2pt]
    \rowlabel{\qquad\quad CCov Gap}%
    \begin{subfigure}[b]{0.32\textwidth}\centering
        \includegraphics[width=\textwidth]{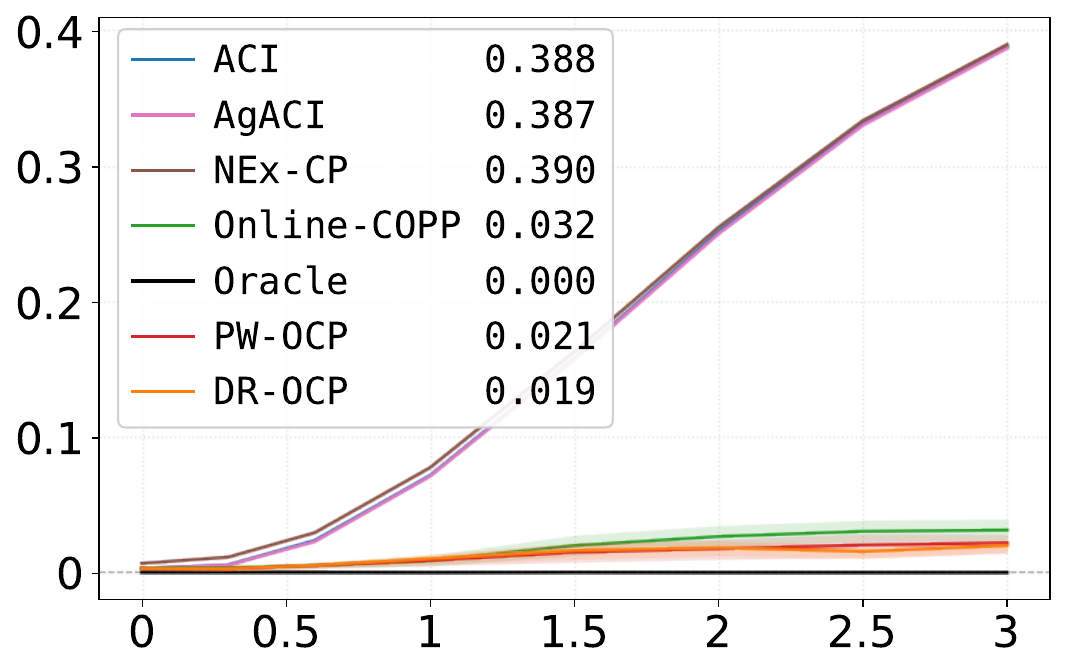}
    \end{subfigure}\hfill
    \begin{subfigure}[b]{0.32\textwidth}\centering
        \includegraphics[width=\textwidth]{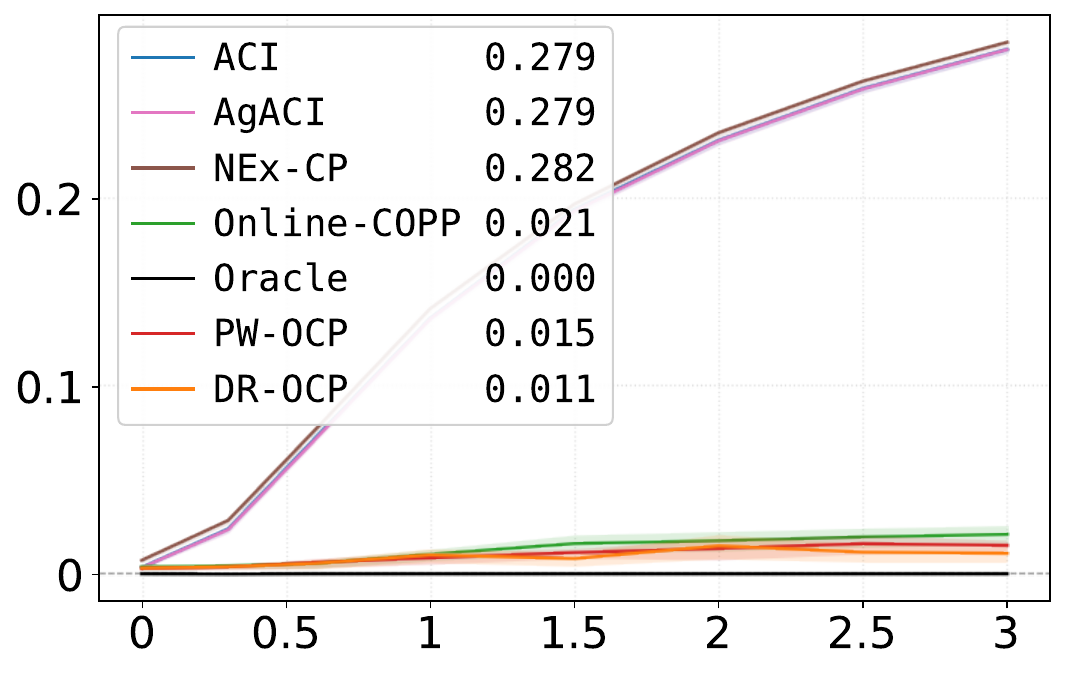}

    \end{subfigure}\hfill
    \begin{subfigure}[b]{0.32\textwidth}\centering
        \includegraphics[width=\textwidth]{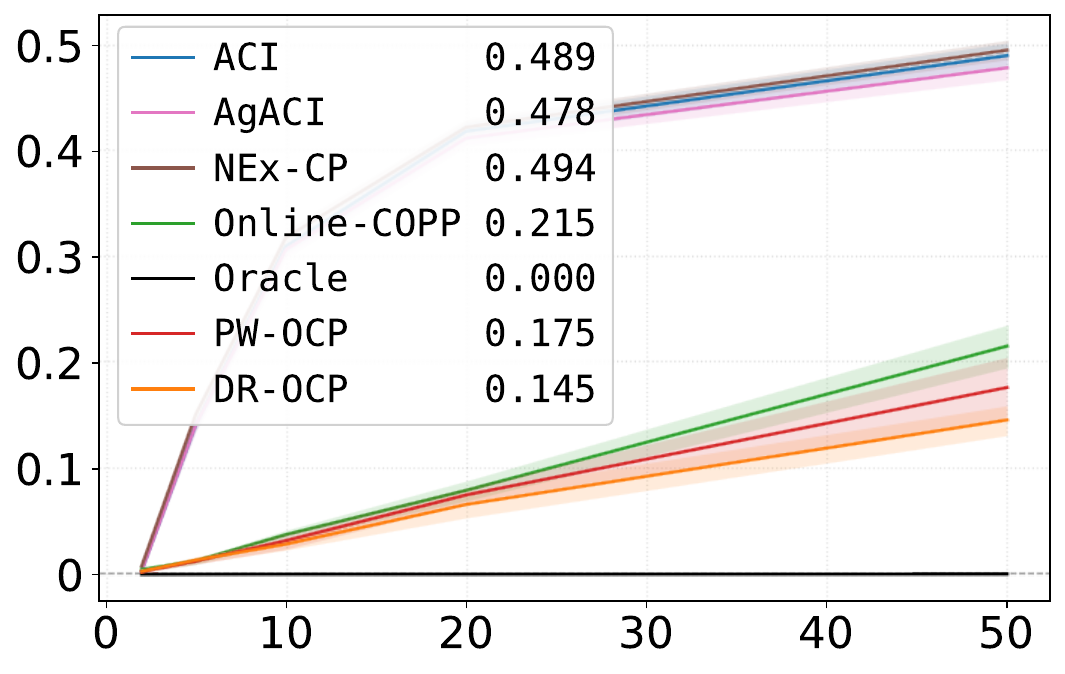}
    \end{subfigure}
    \vskip -0.05in
    \begin{subfigure}{0.32\textwidth}
        \centering
        \qquad \small{\qquad\quad Endogeneity strength}
    \end{subfigure}
    \begin{subfigure}{0.32\textwidth}
        \centering
        \setlength{\parindent}{0.5em}
        \qquad \small{\qquad Endogeneity strength}
    \end{subfigure}
    \begin{subfigure}{0.32\textwidth}
        \centering
        \setlength{\parindent}{1em}
        \quad \small{\quad Number of arms $K$}
    \end{subfigure}
    \vskip -0.05in
    \caption{\textbf{Counterfactual coverage gap across three synthetic stress tests.} Mean-shift and variance-shift simulations sweep endogeneity at $K=2$: in the first panel $\Delta$ is a mean shift, and in the second it is a standard-deviation increment. The multi-arm panel sweeps $K\in\{2,5,10,20,50\}$. Exogenous baselines retain persistent gaps, whereas PW-OCP and DR-OCP track Oracle at the rate predicted by Theorem~\ref{thm:pwocp}. Bands are $95\%$ confidence intervals over $10$ seeds.}
\label{fig:synthetic-coverage}
\end{figure}

\begin{table}[!t]
\centering
\caption{\textbf{Master results across all benchmarks.} Columns report counterfactual coverage gaps (lower is better) and regret reduction on the robust decision rule of Theorem~\ref{thm:coverage_regret_bridge}, normalized so ACI is $0\%$ and Oracle is $100\%$ (higher is better). Entries are means $\pm95\%$ confidence intervals over $10$ seeds. \textcolor{red}{Red} and \textcolor{blue}{blue} mark the best and second-best non-oracle methods.}
\label{tab:master}
\vspace{0.3em}
\setlength{\tabcolsep}{2.0pt}
\renewcommand{\arraystretch}{1.10}
\scriptsize
\resizebox{\textwidth}{!}{
\begin{tabular}{l|ccccccc|c}
\hline \hline
\textbf{Method} & \textbf{Mean shift} & \textbf{Variance shift} & \textbf{Multi-arm} & \textbf{Propensity error} & \textbf{OBD Women} & \textbf{OBD Men} & \textbf{DJIA} & \textbf{Regret vs ACI}\\
\hline \hline
ACI \citep{gibbs2021adaptive} & 0.252 {\scriptsize$\pm$ 0.002} & 0.230 {\scriptsize$\pm$ 0.002} & 0.418 {\scriptsize$\pm$ 0.009} & 0.252 {\scriptsize$\pm$ 0.004} & 0.023 {\scriptsize$\pm$ 0.005} & \textcolor{blue}{0.025 {\scriptsize$\pm$ 0.005}} & 0.024 {\scriptsize$\pm$ 0.003} & --- \\
FACI \citep{gibbs2022conformal} & 0.275 {\scriptsize$\pm$ 0.001} & 0.250 {\scriptsize$\pm$ 0.002} & 0.503 {\scriptsize$\pm$ 0.009} & 0.275 {\scriptsize$\pm$ 0.004} & 0.041 {\scriptsize$\pm$ 0.006} & 0.043 {\scriptsize$\pm$ 0.005} & 0.055 {\scriptsize$\pm$ 0.007} & -7.0\% {\scriptsize$\pm$ 1.1\%} \\
AgACI \citep{zaffran2022adaptive} & 0.250 {\scriptsize$\pm$ 0.001} & 0.230 {\scriptsize$\pm$ 0.002} & 0.411 {\scriptsize$\pm$ 0.007} & \textcolor{blue}{0.251 {\scriptsize$\pm$ 0.004}} & 0.027 {\scriptsize$\pm$ 0.005} & 0.033 {\scriptsize$\pm$ 0.008} & 0.024 {\scriptsize$\pm$ 0.003} & +0.5\% {\scriptsize$\pm$ 1.1\%} \\
Conformal PID \citep{angelopoulos2023conformal} & 0.290 {\scriptsize$\pm$ 0.002} & 0.261 {\scriptsize$\pm$ 0.002} & 0.689 {\scriptsize$\pm$ 0.003} & 0.251 {\scriptsize$\pm$ 0.004} & 0.245 {\scriptsize$\pm$ 0.005} & 0.262 {\scriptsize$\pm$ 0.011} & 0.024 {\scriptsize$\pm$ 0.004} & -20.5\% {\scriptsize$\pm$ 1.2\%} \\
SAOCP \citep{bhatnagar2023improved} & 0.269 {\scriptsize$\pm$ 0.004} & 0.245 {\scriptsize$\pm$ 0.002} & 0.450 {\scriptsize$\pm$ 0.008} & 0.270 {\scriptsize$\pm$ 0.005} & 0.042 {\scriptsize$\pm$ 0.003} & 0.047 {\scriptsize$\pm$ 0.006} & 0.074 {\scriptsize$\pm$ 0.004} & -4.5\% {\scriptsize$\pm$ 1.4\%} \\
NEx-CP \citep{barber2022conformal} & 0.255 {\scriptsize$\pm$ 0.002} & 0.234 {\scriptsize$\pm$ 0.002} & 0.422 {\scriptsize$\pm$ 0.007} & 0.257 {\scriptsize$\pm$ 0.003} & 0.025 {\scriptsize$\pm$ 0.005} & 0.028 {\scriptsize$\pm$ 0.005} & 0.024 {\scriptsize$\pm$ 0.007} & -1.0\% {\scriptsize$\pm$ 1.1\%} \\
ECI \citep{wu2025error} & 0.369 {\scriptsize$\pm$ 0.001} & 0.327 {\scriptsize$\pm$ 0.001} & 0.607 {\scriptsize$\pm$ 0.004} & 0.369 {\scriptsize$\pm$ 0.003} & 0.122 {\scriptsize$\pm$ 0.005} & 0.121 {\scriptsize$\pm$ 0.006} & 0.027 {\scriptsize$\pm$ 0.005} & -35.0\% {\scriptsize$\pm$ 1.0\%} \\
Online COPP \citep{taufiq2022conformal,zhang2023conformal} & 0.027 {\scriptsize$\pm$ 0.008} & 0.018 {\scriptsize$\pm$ 0.005} & 0.079 {\scriptsize$\pm$ 0.010} & 0.256 {\scriptsize$\pm$ 0.005} & 0.043 {\scriptsize$\pm$ 0.010} & 0.052 {\scriptsize$\pm$ 0.011} & 0.039 {\scriptsize$\pm$ 0.009} & \textcolor{red}{+94.9\% {\scriptsize$\pm$ 0.9\%}} \\
\textbf{PW-OCP} (ours) & \textcolor{red}{0.017 {\scriptsize$\pm$ 0.007}} & \textcolor{red}{0.013 {\scriptsize$\pm$ 0.006}} & \textcolor{blue}{0.074 {\scriptsize$\pm$ 0.007}} & 0.252 {\scriptsize$\pm$ 0.002} & \textcolor{blue}{0.023 {\scriptsize$\pm$ 0.003}} & 0.028 {\scriptsize$\pm$ 0.005} & \textcolor{red}{0.019 {\scriptsize$\pm$ 0.004}} & +91.1\% {\scriptsize$\pm$ 1.8\%} \\
\textbf{DR-OCP} (ours) & \textcolor{blue}{0.018 {\scriptsize$\pm$ 0.006}} & \textcolor{blue}{0.014 {\scriptsize$\pm$ 0.006}} & \textcolor{red}{0.065 {\scriptsize$\pm$ 0.012}} & \textcolor{red}{0.030 {\scriptsize$\pm$ 0.008}} & \textcolor{red}{0.021 {\scriptsize$\pm$ 0.003}} & \textcolor{red}{0.024 {\scriptsize$\pm$ 0.003}} & \textcolor{blue}{0.019 {\scriptsize$\pm$ 0.004}} & \textcolor{blue}{+93.5\% {\scriptsize$\pm$ 1.9\%}} \\
\hline \hline
Oracle & 0.000 {\scriptsize$\pm$ 0.000} & 0.000 {\scriptsize$\pm$ 0.000} & 0.000 {\scriptsize$\pm$ 0.000} & 0.000 {\scriptsize$\pm$ 0.000} & 0.000 {\scriptsize$\pm$ 0.000} & 0.000 {\scriptsize$\pm$ 0.000} & 0.007 {\scriptsize$\pm$ 0.000} & +100.0\% \\
\hline \hline
\end{tabular}
}
\end{table}

\textbf{Counterfactual coverage.} Figure~\ref{fig:synthetic-coverage} and Table~\ref{tab:master} show that exogenous online CP baselines meet their logged targets while failing counterfactual coverage. With mean-shift magnitude $\Delta=2$ and $K=2$ actions, baseline counterfactual gaps cluster in $[0.25,0.37]$, whereas PW-OCP reduces the gap to $0.017\pm0.007$ and DR-OCP to $0.018\pm0.006$. In the variance-shift simulation, PW-OCP and DR-OCP achieve gaps near $0.01$ against $\geq 0.23$ for standard baselines. At $K=20$ arms with $\pi_{\min}\approx0.0025$, ACI has gap $0.418\pm0.009$, compared with $0.074\pm0.007$ for PW-OCP and $0.065\pm0.012$ for DR-OCP. The larger residual gap is consistent with the $1/\sqrt{T\pi_{\min}}$ rate in Theorem~\ref{thm:pwocp}.

\begin{figure}[H]
    \centering
    \vspace{-0.6em}
    \hspace{1.5em}%
    \begin{subfigure}{0.29\textwidth}\centering\footnotesize{\qquad Propensity misspecification}\end{subfigure}\hfill
    \begin{subfigure}{0.29\textwidth}\centering\footnotesize{\qquad Counterfactual coverage}\end{subfigure}\hfill
    \begin{subfigure}{0.29\textwidth}\centering\footnotesize{Sharpness rate}\end{subfigure}
    % \\[0.5pt]
    \rowlabel{\qquad\; \footnotesize{CCov Gap}}%
    \begin{subfigure}{0.29\textwidth}
        \centering
        \includegraphics[width=\textwidth]{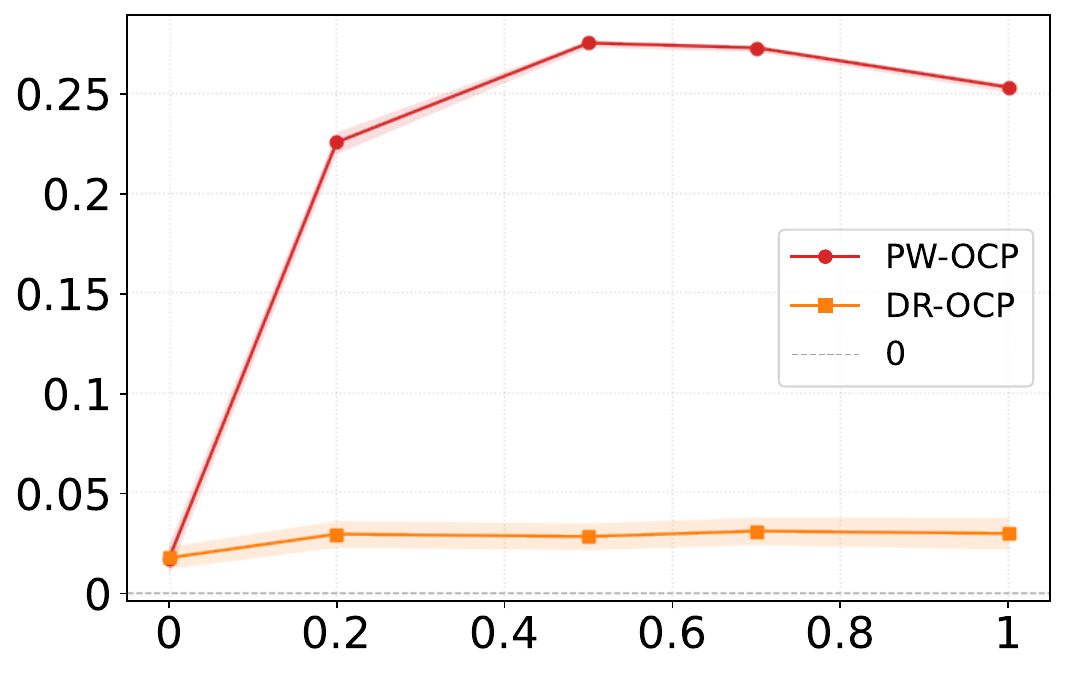}
    \end{subfigure}
    % \\[0.5pt]
    \rowlabel{\qquad\; \footnotesize{CCov Gap}}%
    \begin{subfigure}{0.29\textwidth}
        \centering
        \includegraphics[width=\textwidth]{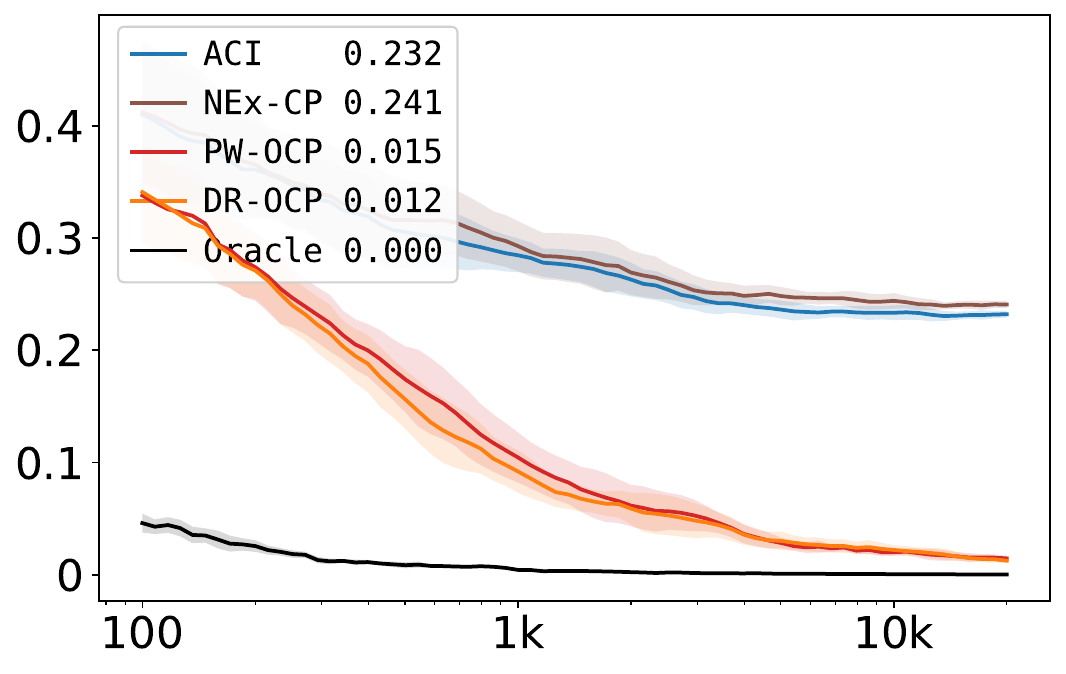}
    \end{subfigure}
    \rowlabel{\qquad \footnotesize{Avg finite $\eta_t$}}%
    \begin{subfigure}{0.29\textwidth}
        \centering
        \includegraphics[width=\textwidth]{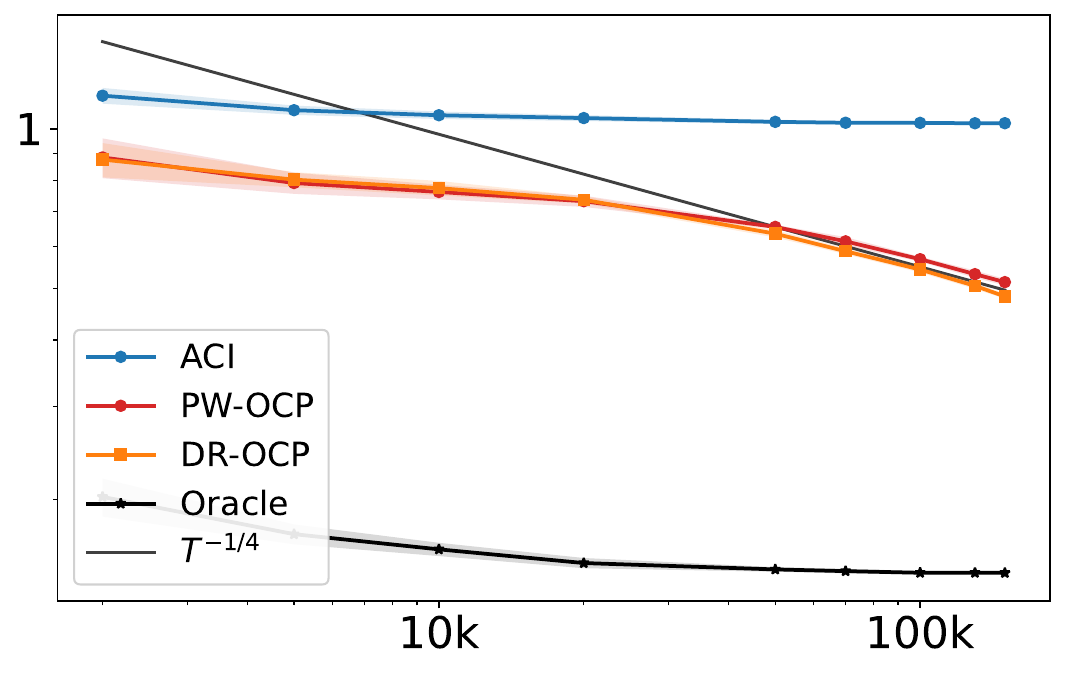}
    \end{subfigure}
    \vskip -0.05in
    \begin{subfigure}{0.29\textwidth}
        \centering
        \footnotesize{Propensity noise level $\xi$}
    \end{subfigure}
    \begin{subfigure}{0.29\textwidth}
        \centering
        \setlength{\parindent}{1em}
        \footnotesize{\qquad Number of rounds $T$}
    \end{subfigure}
    \begin{subfigure}{0.29\textwidth}
        \centering
        \setlength{\parindent}{1em}
        \footnotesize{\qquad\qquad\quad Number of rounds $T$}
    \end{subfigure}
    \vskip -0.05in
    \caption{\textbf{Robustness, rates, and prediction-set quality.} \textbf{Left}: propensity misspecification is induced by giving the learner $\hat\pi=(1-\xi)\pi+\xi/K$, where $\xi=0$ is the true logger and $\xi=1$ is uniform. PW-OCP degrades as the IPW correction loses the true propensity, while DR-OCP remains stable because the outcome model absorbs the first-order propensity bias predicted by Theorem~\ref{thm:drocp}. \textbf{Middle}: as the horizon $T$ grows, PW-OCP and DR-OCP drive the counterfactual coverage gap toward Oracle, whereas logged-calibration baselines retain persistent bias. \textbf{Right}: the average finite threshold error $\bar\eta_T$ relative to the oracle counterfactual quantile decreases with $T$, showing that the methods improve coverage without simply widening prediction sets and matching the sharpness behavior in Theorem~\ref{thm:cdq}.}
\label{fig:prediction-quality}
\label{fig:propensity}
\end{figure}

\textbf{Propensity robustness.} This experiment isolates the contribution of the doubly robust augmentation by feeding the learner increasingly biased logging probabilities. The propensity-misspecification column of Table~\ref{tab:master} reports the extreme case (uniform propensity model): PW-OCP collapses to the ACI gap, while DR-OCP retains gap $0.030\pm0.008$. Figure~\ref{fig:propensity} sweeps the misspecification level $\xi$ from $0$ (correct) to $1$ (uniform); DR-OCP remains stable throughout because the outcome model absorbs the first-order propensity bias predicted by Theorem~\ref{thm:drocp}.

\textbf{Prediction-set quality, rates, and decisions.} Figure~\ref{fig:prediction-quality} shows that PW-OCP and DR-OCP improve coverage without trivial widening, with thresholds that stay close to the oracle counterfactual quantile as $T$ grows. Appendix Figure~\ref{fig:rate-verification} independently varies $T$ and $\pi_{\min}$ and confirms the effective-sample-size dependence predicted by Theorem~\ref{thm:pwocp}. The last column of Table~\ref{tab:master} uses the robust pseudo-regret of Theorem~\ref{thm:coverage_regret_bridge}, with the explicit logged-quantile-versus-PW-OCP comparison given in Corollary~\ref{cor:regret-pair}. Online COPP gives the largest reduction, while PW-OCP and DR-OCP combine large decision gains with the strongest counterfactual coverage across most benchmarks.

\textbf{Real data.} On Open Bandit and DJIA, absolute gaps are smaller because several baselines already achieve near-nominal logged coverage. PW-OCP and DR-OCP nonetheless remain among the best non-oracle methods across Women, Men, and financial rebalancing. Appendix Figure~\ref{fig:realdata} shows the per-task coverage panels, while Appendix Figure~\ref{fig:observable-diagnostics} provides the complementary logged $\MCov,\ACov$ diagnostics.

\section{Related Work}\label{sec:related-work}

\textbf{Online CP under distribution shift.} Adaptive conformal inference \citep{gibbs2021adaptive,gibbs2022conformal} and related methods, including conformal PID \citep{angelopoulos2023conformal}, SAOCP \citep{bhatnagar2023improved}, AgACI \citep{zaffran2022adaptive}, ECI \citep{wu2025error}, and beyond-exchangeable CP \citep{barber2022conformal}, deliver marginal coverage under exogenous, learner-independent distribution shift by tracking the realized error stream. None of these methods detects the selection bias created when prediction sets drive the logging policy: the stream they calibrate on is itself policy-biased. Our work supplies the missing object---counterfactual coverage on every action, including those rarely selected---through an inverse-propensity-weighted recursion that decouples the calibration target from the logged context distribution.

\textbf{Partial-feedback CP.} Recent work on partial-feedback conformal prediction \citep{ge2025stochastic,wang2025generalized,yang2026advsemibandit} handles bandit-style observation in which only the played outcome is revealed. These methods assume per-action outcome laws are exogenous, i.e., $\rho=0$, so per-arm calibration on the logged subset is already unbiased. They cannot correct the bias that arises when the action itself shifts the outcome law, the regime $\rho>0$ in our framework. We close this gap with an IPW correction and a doubly robust extension that remain valid under action-induced outcome shift.

\textbf{Counterfactual, batch, and decision-theoretic CP.} Counterfactual coverage has been established for batch treatment effects \citep{lei2021conformal} and for off-policy conformal prediction from logged batch data \citep{taufiq2022conformal,zhang2023conformal}, but these methods require a fixed offline sample and do not adapt as the logging policy evolves. Decision-theoretic conformal methods place prediction sets inside downstream optimization \citep{kiyani2025decision,lekeufack2024conformal,patel2023conformal,hu2026conformal,yeh2025conformal,zhou2025calibrating,bao2025optimal}. These works assume calibration data are exogenous and do not analyze the feedback loop in which predictions shape the data. We close both gaps in one framework: an online recursion that tracks counterfactual coverage on every action and translates the resulting coverage error into a bound on downstream decision regret. Connections to conformal $e$-prediction \citep{vovk2025conformal,vovkwang2026confounding,wang2025stoppedebh,gauthier2025evalues,ramdas2024hypothesis}, adversarial coverage objectives \citep{ramalingam2025relationship,sale2025online}, feedback-loop CP \citep{wang2025fbcp}, and performative distribution shift \citep{perdomo2020performative} address other forms of robustness, but none isolates action-induced selection bias on counterfactual coverage.

\section{Discussion and Limitations}\label{sec:limitations}

This paper identifies endogenous logging as a obstacle for online conformal prediction. Marginal online validity can survive adaptive logging, yet logged-quantile per-action calibration retains a persistent $\Omega(\rho)$ counterfactual gap. PW-OCP removes this gap at the minimax rate, and DR-OCP preserves the rate while reducing nuisance sensitivity. The experiments show the same separation in finite samples: methods that appear calibrated on logged outcomes can remain unreliable on rarely selected actions, whereas propensity-weighted calibration improves coverage and robust decisions.

\textbf{Limitations and open directions.} The framework is contextual bandit rather than reinforcement learning: it identifies one-step potential outcomes at the realized history, not hypothetical action-sequence trajectories. Positivity is essential; the lower bound shows why purely exploitative logging with $\pi_{\min}\to0$ cannot support uniform counterfactual coverage, but the exploration floor may be costly. Sharpness and regret require stationarity and local score-density regularity, and DR-OCP is conditional on realized nuisance errors. Open problems include diagnostics for $\rho$, adaptive cross-fitting under endogenous streams, tighter coverage--regret lower bounds beyond Theorem~\ref{thm:pareto}, adaptive exploration budgets, and anytime counterfactual CP via conformal $e$-processes \citep{vovk2025conformal,vovkwang2026confounding,wang2025stoppedebh,gauthier2025evalues,ramdas2024hypothesis}.

%%%%%%%%%%%%%%%%%%%%%%%%%%%%%%%%%%%%%%%%%%%%%%%%%%%%%%%%%%%%
\bibliographystyle{plain}
\bibliography{neurips_2026}

%%%%%%%%%%%%%%%%%%%%%%%%%%%%%%%%%%%%%%%%%%%%%%%%%%%%%%%%%%%%
\newpage
\appendix
\renewcommand{\theequation}{EC.\arabic{equation}}
\setcounter{equation}{0}
%%%%%%%%%%%%%%%%%%%%%%%%%%%%%%%%%%%%%%%%%%%%%%%%%%%%%%%%%%%%
\section{Broader impacts}\label{appendix:impacts}
Reliable counterfactual coverage benefits decision systems where rarely selected actions matter, such as clinical trials, dynamic pricing, and recommendation. Without validation, miscalibrated propensities or vanishing exploration produce biased prediction sets on underrepresented contexts. Deployment should audit propensities, prefer the doubly robust variant under uncertainty, maintain an exploration floor, and monitor counterfactual coverage over time.

\section{Additional Algorithms and Implementation}\label{app:algorithms}
\subsection{Doubly Robust Online Conformal Prediction}\label{app:drocp-algorithm}

DR-OCP is the operational version of Definition~\ref{def:dr}. It keeps the same score buffers and action-wise thresholds as PW-OCP, but before observing $Y_t$ it forms predictable nuisance estimates from past data only. After the played outcome is revealed, each action receives an imputed miscoverage estimate $\hat\mu_t(a,X_t)$ plus an inverse-propensity residual when that action was actually played. This is the step that removes first-order sensitivity to either the propensity model or the outcome model in Theorem~\ref{thm:drocp}.

\begin{algorithm}[H]
\caption{DR-OCP: Doubly Robust Online Conformal Prediction}\label{alg:drocp}
% \small
\begin{algorithmic}[1]
\STATE \textbf{Input}: target $\alpha$, step size $\gamma$, predictable estimators $\hat\mu_t,\hat\pi_t$
\STATE Initialize $\alpha_1(a)\gets\alpha$ and $\cS_1(a)\gets\emptyset$ for all $a\in\cA$
\FOR{$t=1,\ldots,T$}
    \STATE \textcolor{blue}{{// Construct action-wise prediction sets}}
    \STATE Observe $X_t$ and compute $\hat\mu_t(a,X_t)$, $\hat\pi_t(a\mid X_t,\cH_{t-1})$
    \FOR{each $a\in\cA$}
        \STATE $\hat q_t(a)\gets\hat Q_t^{(a)}(1-\alpha_t(a))$
        \STATE $\cC_t(a)\gets\{y:s(X_t,y;a)\leq\hat q_t(a)\}$
    \ENDFOR
    \STATE \textcolor{blue}{{// Observe logged feedback}}
    \STATE Draw $a_t\sim\pi_t(\cdot\mid X_t,\cH_{t-1})$ and observe $Y_t$
    \STATE $E_t\gets\1\{s(X_t,Y_t;a_t)>\hat q_t(a_t)\}$
    \STATE $\cS_{t+1}(a_t)\gets\cS_t(a_t)\cup\{s(X_t,Y_t;a_t)\}$
    \STATE $\cS_{t+1}(a)\gets\cS_t(a)$ for all $a\neq a_t$
    \STATE \textcolor{blue}{{// Doubly robust calibration}}
    \FOR{each $a\in\cA$}
        \STATE $\hat p_t(a)\gets\hat\pi_t(a\mid X_t,\cH_{t-1})$
        \STATE $R_t(a)\gets E_t-\hat\mu_t(a,X_t)$
        \STATE $Z_t^{\mathrm{DR}}(a)\gets\hat\mu_t(a,X_t)+\frac{\1\{a_t=a\}}{\hat p_t(a)}R_t(a)$
        \STATE $\alpha_{t+1}(a)\gets\alpha_t(a)+\gamma(\alpha-Z_t^{\mathrm{DR}}(a))$
    \ENDFOR
\ENDFOR
\end{algorithmic}
\end{algorithm}

\subsection{Online COPP Baseline}\label{app:online-copp}
Online COPP is a natural online adaptation of batch off-policy conformal prediction \citep{taufiq2022conformal,zhang2023conformal}: a sliding-window IPW empirical quantile of width $W=500$ over the logged scores. Unlike PW-OCP, it does not maintain an adaptive miscoverage index, which isolates the contribution of the online stochastic-approximation recursion in the comparison.

\begin{algorithm}[h]
\caption{Online COPP baseline}\label{alg:online-copp}
% \small
\begin{algorithmic}[1]
% \STATE \textbf{Input}: target miscoverage $\alpha$, window size $W$
\STATE \textbf{Input}: target miscoverage $\alpha$, window size $W$, logging propensities $\pi_t$
\STATE Initialize score buffers $\cS_1(a)\gets\emptyset$ and weight buffers $\cW_1(a)\gets\emptyset$
\FOR{$t=1,\ldots,T$}
    \STATE \textcolor{blue}{{// Weighted off-policy quantile}}
    \STATE Observe $X_t$
    \FOR{each $a\in\cA$}
        \STATE $\hat q_t(a)\gets$ weighted empirical $(1-\alpha)$ quantile of $\cS_t(a)$ using $\cW_t(a)$
        \STATE $\cC_t(a)\gets\{y:s(X_t,y;a)\leq\hat q_t(a)\}$
    \ENDFOR
    \STATE \textcolor{blue}{{// Observe logged feedback and update buffers}}
    \STATE Draw $a_t\sim\pi_t(\cdot\mid X_t,\cH_{t-1})$ and observe $Y_t$
    \STATE Append $s(X_t,Y_t;a_t)$ to $\cS_t(a_t)$
    \STATE Append $1/\pi_t(a_t\mid X_t,\cH_{t-1})$ to $\cW_t(a_t)$
    \STATE Keep the most recent $W$ score--weight pairs for action $a_t$
\ENDFOR
\end{algorithmic}
\end{algorithm}

\subsection{Nuisance Estimation for DR-OCP}\label{app:nuisance-protocol}
Theorem~\ref{thm:drocp} is conditional on the realized nuisance RMSEs in~\eqref{eq:nuisance-rmse}, leaving the choice of estimators to the implementer. The experiments use the following protocol, which ensures that $\hat\mu_t$ and $\hat\pi_t$ are both measurable with respect to $\sigma(\cH_{t-1},X_t)$.

\begin{enumerate}[leftmargin=1.5em,itemsep=2pt,topsep=2pt]
    \item \textbf{Fit on past data only.} Before round $t$, fit $\hat\mu_t$ and $\hat\pi_t$ using $\cH_{t-1}$ alone; the current outcome $Y_t$ is never used.
    \item \textbf{Propensity estimator.} When the logging rule is recorded, set $\hat\pi_t=\pi_t$. Otherwise fit a multiclass action model on past pairs $(X_\tau,a_\tau)_{\tau<t}$ and clip the output below $\hat\pi_{\min}$.
    \item \textbf{Outcome model.} Regress the past miscoverage indicators $\1\{s(X_\tau,Y_\tau;a)>\hat q_\tau(a)\}$ on $(X_\tau,a)$ using only rounds with $a_\tau=a$, with pooled or hierarchical smoothing when an arm has few observations.
    \item \textbf{Block updates.} Refit the nuisance models in time blocks rather than at every round; this stabilizes the residual term in $Z_t^{\mathrm{DR}}(a)$ from~\eqref{eq:dr-signal}.
\end{enumerate}

The protocol is an implementation device. The DR coverage bound~\eqref{eq:dr-bound} remains valid for any predictable nuisance sequence through the explicit product term $\max_a \epsilon_{\mu,a}\epsilon_{\pi,a}/\hat\pi_{\min}$.

\section{Additional Theoretical Results}\label{app:additional-theory}
This appendix records complementary results on prediction-set sharpness (Theorems~\ref{thm:cdq} and~\ref{thm:pareto}) and on the bridge from coverage to decision regret (Theorem~\ref{thm:coverage_regret_bridge} and Corollaries~\ref{cor:regret-pair},~\ref{cor:sharpness-comparison}).

\subsection{Prediction Set Sharpness}\label{sec:efficiency}
Coverage alone does not guarantee useful prediction sets, since the trivial set $\cY$ achieves perfect coverage \citep{gneiting2007probabilistic}. We measure efficiency by the time-averaged threshold error relative to the oracle counterfactual quantile:
\begin{align}\label{eq:cdq}
    \eta_t := \max_{a\in\cA} |\hat q_t(a) - q^*(a)|, \qquad q^*(a) := (F_{\mathrm{true}}^{(a)})^{-1}(1-\alpha),
\end{align}
where $F_{\mathrm{true}}^{(a)}$ is the law of the score $s(X_t,Y_t(a);a)$ under the marginal context distribution. Bounding $\eta_t$ requires stationary logging and density regularity (Assumption~\ref{ass:regularity}).

\begin{assumption}[Stationary logging and quantile regularity]\label{ass:regularity}
\textbf{(a) Stationarity.} The sequence $(X_t,\{Y_t(a)\}_{a\in\cA})_{t\geq 1}$ is i.i.d., and the logging policy is time-invariant: $\pi_t(a\mid X_t,\cH_{t-1})=\pi(a\mid X_t)\geq \pi_{\min}$ almost surely.

\textbf{(b) Score densities.} For each $a\in\cA$, let $F_{\mathrm{true}}^{(a)}$ and $F_{\mathrm{bias}}^{(a)}$ denote the laws of $s(X_t,Y_t(a);a)$ under the marginal context distribution and conditional on $a_t=a$, respectively, with continuous densities $f_{\mathrm{true}}^{(a)}$ and $f_{\mathrm{bias}}^{(a)}$. Let $\alpha^\star(a):=1-F_{\mathrm{bias}}^{(a)}(q^*(a))$ denote the population fixed point of the PW-OCP recursion under stationary logging.

\textbf{(c) Local regularity.} For each $a$, there exist a compact interval $\mathcal I_a\subset(0,1)$ with $\alpha^\star(a)\in\mathrm{int}(\mathcal I_a)$ and a radius $\varepsilon_q>0$ such that, defining
\begin{align}\label{eq:Q-tilde}
    \mathcal Q_a:=\{(F_{\mathrm{bias}}^{(a)})^{-1}(1-\beta):\beta\in\mathcal I_a\},\qquad
    \widetilde{\mathcal Q}_a:=\{q:\mathrm{dist}(q,\mathcal Q_a)\leq\varepsilon_q\},
\end{align}
the bias-density satisfies $0<f_{\min}\leq f_{\mathrm{bias}}^{(a)}\leq f_{\max}<\infty$ on $\widetilde{\mathcal Q}_a$, and the density ratio is bounded:
\begin{align}\label{eq:density-ratio}
    0<\lambda_{\min}\leq \frac{f_{\mathrm{true}}^{(a)}(q)}{f_{\mathrm{bias}}^{(a)}(q)}\leq \lambda_{\max}<\infty, \qquad q\in\widetilde{\mathcal Q}_a.
\end{align}
\end{assumption}

\begin{remark}[Scope of Assumption~\ref{ass:regularity}]\label{rem:stationarity-scope}
Assumption~\ref{ass:regularity} is invoked only by Theorem~\ref{thm:cdq} and by the decision bounds in Section~\ref{sec:decision-bridge} that use the sharpness rate. The basic coverage guarantees (Theorems~\ref{thm:pwocp},~\ref{thm:drocp},~\ref{thm:lower}) require only positivity and allow history-dependent logging. Stationarity covers fixed randomized policies and production logging with a constant exploration floor; drifting policies require additional drift-control assumptions outside our scope.
\end{remark}

\begin{theorem}[Sharpness rate of projected PW-OCP]\label{thm:cdq}
Run PW-OCP (Algorithm~\ref{alg:pwocp}) with each iterate $\alpha_t(a)$ projected onto $\mathcal I_a$ after the update~\eqref{eq:pwocp-update}, and step size $\gamma^\star=c_\gamma\sqrt{\pi_{\min}/T}$ for a numerical constant $c_\gamma>0$. Under Assumptions~\ref{ass:nuc}--\ref{ass:regularity}, with probability at least $1-\delta$,
\begin{align}\label{eq:cdq-rate}
\frac{1}{T}\sum_{t=1}^T \eta_t
    \;\leq\; C\,(T\pi_{\min})^{-1/4}\sqrt{\log(KT/\delta)},
\end{align}
where $C$ depends only on $f_{\min}, f_{\max}, \lambda_{\min}, \lambda_{\max}$ from Assumption~\ref{ass:regularity}.
\end{theorem}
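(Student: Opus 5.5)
We reduce the threshold error to the error of the projected iterate $\alpha_t(a)$ relative to the population fixed point $\alpha^\star(a)$, plus the error of the empirical logged quantile. Fix an action $a$. Write $q^*(a)=(F_{\mathrm{bias}}^{(a)})^{-1}(1-\alpha^\star(a))$, which holds by the definition of $\alpha^\star(a)$ in Assumption~\ref{ass:regularity}(b). Since $\alpha_t(a)\in\mathcal I_a$, the threshold $\hat q_t(a)=\hat Q^{(a)}_t(1-\alpha_t(a))$ is compared with $q^*(a)$ through the split
\begin{align*}
|\hat q_t(a)-q^*(a)|
\le \bigl|\hat Q^{(a)}_t(1-\alpha_t(a))-(F_{\mathrm{bias}}^{(a)})^{-1}(1-\alpha_t(a))\bigr|
+\bigl|(F_{\mathrm{bias}}^{(a)})^{-1}(1-\alpha_t(a))-(F_{\mathrm{bias}}^{(a)})^{-1}(1-\alpha^\star(a))\bigr|.
\end{align*}

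\textbf{Quantile-estimation term.} Under stationarity the logged scores for action $a$ are i.i.d.\ draws from $F_{\mathrm{bias}}^{(a)}$. The count satisfies $N_t(a)\ge t\pi_{\min}/2$ for $t\gtrsim\log(KT/\delta)/\pi_{\min}$, by Freedman or Bernstein. The DKW inequality, together with $f_{\mathrm{bias}}\ge f_{\min}$ on $\widetilde{\mathcal Q}_a$, then gives a uniform bound over $\beta\in\mathcal I_a$ for the first term of order $f_{\min}^{-1}\sqrt{\log(KT/\delta)/(t\pi_{\min})}$. Early rounds, and rounds where the estimate leaves $\widetilde{\mathcal Q}_a$, are absorbed as an additive $O(\log(KT/\delta)/(T\pi_{\min}))$ by bounding $\eta_t$ on a bounded range; this uses the projection plus truncation at the score range. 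Summing over $t$ gives $O\bigl(\sqrt{\log(KT/\delta)/(T\pi_{\min})}\bigr)$, which is lower order. The second term is at most $f_{\min}^{-1}|\alpha_t(a)-\alpha^\star(a)|$ by the mean value theorem.

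\textbf{Iterate term, the main obstacle.} We treat the projected recursion as stochastic approximation with the mean field
\[
g_t(\beta)=\E[Z_t(a)\mid\cH_{t-1}] = 1-F_{\mathrm{true}}^{(a)}\bigl(\hat Q^{(a)}_t(1-\beta)\bigr),
\]
using Lemma~\ref{lem:unbiased} and i.i.d.\ contexts. Its population version $\bar g(\beta)=1-F_{\mathrm{true}}^{(a)}((F_{\mathrm{bias}}^{(a)})^{-1}(1-\beta))$ equals $\alpha$ at $\alpha^\star(a)$. It is increasing with slope $f_{\mathrm{true}}/f_{\mathrm{bias}}\in[\lambda_{\min},\lambda_{\max}]$, so we have strong monotonicity $(\beta-\alpha^\star)(\bar g(\beta)-\alpha)\ge\lambda_{\min}(\beta-\alpha^\star)^2$. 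We expand $D_t=(\alpha_t-\alpha^\star)^2$ using nonexpansiveness of the projection:
\[
D_{t+1}\le D_t-2\gamma(\alpha_t-\alpha^\star)(Z_t-\alpha)+\gamma^2(Z_t-\alpha)^2 .
\]
The right side splits into four pieces: the drift $-2\gamma\lambda_{\min}D_t$; a bias term $2\gamma|\alpha_t-\alpha^\star|\,|g_t-\bar g|(\alpha_t)$, controlled by $\lambda_{\max}f_{\max}$ times the quantile-estimation error above; a martingale term with increments bounded by $|\mathcal I_a|/\pi_{\min}$ and conditional variance $\lesssim D_t/\pi_{\min}$; and $\gamma^2/\pi_{\min}^2\cdot\pi_{\min}$ in expectation, i.e.\ $\gamma^2/\pi_{\min}$. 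We sum over $t$ and divide by $2\gamma\lambda_{\min}T$. The martingale is handled by Freedman's inequality with a self-normalizing (peeling) argument, absorbing the $\sqrt{\sum D_t/\pi_{\min}}$ piece into the left side by AM--GM. This yields
\[
\frac1T\sum_t D_t\lesssim \frac{1}{\gamma T}+\frac{\gamma}{\pi_{\min}}+\frac{\log(KT/\delta)}{\gamma T\pi_{\min}}+\text{(quantile error)}^2 .
\]
With $\gamma^\star=c_\gamma\sqrt{\pi_{\min}/T}$, every term is $O\bigl(\log(KT/\delta)/\sqrt{T\pi_{\min}}\bigr)$; the third term is $\log/\sqrt{T\pi_{\min}^3}$, and we control it through a sharper variance bound that avoids the crude $1/\pi_{\min}$ range factor, which is the step I expect to be delicate. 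Cauchy--Schwarz then gives $\frac1T\sum_t|\alpha_t-\alpha^\star|\lesssim (T\pi_{\min})^{-1/4}\sqrt{\log(KT/\delta)}$. This explains the $1/4$ exponent: it is the square root of the $\sqrt{T\pi_{\min}}$ averaged mean-square rate.

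\textbf{Assembling.} We take a union bound over the $K$ actions and over the two high-probability events (count and DKW concentration, and the Freedman bound), each at level $\delta/(2K)$. The final step uses $\eta_t\le\sum_a|\hat q_t(a)-q^*(a)|$, or equivalently the maximum over $a$ controlled by the same uniform event. This gives~\eqref{eq:cdq-rate} with $C$ depending on $f_{\min},f_{\max},\lambda_{\min},\lambda_{\max}$ and the length of $\mathcal I_a$.
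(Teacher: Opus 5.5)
Your route matches the paper's. You split $|\hat q_t(a)-q^*(a)|$ into the empirical-quantile error $r_t(a):=\hat q_t(a)-(F_{\mathrm{bias}}^{(a)})^{-1}(1-\alpha_t(a))$, controlled by a Chernoff bound on the logged count plus DKW, and the iterate error $|e_t(a)|/f_{\min}$ with $e_t(a)=\alpha_t(a)-\alpha^\star(a)$. You run a stochastic-approximation Lyapunov argument on $e_t(a)^2$ using the slope bounds $\lambda_{\min}\le f_{\mathrm{true}}^{(a)}/f_{\mathrm{bias}}^{(a)}\le\lambda_{\max}$ of the population drift. You then take the square root of an averaged mean-square rate of order $\log(KT/\delta)/\sqrt{T\pi_{\min}}$.

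\textbf{The flagged step.} The step you call delicate is not an obstacle; it comes from a bookkeeping slip, and the fix you propose would not work. The martingale in your expansion is $2\gamma\sum_t e_t(a)M_t(a)$, so the range term in Freedman's inequality is of order $\gamma\log(KT/\delta)/\pi_{\min}$. Divide by $2\gamma\lambda_{\min}T$ and absorb the $\sqrt{\sum_t D_t\log(KT/\delta)/\pi_{\min}}$ piece by AM--GM, which leaves a term of the same order. The martingale then contributes only $O(\log(KT/\delta)/(T\pi_{\min}))$, not $\log(KT/\delta)/(\gamma T\pi_{\min})$. The correct bound is
\[
\frac1T\sum_{t=1}^T D_t\;\lesssim\;\frac{1}{\gamma T}+\frac{\gamma}{\pi_{\min}}+\frac{\log(KT/\delta)}{T\pi_{\min}}+\frac1T\sum_{t=1}^T r_t(a)^2 .
\]
Its third term is dominated by the $\log^2(KT/\delta)/(T\pi_{\min})$ quantile term, so this matches the paper's~\eqref{eq:mean-square-alpha}. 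A sharper variance bound is neither needed nor available. The conditional variance of $Z_t(a)$ really is of order $1/\pi_{\min}$ when $\pi(a\mid x)=\pi_{\min}$ on contexts of positive mass with nonvanishing miscoverage. Moreover, the term you worried about comes from the range part of Freedman's inequality, not the variance part.

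\textbf{The aggregation over actions.} Using $\eta_t\le\sum_a|\hat q_t(a)-q^*(a)|$ costs a factor $K$, or $\sqrt K$ if you pass through $\sum_a e_t(a)^2$. The stated constant $C$ does not allow this. It is also not equivalent to controlling the maximum on a uniform event. A union bound makes $\frac1T\sum_t e_t(a)^2$ small for every $a$, but that does not bound $\frac1T\sum_t\max_a e_t(a)^2$, because different arms can be off at different rounds. The quantile part is fine, since your DKW event controls $\max_a|r_t(a)|$ round by round. For the iterate part, though, you must bound $\frac1T\sum_t\max_a e_t(a)^2$ directly. The paper does this by running the Lyapunov argument on $\max_a V_t(a)$. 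An alternative is a high-probability envelope for each $e_t(a)$ that holds uniformly in $t$, union-bounded over $a$ and $t$.
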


Theorem~\ref{thm:cdq} establishes that PW-OCP's thresholds converge to the oracle counterfactual quantiles, not merely satisfy coverage. The $(T\pi_{\min})^{-1/4}$ rate is slower than the $(T\pi_{\min})^{-1/2}$ coverage rate (Theorem~\ref{thm:pwocp}) because translating coverage error into quantile error inverts the local score CDF, which costs a square root.

\begin{theorem}[Worst-case coverage--regret lower bound]\label{thm:pareto}
Fix $\alpha\in(0,1)$. There exist absolute constants $\delta_0>0$ and $c(\alpha)>0$ such that, for every target accuracy $\delta\in(0,\delta_0]$, the following holds. There exist two context-free two-arm instances $P_0,P_1$ with squared loss $\ell(a,y)=y^2$ such that, for any threshold-based online CP coupled with any decision rule that observes only the outcome of the played action,
\begin{align}\label{eq:pareto-condition}
    \sup_{P\in\{P_0,P_1\}}\E_P\!\left[\max_a|\CCov_T(a)-(1-\alpha)|\right]\;\leq\; \delta
\end{align}
implies
\begin{align}\label{eq:pareto-tradeoff}
    \delta\cdot \sup_{P\in\{P_0,P_1\}}\E_P[\mathrm{Regret}_T]\;\geq\; c(\alpha),
\end{align}
where $\mathrm{Regret}_T:=\sum_{t=1}^T\bigl(\ell(a_t,Y_t)-\min_{a\in\cA}\E_P[\ell(a,Y_t(a))]\bigr)$ is the standard pseudo-regret against the best fixed arm under $P$.
\end{theorem}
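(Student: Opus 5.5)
The argument is a two-point construction in which accurate counterfactual coverage forces the logger to keep exploring an arm whose informative outcome is only revealed when it is played, and every such play costs a constant in regret.

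\textbf{Construction.} Take two context-free two-arm instances $P_0,P_1$ that agree on arm $1$, with $Y(1)\sim\cN(0,1)$ under both. They differ only on arm $2$:
\begin{align*}
    P_0:\;Y(2)\sim\cN(0,\sigma_0^2),\qquad P_1:\;Y(2)\sim\cN(0,\sigma_1^2),\qquad \sigma_1^2=\sigma_0^2(1+c_1\delta).
\end{align*}
We choose $\sigma_0^2=2$, so that arm $1$ is the unique optimal arm under both instances for $\ell(a,y)=y^2$, with gap $\Delta=\E[Y(2)^2]-\E[Y(1)^2]\geq 1$. Under the absolute-residual score with $\hat\mu\equiv 0$, the $(1-\alpha)$ counterfactual quantiles of arm $2$ are $z_{1-\alpha/2}\sigma_0$ and $z_{1-\alpha/2}\sigma_1$. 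These quantiles are separated by $\asymp\delta$, and the coverage each induces under the other instance differs by some $c_2(\alpha)\delta$. We pick $c_1$ so that $c_2(\alpha)\delta>2\delta$. A threshold $\hat q_t(2)$ can then be $\delta$-accurate in time-averaged coverage under at most one of the two instances. Formally, we define the test statistic
\begin{align*}
    \psi:=\1\Bigl\{\tfrac1T\textstyle\sum_t \Prob_{P_1}\bigl(|Y(2)|\le\hat q_t(2)\bigr)>1-\alpha-\tfrac{c_2\delta}{2}\Bigr\}.
\end{align*}
This $\psi$ is a function of the observed data only, since the thresholds are predictable. By Markov's inequality, condition \eqref{eq:pareto-condition} makes $\psi$ a test between $P_0$ and $P_1$ with both error probabilities bounded by, say, $1/3$. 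Because $\CCov_T$ is a realized average rather than its conditional mean, we first replace $\CCov_T$ by its predictable compensator. The martingale difference between them is $O(T^{-1/2})$, which is absorbed when $\delta\gtrsim T^{-1/2}$. When $\delta\lesssim T^{-1/2}$, the regret bound is instead proved by a separate argument.

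\textbf{Information bound.} By Le Cam's inequality, a test with error at most $1/3$ under both hypotheses requires $\mathrm{KL}(\Prob_{P_0}^{T}\|\Prob_{P_1}^{T})\geq c_3$. The divergence decomposition for bandits, together with the fact that outcomes of arm $1$ are identically distributed under both instances, gives
\begin{align*}
    \mathrm{KL}\bigl(\Prob_{P_0}^{T}\,\big\|\,\Prob_{P_1}^{T}\bigr)=\E_{P_0}[N_T(2)]\cdot\mathrm{KL}\bigl(\cN(0,\sigma_0^2)\,\big\|\,\cN(0,\sigma_1^2)\bigr)\asymp\E_{P_0}[N_T(2)]\,\delta^2.
\end{align*}
Hence $\E_{P_0}[N_T(2)]\gtrsim\delta^{-2}$. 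Under $P_0$ the regret equals $\Delta\,\E_{P_0}[N_T(2)]$, so $\sup_P\E_P[\mathrm{Regret}_T]\gtrsim\delta^{-2}\geq\delta^{-1}$ for $\delta\leq\delta_0\le1$. This yields \eqref{eq:pareto-tradeoff}; the bound obtained is in fact stronger than the one stated.

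\textbf{Main obstacle.} The hardest step is the case where $T$ is too small relative to $\delta^{-2}$, since then the bound $\E_{P_0}[N_T(2)]\gtrsim\delta^{-2}$ cannot hold because $N_T(2)\leq T$. In that regime we must show the premise \eqref{eq:pareto-condition} is simply infeasible, so the implication holds vacuously. The plan is to use the fact that even with $N_T(2)=T$ the divergence is $T\delta^2<c_3$, so no test exists, contradicting the premise. We will choose $\delta_0$ and $c(\alpha)$ so that this case split is consistent. A second technical point is passing from pathwise thresholds to a test. This relies on the averaged coverage function being monotone in each $\hat q_t$, which lets us compare it against the midpoint between the two instances' coverage levels.
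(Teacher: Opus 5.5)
\textbf{The gap is the step from \eqref{eq:pareto-condition} to a test.} Write $F_j(q)$ for the probability that arm $2$ is covered at threshold $q$ under $P_j$. Your separation $c_2(\alpha)\delta$ holds only near the oracle quantile. Monotonicity gives $F_1(q)\le F_0(q)$, but $F_0(q)-F_1(q)\to 0$ as $q\to 0$ or $q\to\infty$. It is exactly zero at the thresholds $\pm\infty$ that the boundary convention \eqref{eq:aci-boundary} allows. The premise constrains only the time-averaged coverage, not the individual $\hat q_t(2)$. So $T^{-1}\sum_t F_0(\hat q_t(2))$ and $T^{-1}\sum_t F_1(\hat q_t(2))$ can both equal $1-\alpha$.

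Concretely, let $m$ be the integer nearest $(1-\alpha)T$. For both arms, set $\hat q_t(a)=+\infty$ (the set $\cY$) for $t\le m$ and $\hat q_t(a)=-\infty$ (the empty set) for $t>m$, and always play arm $1$. Then:
\begin{enumerate}[leftmargin=1.5em,itemsep=1pt,topsep=2pt]
\item $\CCov_T(a)=m/T$ deterministically, for both arms and under any instance, so \eqref{eq:pareto-condition} holds whenever $T\ge 1/(2\delta)$.
\item Your statistic $\psi$ equals $1$ under both hypotheses, so it is not a test.
\item Arm $2$ is never pulled, and arm $1$ is optimal under both of your instances, so $\E_P[\mathrm{Regret}_T]=0$.
\end{enumerate}
Hence your conclusion $\E_{P_0}[N_T(2)]\gtrsim\delta^{-2}$ is false for your construction whenever $T\ge 1/(2\delta)$. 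This is not confined to the small-$T$ regime you flagged, where the ``separate argument'' is also left unspecified. The paper's proof asserts the same ``standard testing implication''. Its instances, with arm $1$ weakly optimal under both $P_0$ and $P_1$, are defeated by the same algorithm, so following the paper would not have closed the gap.

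\textbf{There are two ways out.} If you want the intended ``coverage forces exploration'' content, restrict the class to thresholds in a compact interval $[q_{\mathrm{lo}},q_{\mathrm{hi}}]\subset(0,\infty)$ containing both oracle quantiles in its interior. This is what the projection in Theorem~\ref{thm:cdq} does.
\begin{enumerate}[leftmargin=1.5em,itemsep=1pt,topsep=2pt]
\item On that interval $\inf_{q}\{F_0(q)-F_1(q)\}\ge c_2\delta$. Your Markov, Le Cam (via Pinsker) and divergence-decomposition chain then goes through, and it does give the stronger $\delta^{-2}$ bound.
\item The regime $\delta\lesssim T^{-1/2}$ is handled by anti-concentration. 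In every round some arm is unplayed, and that arm's coverage indicator is never observed. Its conditional variance is bounded below, since its coverage probability stays away from $0$ and $1$. So some arm has $\E|\CCov_T(a)-(1-\alpha)|\gtrsim T^{-1/2}$, and the premise is infeasible.
\end{enumerate}
Alternatively, the statement exactly as quantified holds for a trivial reason that never uses the premise. Let $P_0$ have arms $\cN(0,1)$ and $\cN(\sqrt{2/\delta},1)$, and let $P_1$ swap them. The action $a_1$ has the same law under both instances and the gap is $2/\delta$, so every algorithm has $\max_{j}\E_{P_j}[\mathrm{Regret}_T]\ge 1/\delta$. That such a construction suffices shows the coverage premise carries no weight in the literal statement. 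Your intended argument needs the restricted threshold class.
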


Theorem~\ref{thm:pareto} formalizes the cost of statistical validity: tightening counterfactual coverage from $\delta$ to $\delta/2$ at most doubles the worst-case regret. The two-arm construction certifies that this tradeoff cannot be avoided by any threshold-based online CP coupled with any decision rule.

\subsection{From Prediction Sets to Decisions}\label{sec:decision-bridge}
The complementary upper bound translates counterfactual calibration into a decision-regret guarantee.

\begin{theorem}[Sharpness--coverage decomposition]\label{thm:coverage_regret_bridge}
Let $d_t(a):=\sup_{y\in\cC_t(a)}\ell(a,y)$ be the worst-case loss inside $\cC_t(a)$ and let $\bar a_t\in\argmin_{a\in\cA} d_t(a)$ be the robust exploitation action (ties broken arbitrarily). The learner plays $a_t=\bar a_t$ with probability $1-\varepsilon$ and explores uniformly with probability $\varepsilon$; the played action is also the logged action. Assume:
\begin{enumerate}[leftmargin=1.5em,itemsep=1pt,topsep=2pt]
\item[(i)] $\ell(a,\cdot)$ is $L$-Lipschitz on $\cY$.
\item[(ii)] $\cC_t(a)$ is non-empty, and $\ell(a,y)-d_t(a)\leq B$ for all $y\notin\cC_t(a)$.
\item[(iii)] each exploration round adds at most $\Delta_{\max}$ pseudo-regret relative to playing $\bar a_t$.
\end{enumerate}
Let $\cC_t^*(a)$ be the oracle prediction sets at level $1-\alpha$, $d_t^*(a):=\sup_{y\in\cC_t^*(a)}\ell(a,y)$, $a_t^*\in\argmin_a d_t^*(a)$, and define the set discrepancy
\begin{align}\label{eq:eta-set}
    \eta_t^{\mathrm{set}}:=\max_{a\in\cA} d_H(\cC_t(a),\cC_t^*(a)),
\end{align}
where $d_H$ is the Hausdorff distance on $\cY$. The robust pseudo-regret
\begin{align}\label{eq:robust-pseudoregret}
    \widetilde R_T := \sum_{t=1}^T \bigl[\ell(a_t,Y_t(a_t))- d_t^*(a_t^*)\bigr]
\end{align}
satisfies
\begin{align}\label{eq:regret-bridge}
\E[\widetilde R_T]
\;\leq\;
\underbrace{L\sum_{t=1}^T \E[\eta_t^{\mathrm{set}}]}_{\text{sharpness}}
+\underbrace{\alpha B T+B T\,\E|\MCov_T-(1-\alpha)|}_{\text{logged coverage}}
+\underbrace{\varepsilon T\Delta_{\max}}_{\text{exploration}}.
\end{align}
\end{theorem}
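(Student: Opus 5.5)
We decompose each round's robust pseudo-regret into three pieces: a sharpness term comparing learned and oracle sets, a coverage term charging rounds where the realized outcome escapes the learned set, and an exploration term. The starting identity is
\begin{align*}
\ell(a_t,Y_t(a_t))-d_t^*(a_t^*)
&=\bigl[\ell(a_t,Y_t(a_t))-d_t(a_t)\bigr]
+\bigl[d_t(a_t)-d_t(\bar a_t)\bigr]
+\bigl[d_t(\bar a_t)-d_t^*(a_t^*)\bigr].
\end{align*}
We then bound the three brackets separately and sum over $t$.

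\textbf{Sharpness bracket.} For the third bracket, use the optimality of $\bar a_t$ for $d_t$ to get $d_t(\bar a_t)\le d_t(a_t^*)$. Then $d_t(a_t^*)-d_t^*(a_t^*)\le L\,d_H(\cC_t(a_t^*),\cC_t^*(a_t^*))\le L\eta_t^{\mathrm{set}}$. This holds because, for $L$-Lipschitz $\ell(a,\cdot)$, the suprema over two nonempty sets differ by at most $L$ times their Hausdorff distance: every $y\in\cC_t$ has a point of $\cC_t^*$ within $d_H+\epsilon$. Summing and taking expectations produces the sharpness term.

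\textbf{Coverage bracket.} For the first bracket, split on coverage. If $Y_t\in\cC_t(a_t)$, then $\ell(a_t,Y_t)\le d_t(a_t)$ by definition of the supremum. Otherwise assumption (ii) bounds it by $B$. Hence the first bracket is at most $B\,\1\{Y_t\notin\cC_t(a_t)\}$. Summing gives $BT(1-\MCov_T)\le BT\alpha+BT|\MCov_T-(1-\alpha)|$, which after expectation is exactly the logged-coverage term. This is where $\MCov_T$, rather than counterfactual coverage, enters naturally, since only the played outcome is realized.

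\textbf{Exploration bracket.} The second bracket vanishes on exploitation rounds, where $a_t=\bar a_t$. On exploration rounds, which occur with probability $\varepsilon$ independently of the past given $\cH_{t-1},X_t$, it is at most $\Delta_{\max}$ by (iii). I read (iii) as bounding the increment over $\bar a_t$ in the robust criterion, together with its coverage-charging counterpart already absorbed above. Taking conditional expectation then gives $\varepsilon\Delta_{\max}$ per round, and the tower property sums these to $\varepsilon T\Delta_{\max}$.

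\textbf{Main obstacle.} The delicate step is the exploration accounting. The coverage bound via (ii) charges $\ell-d_t(a_t)$ for the played action, whether exploratory or not, so exploration must not be double-counted. Assumption (iii) also has to be interpreted consistently with the $d_t$-difference decomposition. If (iii) is instead meant in terms of true pseudo-regret, I would restructure the argument: condition on the exploration indicator first, apply the first two bounds only on exploitation rounds, and bound exploration rounds wholesale by $\Delta_{\max}$. The coverage indicator sum over a subset of rounds is still at most the full sum, so the stated bound survives either way.
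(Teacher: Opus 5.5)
Your proposal is correct and is essentially the paper's proof. The paper likewise uses $d_t(\bar a_t)\le d_t(a_t^*)\le d_t^*(a_t^*)+L\eta_t^{\mathrm{set}}$ from the Lipschitz--Hausdorff bound, charges $B\,\1\{Y_t(a_t)\notin\cC_t(a_t)\}$ via (ii), adds $\Delta_{\max}$ on exploration rounds, and sums to $T(1-\MCov_T)$; your bracket $d_t(a_t)-d_t(\bar a_t)\le\Delta_{\max}$ is the reading of (iii) that makes the paper's pathwise sum valid, since that sum keeps the sharpness and played-action coverage terms on exploration rounds too. The one soft spot is your fallback remark: if (iii) bounded the true-loss gap $\ell(a_t,Y_t(a_t))-\ell(\bar a_t,Y_t(\bar a_t))$, an exploration round would still owe the regret of $\bar a_t$ itself, which involves the miscoverage of the \emph{unplayed} action $\bar a_t$ rather than a term of $\MCov_T$, so the bound does not survive ``either way'' without an extra argument or a larger exploration constant.
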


Assumption~(ii) holds whenever $\ell$ is bounded on $\cY$, and more generally after replacing $\ell$ with a clipped version $\ell\wedge B$. For unbounded losses such as squared loss under Gaussian outcomes, the theorem applies post-truncation, with the residual tail error controlled separately. The decomposition~\eqref{eq:regret-bridge} isolates three sources of decision loss. The exploration term is fixed by design and the logged-coverage term is controlled by any method satisfying Theorem~\ref{thm:marginal}. The sharpness term is where counterfactual calibration enters, since biased calibration produces persistently wide or narrow sets on rarely selected actions.

The bridge uses logged coverage for the realized action and counterfactual calibration through the sharpness term. Condition~\eqref{eq:coverage-transfer} is only needed when replacing the logged term by a counterfactual coverage bound.

To convert threshold error into set error and counterfactual coverage error into logged coverage error, we use two auxiliary conditions. On the Gaussian constructions of Theorem~\ref{thm:impossibility}, with the counterfactual score density bounded above and bounded away from zero near $q^*(a)$, the threshold error and the set discrepancy are equivalent up to constants:
\begin{align}\label{eq:threshold-set-transfer}
    c_H\max_a|\hat q_t(a)-q^*(a)|
    \;\leq\; \eta_t^{\mathrm{set}} \;\leq\;
    C_H\max_a|\hat q_t(a)-q^*(a)|,
\end{align}
for some $0<c_H\leq C_H<\infty$. We additionally assume the actual-action coverage transfer
\begin{align}\label{eq:coverage-transfer}
    \E|\MCov_T-(1-\alpha)|
    \;\leq\;
    C_{\mathrm{tr}}\,
    \E\!\left[\max_a|\CCov_T(a)-(1-\alpha)|\right]
    +O(T^{-1/2}),
\end{align}
which holds automatically when the played-action distribution does not reweight contexts within an action.

\begin{corollary}[Comparison of decision regret]\label{cor:regret-pair}
Under Theorem~\ref{thm:coverage_regret_bridge} on the instance of Theorem~\ref{thm:impossibility} with $\rho>0$ and the bi-Lipschitz condition~\eqref{eq:threshold-set-transfer}:

\textbf{(a) Logged-quantile calibrators.} Any per-action calibrator with logged-quantile fixed point, including per-arm ACI under the convention of Theorem~\ref{thm:marginal}, satisfies
\begin{align}\label{eq:cor-a-sharpness}
    \E\!\left[\frac{1}{T}\sum_{t=1}^T\eta_t^{\mathrm{set}}\right]
    \;\geq\; c'(\alpha)\rho-O(T^{-1/2}).
\end{align}
If the decision rule additionally has a positive local margin converting set error into pseudo-regret,
\begin{align}\label{eq:decision-margin}
    \E[\widetilde R_T]\;\geq\; c_{\mathrm{dec}}L\sum_{t=1}^T\E[\eta_t^{\mathrm{set}}]-C_{\mathrm{dec}}/\gamma
\end{align}
for constants $c_{\mathrm{dec}},C_{\mathrm{dec}}>0$, then $\E[\widetilde R_T]\geq\Omega(L\rho T)-O(1/\gamma)$.

\textbf{(b) PW-OCP.} Under Assumption~\ref{ass:regularity} and the coverage-transfer condition~\eqref{eq:coverage-transfer}, projected PW-OCP with $\gamma^\star=c_\gamma\sqrt{\pi_{\min}/T}$ satisfies
\begin{align}\label{eq:pwocp-decomp}
\E[\widetilde R_T]
\;\leq\;
\alpha B T+\varepsilon T\Delta_{\max}
+O\!\bigl(LT^{3/4}\pi_{\min}^{-1/4}\sqrt{\log(KT)}\bigr)
+O\!\bigl(B\sqrt{T\log K/\pi_{\min}}\bigr).
\end{align}
\end{corollary}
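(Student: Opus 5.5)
The plan is to prove both parts by plugging the earlier threshold-level results into the decomposition of Theorem~\ref{thm:coverage_regret_bridge}, using the bi-Lipschitz transfer~\eqref{eq:threshold-set-transfer} to move between threshold error and set discrepancy.

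For part (a), start from the impossibility instance of Theorem~\ref{thm:impossibility}. On that instance the logged and counterfactual $(1-\alpha)$ quantiles differ by $|q_a^{\log}-q^*(a)|\geq c_0(\alpha)\rho$ for at least one action $a$. This gap is exactly what the variance-shift construction produces, and it is the content of Corollary~\ref{cor:sharpness-comparison}. For a calibrator whose fixed point is $q_a^{\log}$, the triangle inequality gives, for that action,
\begin{align*}
|\hat q_t(a)-q^*(a)|\geq c_0(\alpha)\rho-|\hat q_t(a)-q_a^{\log}|.
\end{align*}
Averaging over $t$, taking expectations and using $\varepsilon_T=O(T^{-1/2})$ yields $\E[\frac1T\sum_t\max_a|\hat q_t(a)-q^*(a)|]\geq c_0\rho-O(T^{-1/2})$. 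The left inequality in~\eqref{eq:threshold-set-transfer} then gives~\eqref{eq:cor-a-sharpness} with $c'=c_Hc_0$. Substituting this into the assumed margin condition~\eqref{eq:decision-margin} gives $\E[\widetilde R_T]\geq c_{\mathrm{dec}}L T(c'\rho-O(T^{-1/2}))-C_{\mathrm{dec}}/\gamma=\Omega(L\rho T)-O(1/\gamma)$. The $-O(L\sqrt T)$ term is absorbed once $\rho\gg T^{-1/2}$, which is the non-vacuous regime already identified for Theorem~\ref{thm:impossibility}.

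Part (b) bounds the three terms of~\eqref{eq:regret-bridge} separately.

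\textbf{Sharpness term.} By the right inequality in~\eqref{eq:threshold-set-transfer}, $\eta_t^{\mathrm{set}}\leq C_H\eta_t$. Theorem~\ref{thm:cdq} then gives $\frac1T\sum_t\eta_t\leq C(T\pi_{\min})^{-1/4}\sqrt{\log(KT/\delta)}$ with probability $1-\delta$. To turn this into an expectation, use $\delta=1/T$. On the failure event, the projection onto the compact $\mathcal I_a$ keeps every threshold inside a bounded set $\widetilde{\mathcal Q}_a$, so $\eta_t$ is uniformly bounded there and that event contributes $O(1)$. The result is $L\sum_t\E[\eta_t^{\mathrm{set}}]=O(LT^{3/4}\pi_{\min}^{-1/4}\sqrt{\log(KT)})$.

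\textbf{Logged-coverage term.} Apply the coverage transfer~\eqref{eq:coverage-transfer} to reduce to $\E[\max_a|\CCov_T(a)-(1-\alpha)|]$, then bound that with Theorem~\ref{thm:pwocp}.
\begin{itemize}
\item Under projection the transient term $|\alpha_1-\alpha_{T+1}|/(\gamma T)$ is $O(1/(\gamma T))=O(1/\sqrt{T\pi_{\min}})$ for $\gamma^\star$.
\item The Freedman term is $O(\sqrt{\log(K/\delta)/(T\pi_{\min})})$.
\end{itemize}
Converting high probability to expectation with $\delta=1/T$ (coverage gaps are bounded by $1$) and multiplying by $BT$ gives $O(B\sqrt{T\log(KT)/\pi_{\min}})$. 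If $\delta$ is chosen adaptively, or the tail-integration form of Freedman is used, this becomes the stated $O(B\sqrt{T\log K/\pi_{\min}})$. The $O(T^{-1/2})$ remainder in~\eqref{eq:coverage-transfer} adds $O(B\sqrt T)$, which is dominated.

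\textbf{Remaining terms.} $\alpha BT$ and $\varepsilon T\Delta_{\max}$ carry over unchanged.

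\textbf{Main obstacle.} The step I expect to need care is whether the telescoping identity of Theorem~\ref{thm:pwocp} survives projection onto $\mathcal I_a$. Projection breaks exact telescoping. I would bound the accumulated projection corrections as follows: whenever a projection fires, the iterate sits at the boundary of $\mathcal I_a$, while the fixed point $\alpha^\star(a)$ lies strictly inside. The standard projected-ACI argument (as invoked in the last sentence of Theorem~\ref{thm:drocp}) then shows the corrections sum to $O(1/\gamma)$, which preserves the $O(1/(\gamma T))$ transient.
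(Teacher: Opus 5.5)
\textbf{Part (a): valid, and more direct than the paper.} Your part (b) follows the paper's proof. It uses Theorem~\ref{thm:cdq} with the upper side of~\eqref{eq:threshold-set-transfer} for the sharpness term, and~\eqref{eq:coverage-transfer} with Theorem~\ref{thm:pwocp} for the logged-coverage term. Part (a) takes a different route. The paper starts from the coverage lower bound of Theorem~\ref{thm:impossibility}. It converts that into a threshold gap via $|F_{\mathrm{true}}^{(a)}(\hat q_t(a))-F_{\mathrm{true}}^{(a)}(q^*(a))|\le f_{\max}|\hat q_t(a)-q^*(a)|$, and only then applies the lower side of~\eqref{eq:threshold-set-transfer}. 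You skip the coverage detour and use the triangle inequality against $q_a^{\log}$. So you need neither the density upper bound nor control of $\CCov_T(a)$ around its conditional mean. The one input you must supply is the population quantile gap itself. Do not cite Corollary~\ref{cor:sharpness-comparison} for it: that result concerns time-averaged calibrator thresholds and is itself derived from Theorem~\ref{thm:impossibility}. Instead rerun the implicit-function computation for $F_{\mathrm{cf}}$. It gives $q^*(0)=q_0+q_0\Delta/2+O(\Delta^2)$ against $q_0^{\log}=q_0+q_0\Delta/4+O(\Delta^2)$, a gap of order $\Delta\asymp\rho$. Your caveat that the $-O(L\sqrt T)$ term is absorbed only when $\rho\gg T^{-1/2}$ is correct; the paper passes over it.

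\textbf{Part (b): the projection step is a genuine gap, and your repair does not close it.} The paper has the same gap. Its proof of (b) invokes Theorem~\ref{thm:pwocp}, which rests on exact telescoping of the unprojected recursion, for projected PW-OCP without comment. The last sentence of Theorem~\ref{thm:drocp} only bounds the endpoint displacement $|\alpha_1(a)-\alpha_{T+1}(a)|$ by the interval width. It presupposes telescoping and says nothing about the corrections $c_t$ in $\alpha_{t+1}(a)=\alpha_t(a)+\gamma(\alpha-Z_t(a))+c_t$, which enter the coverage bound as $\sum_t c_t/(\gamma T)$. An interior fixed point does not by itself limit how often the boundary is hit.

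What you need, and what Assumption~\ref{ass:regularity} supports, is a confinement argument giving $\sum_t|c_t|=O(1)$ with high probability:
\begin{enumerate}
\item Let $d_a:=\mathrm{dist}(\alpha^\star(a),\partial\mathcal I_a)$. After a burn-in $t_0\asymp\log(KT/\delta)/(\pi_{\min}\lambda_{\min}^2d_a^2)$, the bound~\eqref{eq:dkw-qt} gives $C|r_t(a)|\le\lambda_{\min}d_a/4$.
\item Hence the conditional drift $\gamma\{h(\alpha_t(a))+\xi_t(a)\}$ points toward $\alpha^\star(a)$ with magnitude at least $\gamma\lambda_{\min}d_a/4$ whenever $|\alpha_t(a)-\alpha^\star(a)|\ge d_a/2$.
\item Apply a Freedman bound uniform over time windows, as in Lemma~\ref{lem:alpha-bound}, but now weigh the noise against this linear drift. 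When $\sqrt{T\pi_{\min}}\gtrsim\log(KT/\delta)/(\lambda_{\min}d_a^2)$, the iterate re-enters the inner band within $O(1/\gamma)$ rounds and, with high probability, never reaches $\partial\mathcal I_a$ afterwards.
\item Since $|c_t|\le\gamma|\alpha-Z_t(a)|$, the corrections total $O(\gamma t_0+1)=O(1)$. This restores the $O(1/(\gamma T))$ transient you want.
\end{enumerate}

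\textbf{Two smaller points, both shared with the paper.} First, on the failure event of Theorem~\ref{thm:cdq}, projecting $\alpha_t(a)$ fixes the quantile level. It does not confine the empirical quantile of unbounded Gaussian scores to $\widetilde{\mathcal Q}_a$. Bound $\E[\eta_t\,\1_{\mathcal E^c}]$ instead via $\hat q_t(a)\le\max_{\tau<t,\,a_\tau=a}s(X_\tau,Y_\tau;a)$, a second-moment bound on Gaussian maxima, and Cauchy--Schwarz. You also need a finite default threshold for rounds with an empty buffer. Second, the $\varepsilon$-greedy robust rule of Theorem~\ref{thm:coverage_regret_bridge} logs the action it plays, so the logger is history-dependent. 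That conflicts with the time-invariance in Assumption~\ref{ass:regularity}(a) required by Theorem~\ref{thm:cdq}. Taking the corollary's hypotheses at face value, as you do, sidesteps this conflict but does not resolve it.
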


\begin{corollary}[Sharpness comparison]\label{cor:sharpness-comparison}
Under Theorem~\ref{thm:impossibility}'s instance with $\rho>0$, any per-action logged-quantile calibrator (including per-arm ACI) has time-averaged threshold error
\begin{align}\label{eq:cor-sharpness-lb}
    \E\!\left[\frac{1}{T}\sum_{t=1}^T\eta_t\right] \;\geq\; c_\eta(\alpha)\rho-O(T^{-1/2}),
\end{align}
whereas projected PW-OCP under Assumption~\ref{ass:regularity} achieves
\begin{align}\label{eq:cor-sharpness-ub}
    \E\!\left[\frac{1}{T}\sum_{t=1}^T\eta_t\right] \;\leq\; O\!\bigl((T\pi_{\min})^{-1/4}\sqrt{\log(KT)}\bigr).
\end{align}
\end{corollary}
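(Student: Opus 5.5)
The statement has two parts: a lower bound for logged-quantile calibrators, Eq.~\eqref{eq:cor-sharpness-lb}, and an upper bound for projected PW-OCP, Eq.~\eqref{eq:cor-sharpness-ub}. I would prove them separately.

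\textbf{Upper bound.} This part is essentially a restatement of Theorem~\ref{thm:cdq}, converted from high probability to expectation. Let $\mathcal E_\delta$ be the event on which \eqref{eq:cdq-rate} holds. On $\mathcal E_\delta$, $\frac1T\sum_t\eta_t\le C(T\pi_{\min})^{-1/4}\sqrt{\log(KT/\delta)}$. Off $\mathcal E_\delta$, I use the projection onto the compact set $\mathcal I_a$. It forces $\hat q_t(a)$ into a bounded neighbourhood of $\mathcal Q_a$ (up to the empirical-quantile error), so $\eta_t$ is bounded by a constant $M$ depending on the instance. One caveat: the empirical quantile over a finite buffer could lie outside $\widetilde{\mathcal Q}_a$. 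I would cap this either by the range of the score on the Gaussian instance or by a crude sub-Gaussian tail, whose contribution is $O(1/T)$.

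I then choose $\delta=1/T$. This gives
\[
\E\Bigl[\tfrac1T\textstyle\sum_t\eta_t\Bigr]\le C(T\pi_{\min})^{-1/4}\sqrt{2\log(KT)}+M/T,
\]
and the second term is of lower order, which yields \eqref{eq:cor-sharpness-ub}.

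\textbf{Lower bound.} I would reuse the construction behind Theorem~\ref{thm:impossibility}: the two-context Gaussian variance-shift instance, on which $|q_a^{\log}-q^*(a)|\ge c_1(\alpha)\rho$ for some action $a$. The proof of Theorem~\ref{thm:impossibility} produces exactly this separation of order $\Theta(\rho)$ between the logged and counterfactual quantiles.

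For any calibrator, the triangle inequality gives
\[
\eta_t\ge|\hat q_t(a)-q^*(a)|\ge|q_a^{\log}-q^*(a)|-|\hat q_t(a)-q_a^{\log}|.
\]
Averaging over $t$ and taking expectations gives
\[
\E\Bigl[\tfrac1T\textstyle\sum_t\eta_t\Bigr]\ge c_1(\alpha)\rho-\varepsilon_T,
\]
with $\varepsilon_T$ as in \eqref{eq:eps-T}.

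It remains to show that "logged-quantile calibrator" entails $\varepsilon_T=O(T^{-1/2})$. I take this to be the meaning of a logged-quantile fixed point, as in the "in particular" clause of Theorem~\ref{thm:impossibility}. For per-arm ACI I would justify it as follows.
\begin{itemize}
\item Under stationary logging with positivity, each arm receives $\Theta(T\pi_{\min})$ samples.
\item The empirical logged quantile then concentrates at rate $N^{-1/2}$ by the DKW inequality together with density bounds near $q_a^{\log}$.
\item The ACI fixed-point analysis places the iterate's quantile level at $1-\alpha$ up to a vanishing error.
\end{itemize}
This gives \eqref{eq:cor-sharpness-lb} with $c_\eta=c_1$.

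\textbf{Main obstacle.} The hard step is this last one: establishing $\varepsilon_T=O(T^{-1/2})$ rigorously for per-arm ACI. The telescoping bound of Theorem~\ref{thm:marginal} controls average coverage, not the threshold path itself. I would therefore state it as the defining property of the class, and for ACI invoke the Lemma~\ref{lem:alpha-bound}-type boundedness combined with local density regularity of the logged score law. A secondary technical point is making sure the constant $M$ used in the upper-bound argument is finite on the Gaussian instance, which the projection onto $\mathcal I_a$ ensures up to the tail term noted above.
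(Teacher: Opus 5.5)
Your argument is correct. The upper bound matches the paper's: apply Theorem~\ref{thm:cdq} with $\delta$ polynomially small in $T$ (the paper takes $\delta=T^{-2}$), then control $\eta_t$ on the failure event. On a Gaussian instance the score range is unbounded, so ``the range of the score'' does not cap anything there. Your tail/moment argument is what does the work. Via Cauchy--Schwarz it contributes order $T^{-1/2}$ rather than $1/T$, which is still lower order.

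For the lower bound you take a more direct route than the paper.
\begin{itemize}
\item \textbf{The paper's route.} It uses Theorem~\ref{thm:impossibility} as a black box. It starts from the persistent counterfactual \emph{coverage} gap and converts it back into a threshold gap through the upper density bound $|F_{\mathrm{true}}^{(a)}(\hat q_t(a))-F_{\mathrm{true}}^{(a)}(q^*(a))|\le f_{\max}|\hat q_t(a)-q^*(a)|$. The impossibility proof had already obtained that coverage gap from thresholds by a Lipschitz bound, so this is a round trip from thresholds to coverage and back. Starting from the stated form $\E[\max_a|\CCov_T(a)-(1-\alpha)|]$, it must also absorb the martingale fluctuation between realized and conditional coverage.
\item \textbf{Your route.} You extract the underlying population fact $|q_a^{\log}-q^*(a)|\gtrsim\rho$ and apply the triangle inequality directly at the threshold level. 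This gives $c_1\rho-\varepsilon_T$ with coefficient one on $\varepsilon_T$ and no probabilistic step.
\end{itemize}
Make explicit where the separation comes from. The impossibility proof gives $|F_{\mathrm{cf}}(q_\Delta;\Delta)-(1-\alpha)|\ge c_0\Delta$, hence $|q_\Delta-q^*|\ge c_0\Delta/f_{\max}$. Equivalently, the same implicit-function computation applied to $F_{\mathrm{cf}}$ yields $q^*=q_0+\tfrac{q_0}{2}\Delta+O(\Delta^2)$, versus $q_\Delta=q_0+\tfrac{q_0}{4}\Delta+O(\Delta^2)$. The paper's route transfers verbatim to any instance where a coverage gap has been proved. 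Yours is shorter and gives a cleaner constant on this instance.

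On your ``main obstacle'': the paper likewise simply assumes the $O(T^{-1/2})$ threshold-tracking term. Treating $\varepsilon_T=O(T^{-1/2})$ as the defining property of the logged-quantile class therefore matches what the paper does. Do not rely on your sketch for per-arm ACI, though. With a constant step size the ACI index fluctuates around its fixed point at scale $\sqrt{\gamma}$, so $\varepsilon_T$ is of order $\sqrt{\gamma}$ and does not vanish as $T\to\infty$.

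You can weaken what is needed by applying Jensen before the triangle inequality:
$\E[\tfrac{1}{T}\sum_t|\hat q_t(a)-q^*(a)|]\ge|q_a^{\log}-q^*(a)|-|\tfrac{1}{T}\sum_t\E[\hat q_t(a)]-q_a^{\log}|$.
Then only the \emph{signed} average tracking bias has to be small, not the mean absolute tracking error.
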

Corollaries~\ref{cor:regret-pair} and~\ref{cor:sharpness-comparison} together quantify the cost of endogenous logging: action-conditional calibration produces $\Omega(\rho)$ persistent threshold error and $\Omega(L\rho T)$ regret, while PW-OCP reduces both to sublinear estimation terms governed by Theorems~\ref{thm:pwocp} and~\ref{thm:cdq}.

\section{Proofs for Section~\ref{sec:impossibility}}\label{app:impossibility}
% For Theorem~\ref{thm:marginal}, we use the unclipped ACI recursion \citep{gibbs2021adaptive}. Let $\mathrm{err}_t=\1\{Y_t\notin\cC_t(a_t)\}$ and update
% \begin{align}
% \alpha_{t+1}=\alpha_t+\gamma(\alpha-\mathrm{err}_t).
% \end{align}
% The empirical quantile $\hat Q_t$ uses the boundary convention $\hat Q_t(x)=+\infty$ for $x\geq 1$ and $\hat Q_t(x)=-\infty$ for $x\leq 0$. Hence the induced prediction set is $\cY$ when $\alpha_t<0$ and empty when $\alpha_t>1$. The recursion is therefore forced back toward $[0,1]$, and the standard ACI envelope lemma \citep[Lemma~4.1]{gibbs2021adaptive} gives
% \begin{align}\label{eq:gc-bound}
% \alpha_t\in[-\gamma,1+\gamma]\qquad\text{for all }t.
% \end{align}
\subsection{Proof of Theorem~\ref{thm:marginal}}\label{app:marginal_proof}
\begin{proof}
The argument is the pathwise empirical-coverage bound of adaptive conformal inference \citep{gibbs2021adaptive}, adapted to the logged sequence. The boundary convention implies that if $\alpha_t<0$, then $\hat q_t=+\infty$, $\cC_t(a_t)=\cY$, and hence $\mathrm{err}_t=0$; if $\alpha_t>1$, then $\hat q_t=-\infty$, $\cC_t(a_t)=\emptyset$, and hence $\mathrm{err}_t=1$. Since $\alpha_1\in[0,1]$ and each update has magnitude at most $\gamma$, the standard ACI envelope argument gives
\begin{align}\label{eq:gc-bound}
    \alpha_t\in[-\gamma,1+\gamma]\qquad\text{for all }t .
\end{align}

Telescoping the recursion yields
\begin{align}
    \alpha_{T+1}
    = \alpha_1 + \sum_{t=1}^T \gamma(\alpha-\mathrm{err}_t).
\end{align}
Since $\MCov_T=1-T^{-1}\sum_{t=1}^T\mathrm{err}_t$, we have
\begin{align}
    \MCov_T-(1-\alpha)
    =\frac{1}{T}\sum_{t=1}^T(\alpha-\mathrm{err}_t)
    =\frac{\alpha_{T+1}-\alpha_1}{\gamma T}.
\end{align}
Using~\eqref{eq:gc-bound},
\begin{align}
    |\alpha_{T+1}-\alpha_1|
    \leq \max\{\alpha_1,1-\alpha_1\}+\gamma ,
\end{align}
which proves the claim. The proof is pathwise and uses no exchangeability, exogeneity, stationarity, or distributional assumption.
\end{proof}

\subsection{Proof of Theorem~\ref{thm:impossibility}}\label{app:impossibility-proof}
\begin{proof}
We give the construction for a small variance shift $\Delta>0$ and convert it to the endogeneity coefficient at the end. Let $\cA=\{0,1\}$ and $\cX=\{L,H\}$. Across rounds, draw $X_t$ i.i.d. uniformly on $\cX$, draw the potential outcomes conditionally independently given $X_t$, and draw the logged action from
\begin{align}
\pi(0\mid L)=\pi(1\mid H)=\frac{3}{4},
\qquad
\pi(1\mid L)=\pi(0\mid H)=\frac{1}{4}.
\end{align}
Thus Assumption~\ref{ass:pos} holds with $\pi_{\min}=1/4$, and Assumption~\ref{ass:nuc} holds because the logging draw depends only on $X_t$. Let $r=1+\Delta$ and set
\begin{align}
    X=L:\quad &Y(0)\sim\cN(5,1),\qquad Y(1)\sim\cN(5,r^2),\\
    X=H:\quad &Y(0)\sim\cN(5,r^2),\qquad Y(1)\sim\cN(5,1).
\end{align}
Use the standard absolute-residual score $s(x,y;a)=|y-5|$.

Let $G\sim\cN(0,1)$, let $F(q)=\Prob(|G|\leq q)$, and let $f(q)=F'(q)=2\phi(q)$ for $q>0$. Write $p=1-\alpha$ and $q_0=F^{-1}(p)$. We analyze arm $0$; arm $1$ is symmetric. Conditional on $a_t=0$, context $L$ has probability $3/4$ and context $H$ has probability $1/4$. Hence the logged and counterfactual score CDFs for arm $0$ are
\begin{align}
F_{\log}(q;\Delta)=\frac{3}{4}F(q)+\frac{1}{4}F\!\left(\frac{q}{r}\right), \qquad F_{\mathrm{cf}}(q;\Delta)=\frac{1}{2}F(q)+\frac{1}{2}F\!\left(\frac{q}{r}\right).
\end{align}

Let $q_\Delta=q_0^{\log}$ be the unique solution of $F_{\log}(q_\Delta;\Delta)=p$. Since $\partial_qF_{\log}(q_0;0)=f(q_0)>0$, the implicit-function theorem gives
\begin{align}
q_\Delta=q_0+\frac{q_0}{4}\Delta+O(\Delta^2).
\end{align}
For all sufficiently small $\Delta$, the population counterfactual coverage gap at the logged quantile is therefore at least $c_0(\alpha)\Delta$ for some $c_0(\alpha)>0$.

It remains to pass from the population logged quantile to the time-averaged random thresholds. Because $\hat q_t(0)$ is predictable, $Y_t(0)$ is fresh conditional on the past. Therefore
\begin{align}
\E[\CCov_T(0)]
    =\frac{1}{T}\sum_{t=1}^T \E\!\left[F_{\mathrm{cf}}(\hat q_t(0);\Delta)\right].
\end{align}
The family $F_{\mathrm{cf}}(\cdot;\Delta)$ has uniformly bounded density for all sufficiently small $\Delta$, so there is a finite constant $L$ such that
\begin{align}
\left|\E[\CCov_T(0)]-F_{\mathrm{cf}}(q_\Delta;\Delta)\right|\leq L\E\!\left[\frac{1}{T}\sum_{t=1}^T |\hat q_t(0)-q_\Delta|\right] \nonumber \leq L\varepsilon_T.
\end{align}
By Jensen's inequality and the definition of the maximum over actions,
\begin{align}
    \E\!\left[\max_{a\in\cA}|\CCov_T(a)-p|\right]
&\geq \E\!\left[|\CCov_T(0)-p|\right] \nonumber\\
&\geq \left|\E[\CCov_T(0)]-p\right| \nonumber\\
&\geq c_0(\alpha)\Delta-L\varepsilon_T.
\end{align}
Finally, the endogeneity coefficient of the construction is
\begin{align}
    \rho(\Delta)=\DTV(\cN(5,1),\cN(5,r^2)).
\end{align}
A direct Gaussian calculation gives
\begin{align}
    \rho(\Delta)=2\Phi(d_\Delta)-2\Phi(d_\Delta/r), \qquad d_\Delta=r\sqrt{\frac{2\log r}{r^2-1}},
\end{align}
and hence $\rho(\Delta)=2\phi(1)\Delta+O(\Delta^2)$ as $\Delta\downarrow0$. Thus $\rho(\Delta)=\Theta(\Delta)$, and for every sufficiently small target value of $\rho$ we may choose $\Delta$ so that the construction has endogeneity coefficient $\rho$. Absorbing the proportionality constant into $c(\alpha)$ gives
\begin{align}
\E\!\left[\max_{a\in\cA}|\CCov_T(a)-(1-\alpha)|\right]
\geq c(\alpha)\rho-L\varepsilon_T.
\end{align}
The stated $O(T^{-1/2})$ form follows when $\varepsilon_T=O(T^{-1/2})$.
\end{proof}

\subsection{Proof of Lemma~\ref{lem:unbiased}}\label{app:unbiased}

\begin{proof}
Let $\cF_{t-1}^X:=\sigma(\cH_{t-1},X_t)$. By predictability, $\cC_t(a)$ and
$\pi_t(a\mid X_t,\cH_{t-1})$ are fixed conditional on $\cF_{t-1}^X$, and positivity ensures that the inverse-propensity weight is well defined. By consistency,
\begin{align}
    \1\{a_t=a\}\1\{Y_t\notin\cC_t(a)\} = \1\{a_t=a\}\1\{Y_t(a)\notin\cC_t(a)\}.
\end{align}
Therefore
\begin{align}
    \E[Z_t(a)\mid \cF_{t-1}^X]&= \E\!\left[\frac{\1\{a_t=a\}}{\pi_t(a\mid X_t,\cH_{t-1})}\1\{Y_t(a)\notin\cC_t(a)\}
\,\middle|\,\cF_{t-1}^X\right] \nonumber
    \\&=\frac{\Prob(a_t=a\mid \cF_{t-1}^X)}{\pi_t(a\mid X_t,\cH_{t-1})}\Prob\bigl(Y_t(a)\notin\cC_t(a)\mid \cF_{t-1}^X,a_t=a\bigr).
\end{align}
Assumption~\ref{ass:nuc}, together with the $\cF_{t-1}^X$-measurability of $\cC_t(a)$, gives
\begin{align}
    \Prob\bigl(Y_t(a)\notin\cC_t(a)\mid \cF_{t-1}^X,a_t=a\bigr)=\Prob\bigl(Y_t(a)\notin\cC_t(a)\mid \cF_{t-1}^X\bigr).
\end{align}
The definition of the logging policy gives
\begin{align}
    \Prob(a_t=a\mid \cF_{t-1}^X)=\pi_t(a\mid X_t,\cH_{t-1}),
\end{align}
so the propensity factor cancels, yielding the claim.
\end{proof}

\subsection{Envelope for the Weighted Recursion}
The IPW recursion lacks the deterministic envelope of standard ACI: when $\alpha_t(a)>1$ and action $a$ is not logged, $Z_t(a)=0$ and the update drifts upward by $\gamma\alpha$ instead of being forced back below $1$. The following lemma replaces the deterministic envelope by a high-probability one; under the step-size choice $\gamma^\star\asymp\sqrt{\pi_{\min}/T}$, the transient constant is dominated by the martingale term.

\begin{lemma}[High-probability envelope for $\alpha_t$]\label{lem:alpha-bound}
Under Assumptions~\ref{ass:nuc}--\ref{ass:pos}, the iterates of Algorithm~\ref{alg:pwocp} satisfy, with probability at least $1-\delta$,
\begin{align}\label{eq:alpha-envelope}
    \max_{a\in\cA}\max_{1\leq t\leq T+1} |\alpha_t(a)-\alpha_1(a)|\leq 1+\frac{\gamma}{\pi_{\min}}+C\gamma\left\{\sqrt{\frac{T\log(KT/\delta)}{\pi_{\min}}}+\frac{\log(KT/\delta)}{\pi_{\min}}\right\},
\end{align}
for an absolute constant $C>0$. In particular, for the step-size choice $\gamma^\star=O(\sqrt{\pi_{\min}/T})$, up to logarithmic factors, the right-hand side is $O(\gamma\sqrt{T\log(KT/\delta)/\pi_{\min}})$.
\end{lemma}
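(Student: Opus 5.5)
The approach is to write the unrolled recursion as a drift term plus a martingale and to control the martingale uniformly in $t$ and $a$ with Freedman's inequality. For fixed $a$, unrolling \eqref{eq:pwocp-update} gives
\begin{align}
\alpha_{t+1}(a)-\alpha_1(a)=\gamma\sum_{s=1}^{t}\bigl(\alpha-\mu_s^*(a)\bigr)-\gamma\sum_{s=1}^{t}\xi_s(a),\qquad \xi_s(a):=Z_s(a)-\mu_s^*(a).
\end{align}
Here $\mu_s^*(a)$ is the counterfactual miscoverage probability \eqref{eq:mu-star}. By Lemma~\ref{lem:unbiased}, $\xi_s(a)$ is a martingale difference with respect to $\cF_s:=\sigma(\cH_s,X_{s+1})$. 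Assumption~\ref{ass:pos} gives $|\xi_s(a)|\le 1/\pi_{\min}$ and $\E[\xi_s(a)^2\mid\cF_{s-1}]\le \E[Z_s(a)^2\mid \cF_{s-1}]\le 1/\pi_{\min}$. Freedman's inequality, with the predictable variation bounded by $t/\pi_{\min}\le T/\pi_{\min}$, is applied for each fixed $(a,t)$. A union bound over the $KT$ pairs then shows that, with probability at least $1-\delta$, simultaneously for all $a,t$,
\begin{align}
\Bigl|\gamma\sum_{s=\tau}^{t}\xi_s(a)\Bigr|\le C\gamma\Bigl\{\sqrt{\tfrac{T\log(KT/\delta)}{\pi_{\min}}}+\tfrac{\log(KT/\delta)}{\pi_{\min}}\Bigr\}=:M .
\end{align}
To cover partial sums starting at an arbitrary $\tau$, I would take differences of two prefix sums; this doubles the constant and only changes $C$.

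The drift is not absolutely summable, so it cannot be bounded directly. Instead I would use the boundary convention \eqref{eq:aci-boundary}, which gives the drift the right sign whenever $\alpha_s(a)$ leaves $[0,1]$. If $\alpha_s(a)>1$, then $\hat q_s(a)=-\infty$, so $\cC_s(a)=\emptyset$ and $\mu_s^*(a)=1$, and the drift $\gamma(\alpha-1)$ is negative. If $\alpha_s(a)<0$, then $\cC_s(a)=\cY$, so $\mu_s^*(a)=0$, and the drift $\gamma\alpha$ is positive. For the upper envelope, fix $t$ with $\alpha_{t+1}(a)>1$ and let $\tau$ be the last time $\le t$ with $\alpha_\tau(a)\le 1$. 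On $s\in(\tau,t]$ every drift term is $\le 0$. The single jump at time $\tau$ is at most $\gamma\alpha\le\gamma$ upward, because $Z_\tau\ge 0$. Hence
\begin{align}
\alpha_{t+1}(a)\le 1+\gamma+\Bigl|\gamma\sum_{s=\tau+1}^{t}\xi_s(a)\Bigr|\le 1+\gamma+M .
\end{align}
The lower envelope is handled symmetrically, taking the last time $\alpha_\tau(a)\ge 0$. A downward jump there can be as large as $\gamma/\pi_{\min}$, since $Z_\tau\le 1/\pi_{\min}$; this is what produces the $\gamma/\pi_{\min}$ term in \eqref{eq:alpha-envelope}. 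While $\alpha_s(a)$ stays in $[0,1]$, the bound $|\alpha_s(a)-\alpha_1(a)|\le 1$ holds trivially because $\alpha_1(a)\in[0,1]$. Combining the three cases gives \eqref{eq:alpha-envelope}.

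The main obstacle is making the excursion argument rigorous under a single probability event. The excursion start $\tau$ is random and depends on the path, so Freedman cannot be applied at a stopping time directly. This is why I would bound the martingale uniformly over all intervals $[\tau,t]$ through the prefix-sum union bound; that costs only $\log(KT/\delta)$, which is the log factor in the statement. The final claim is then arithmetic. With $\gamma\asymp\sqrt{\pi_{\min}/T}$ and $T\pi_{\min}\gtrsim\log$, we get $\gamma/\pi_{\min}=(T\pi_{\min})^{-1/2}\lesssim1$ and $\gamma\log/\pi_{\min}\lesssim \gamma\sqrt{T\log/\pi_{\min}}$. Up to logarithmic factors, the right-hand side is therefore dominated by $O(\gamma\sqrt{T\log(KT/\delta)/\pi_{\min}})$, which is $O(\sqrt{\log})$.
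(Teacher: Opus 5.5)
Your proposal is correct and follows essentially the same route as the paper. Both arguments decompose the recursion into the drift $\gamma(\alpha-\mu_t^*(a))$ plus a martingale, control prefix sums uniformly over $(a,t)$ via Freedman and a union bound (doubling the constant to handle excursion intervals), and run an excursion argument in which the boundary convention signs the drift outside $[0,1]$. Your one deviation is a harmless refinement: you bound the upward entry jump by $\gamma\alpha\le\gamma$, where the paper uses $\gamma/\pi_{\min}$, and you correctly attribute the $\gamma/\pi_{\min}$ term to downward entries into $(-\infty,0)$.
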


\begin{proof}
Fix $a\in\cA$ and let $\cF_t$ be the natural filtration. Define
\begin{align}
    \mu_t^*(a)&:=\Prob(Y_t(a)\notin\cC_t(a)\mid \cF_{t-1},X_t),\\
    M_t(a)&:=Z_t(a)-\mu_t^*(a),\\
    d_t(a)&:=\gamma(\alpha-\mu_t^*(a)).
\end{align}
Lemma~\ref{lem:unbiased} gives $\E[M_t(a)\mid \cF_{t-1},X_t]=0$. Moreover $|M_t(a)|\leq 1/\pi_{\min}$ and
\begin{align}
    \mathrm{Var}(M_t(a)\mid \cF_{t-1},X_t)\leq \frac{1}{\pi_{\min}}.
\end{align}
Freedman's inequality \citep{freedman1975tail} and a union bound over $a$ and $t$ imply that, on an event $\mathcal E$ with probability at least $1-\delta$,
\begin{align}\label{eq:uniform-W}
    \max_{a\in\cA}\max_{1\leq u\leq T}\left|\gamma\sum_{t=1}^{u}M_t(a)\right|\leq C_0\gamma\left\{\sqrt{\frac{T\log(KT/\delta)}{\pi_{\min}}} +\frac{\log(KT/\delta)}{\pi_{\min}}\right\} {:}=B_T .
\end{align}
It remains to control the predictable drift. The boundary convention implies that if $\alpha_t(a)>1$, then $\cC_t(a)=\emptyset$ and $\mu_t^*(a)=1$, so $d_t(a)=-\gamma(1-\alpha)<0$. If $\alpha_t(a)<0$, then $\cC_t(a)=\cY$ and $\mu_t^*(a)=0$, so $d_t(a)=\gamma\alpha>0$. Thus outside $[0,1]$ the predictable drift always points back toward the interval.

Consider any excursion above $1$. Let $\tau$ be its entrance time and let $\sigma=\inf\{u>\tau:\alpha_u(a)\leq 1\}\wedge(T+1)$ be its exit time. Then $\alpha_{\tau-1}(a)\leq 1$, and the one-step jump bound gives $\alpha_\tau(a)\leq 1+\gamma/\pi_{\min}$. For every $\tau\leq t<\sigma$, the boundary convention gives $\mu_t^*(a)=1$, so $d_t(a)=-\gamma(1-\alpha)\leq 0$. Hence, for every $u\in\{\tau,\ldots,\sigma\}$,
\begin{align}
    \alpha_u(a)\leq 1+\frac{\gamma}{\pi_{\min}}+\left|\gamma\sum_{t=\tau}^{u-1}M_t(a)\right|\leq 1+\frac{\gamma}{\pi_{\min}}+2B_T .
\end{align}
The same argument applies to each excursion above $1$. Excursions below $0$ are symmetric: if $\tau$ is an entrance time into $(-\infty,0)$ and $\sigma$ is the next hitting time of $[0,\infty)$, then $\mu_t^*(a)=0$ on the excursion and the predictable drift is $d_t(a)=\gamma\alpha\geq0$. Thus $\alpha_u(a)\geq -\gamma/\pi_{\min}-2B_T$ throughout the excursion. Since $\alpha_1(a)\in[0,1]$, inequality~\eqref{eq:alpha-envelope} follows after adjusting constants.
\end{proof}

\begin{remark}
Lemma~\ref{lem:alpha-bound} is the IPW analogue of the deterministic ACI envelope \citep[Lemma~4.1]{gibbs2021adaptive}. The additional factor $\pi_{\min}^{-1}$ appears because a single weighted error can move the recursion by order $\gamma/\pi_{\min}$.
\end{remark}

\section{Proofs for the Theoretical Analysis}\label{app:theory}

\subsection{Proof of Theorem~\ref{thm:pwocp}}\label{app:thm-pwocp-proof}
\begin{proof}
The identity~\eqref{eq:pwocp-telescope} follows by summing the PW-OCP recursion
\begin{align}
    \alpha_{t+1}(a)=\alpha_t(a)+\gamma(\alpha-Z_t(a))
\end{align}
over $t=1,\ldots,T$ and rearranging. For the high-probability coverage bound, use Lemma~\ref{lem:unbiased} to write $Z_t(a)-\mu_t^*(a)$ as a martingale difference, where
\begin{align}
    \mu_t^*(a):=\Prob(Y_t(a)\notin\cC_t(a)\mid X_t,\cH_{t-1}).
\end{align}
The same Freedman argument used in the proof of Theorem~\ref{thm:drocp} below, with $\hat\mu_t\equiv0$ and $\hat\pi_t\equiv\pi_t$, gives
\begin{align}
    \left|
    \frac{1}{T}\sum_{t=1}^T Z_t(a)
    -
    \frac{1}{T}\sum_{t=1}^T\mu_t^*(a)
    \right|
    \leq
    C\left\{
    \sqrt{\frac{\log(K/\delta)}{T\pi_{\min}}}
    +\frac{\log(K/\delta)}{T\pi_{\min}}
    \right\}
\end{align}
uniformly over $a\in\cA$. Finally,
\begin{align}
    1-\CCov_T(a)
    =
    \frac{1}{T}\sum_{t=1}^T\mu_t^*(a)
    +
    \frac{1}{T}\sum_{t=1}^T
    \left\{
    \1\{Y_t(a)\notin\cC_t(a)\}-\mu_t^*(a)
    \right\},
\end{align}
and Azuma--Hoeffding \citep{azuma1967weighted} controls the last bounded martingale term by $O(\sqrt{\log(K/\delta)/T})$, which is absorbed into the displayed bound. Combining this with~\eqref{eq:pwocp-telescope} proves~\eqref{eq:ccov-freedman}. The transient statement follows from Lemma~\ref{lem:alpha-bound}.
\end{proof}

\subsection{Proof of Theorem~\ref{thm:drocp}}\label{app:thm-drocp-proof}
\begin{proof}
Fix an action $a\in\cA$ and let $\mathcal G_{t-1}^X:=\sigma(\cH_{t-1},X_t)$. Write
\begin{align}
    p_t(a):=\pi_t(a\mid X_t,\cH_{t-1}),\qquad \hat p_t(a):=\hat\pi_t(a\mid X_t,\cH_{t-1}).
\end{align}
Let $\mu_t^*(a):=\Prob(Y_t(a)\notin\cC_t(a)\mid X_t,\cH_{t-1})$. 
By assumption, conditional on $\mathcal G_{t-1}^X$, the quantities $\hat\mu_t(a,X_t)$, $\hat p_t(a)$, and $\cC_t(a)$ are fixed.

The recursion telescopes exactly:
\begin{align}\label{eq:dr-telescope}
    \frac{1}{T}\sum_{t=1}^T Z_t^{\mathrm{DR}}(a)-\alpha
    =
    \frac{\alpha_1(a)-\alpha_{T+1}(a)}{\gamma T}.
\end{align}

Define the centered martingale difference
\begin{align}
    M_t^{\mathrm{DR}}(a):=Z_t^{\mathrm{DR}}(a)-\E[Z_t^{\mathrm{DR}}(a)\mid \mathcal G_{t-1}^X],
\end{align}
and the predictable DR bias
\begin{align}
    b_t(a):=\E[Z_t^{\mathrm{DR}}(a)\mid \mathcal G_{t-1}^X]-\mu_t^*(a).
\end{align}
Then
\begin{align}\label{eq:dr-mu-alpha}
\left|\frac{1}{T}\sum_{t=1}^T\mu_t^*(a)-\alpha\right|\leq\frac{|\alpha_1(a)-\alpha_{T+1}(a)|}{\gamma T}+\left|\frac{1}{T}\sum_{t=1}^T M_t^{\mathrm{DR}}(a)\right|+\frac{1}{T}\sum_{t=1}^T |b_t(a)|.
\end{align}

We first bound the martingale term. Since
$\hat\mu_t(a,X_t)\in[0,1]$ and $\hat p_t(a)\ge\hat\pi_{\min}$,
the DR signal has bounded increments:
\begin{align}
|M_t^{\mathrm{DR}}(a)|\le \frac{C}{\hat\pi_{\min}}.
\end{align}

Moreover,
\begin{align}
\mathrm{Var}(M_t^{\mathrm{DR}}(a)\mid \mathcal G_{t-1}^X)&\leq C\,\frac{p_t(a)}{\hat p_t(a)^2}\leq C\,\frac{\kappa}{\hat\pi_{\min}},
\end{align}
where the last inequality uses $p_t(a)/\hat p_t(a)\le\kappa$.
Freedman's inequality gives, with probability at least $1-\delta/(2K)$,
\begin{align}\label{eq:dr-martingale-bound}
\left|
\frac{1}{T}\sum_{t=1}^T M_t^{\mathrm{DR}}(a)
\right|
\le
C\sqrt{\frac{\kappa\log(K/\delta)}{T\hat\pi_{\min}}}
+
C\frac{\log(K/\delta)}{T\hat\pi_{\min}}.
\end{align}
The second term is dominated in the usual regime
$T\hat\pi_{\min}\gtrsim \log(K/\delta)$; otherwise it can be kept explicitly.

Next, compute the predictable bias. By sequential ignorability,
\begin{align}
\E[Z_t^{\mathrm{DR}}(a)\mid \mathcal G_{t-1}^X]&=\hat\mu_t(a,X_t)+\frac{p_t(a)}{\hat p_t(a)}
\left\{\mu_t^*(a)-\hat\mu_t(a,X_t)\right\} \nonumber
\\&=\mu_t^*(a)+\left\{\hat\mu_t(a,X_t)-\mu_t^*(a)\right\}
\left\{1-\frac{p_t(a)}{\hat p_t(a)}\right\}.
\end{align}
Therefore
\begin{align}
|b_t(a)|&\le \frac{|\hat\mu_t(a,X_t)-\mu_t^*(a)|\,|\hat p_t(a)-p_t(a)|}{\hat\pi_{\min}}.
\end{align}
Averaging and applying Cauchy--Schwarz gives
\begin{align}\label{eq:dr-product}
\frac{1}{T}\sum_{t=1}^T |b_t(a)|
\le
\frac{\epsilon_{\mu,a}\epsilon_{\pi,a}}{\hat\pi_{\min}}.
\end{align}

Finally write
\begin{align}
1-\CCov_T(a)
=
\frac{1}{T}\sum_{t=1}^T
\1\{Y_t(a)\notin\cC_t(a)\}
=
\frac{1}{T}\sum_{t=1}^T\mu_t^*(a)
+
\frac{1}{T}\sum_{t=1}^T D_t(a),
\end{align}
where
\begin{align}
    D_t(a):=\1\{Y_t(a)\notin\cC_t(a)\}-\mu_t^*(a)
\end{align}
is a bounded martingale difference. Azuma--Hoeffding gives, with probability at least
$1-\delta/(2K)$,
\begin{align}\label{eq:dr-counterfactual-noise}
\left|\frac{1}{T}\sum_{t=1}^T D_t(a)\right|\le C\sqrt{\frac{\log(K/\delta)}{T}}.
\end{align}

Combining \eqref{eq:dr-mu-alpha}, \eqref{eq:dr-martingale-bound}, \eqref{eq:dr-product}, and \eqref{eq:dr-counterfactual-noise}, and then union bounding over $a\in\cA$, yields
\begin{align}
\max_a|\CCov_T(a)-(1-\alpha)|&\leq \max_a\frac{|\alpha_1(a)-\alpha_{T+1}(a)|}{\gamma T}
\nonumber
\\&\quad
+C\left\{\sqrt{\frac{\kappa\log(K/\delta)}{T\hat\pi_{\min}}}
+\frac{\log(K/\delta)}{T\hat\pi_{\min}}\right\}
+\max_a\frac{\epsilon_{\mu,a}\epsilon_{\pi,a}}{\hat\pi_{\min}},
\end{align}
after enlarging $C$ to absorb the bounded counterfactual-noise term. This proves the theorem. If the DR recursion is projected or clipped to a fixed interval,
the transient displacement term is $O(1/(\gamma T))$.
\end{proof}

\subsection{Proof of Theorem~\ref{thm:lower}}\label{app:thm-lower-proof}

\begin{proof}
The proof is a Fano reduction. Take $\cA=\{1,\ldots,K\}$, no context, and score $s(y;a)=|y|$. Let the logging policy be fixed with
\begin{align}
    \pi(k)=\pi_{\min}\quad\text{for }k\leq K-1,
    \qquad
    \pi(K)=1-(K-1)\pi_{\min}.
\end{align}
This policy satisfies positivity because $\pi_{\min}\leq1/K$.

Let $F$ be the CDF of $|\cN(0,1)|$ and let $q_\alpha:=F^{-1}(1-\alpha)$. For each $v\in\{1,\ldots,K-1\}$, define hypothesis $H_v$ as follows: arm $v$ has distribution $\cN(0,(1+\Delta)^2)$ and every other arm has distribution $\cN(0,1)$. The counterfactual $(1-\alpha)$ score quantile of arm $v$ is $(1+\Delta)q_\alpha$, while the corresponding quantile of each unperturbed arm is $q_\alpha$. For sufficiently small $\Delta$,
\begin{align}\label{eq:fano-gap}
    F(q_\alpha)-F\!\left(\frac{q_\alpha}{1+\Delta}\right)\geq c_\alpha\Delta,
\end{align}
where $c_\alpha>0$ depends only on $\alpha$.

We first record the testing implication. For a threshold algorithm, define the null-scale average coverage of arm $a$ by
\begin{align}
    M_a:=\frac{1}{T}\sum_{t=1}^T F(\hat q_t(a)).
\end{align}
Under $H_v$, the conditional counterfactual coverage of an unperturbed arm $a\neq v$ is $M_a$, whereas the conditional counterfactual coverage of the perturbed arm is $T^{-1}\sum_{t=1}^T F(\hat q_t(v)/(1+\Delta))$. By Jensen's inequality, a bound on the expected realized coverage error also bounds the expected error of these conditional coverages. Thus calibration of the unperturbed arms forces $M_a$ to be close to $1-\alpha$ for $a\neq v$, while calibration of the perturbed arm forces $M_v$ to be larger by order $\Delta$. More precisely, the local separation in~\eqref{eq:fano-gap} and monotonicity of $F$ imply the following standard quantile-identification fact: if
\begin{align}
    \E_v\!\left[\max_a|\CCov_T(a)-(1-\alpha)|\right]\leq c_\alpha\Delta/8,
\end{align}
then the estimator $\hat V:=\arg\max_{a\leq K-1}M_a$ identifies $v$ with error probability at most $1/3$ after reducing $c_\alpha$ by an absolute factor. Hence a uniformly smaller coverage error would yield a reliable test of the perturbed rare arm.

We now show that such a test is impossible when $\Delta$ is too small. Let $V$ be uniform on $\{1,\ldots,K-1\}$ and let $\cD_T$ denote the logged data. For $v\neq w$, the hypotheses $H_v$ and $H_w$ differ only through observations of arms $v$ and $w$, each sampled with probability $\pi_{\min}$. For $\Delta\leq1$,
\begin{align}
\mathrm{KL}\!\left(\cN(0,1)\,\middle\|\,\cN(0,(1+\Delta)^2)\right)
    &=
    \frac{1}{2}\left\{\frac{1}{(1+\Delta)^2}+2\log(1+\Delta)-1\right\}
    \nonumber\\
    &\leq C\Delta^2 .
\end{align}

The chain rule for KL divergence gives
\begin{align}
    I(V;\cD_T)\leq C T\pi_{\min}\Delta^2 .
\end{align}
Fano's inequality implies
\begin{align}
    \inf_{\hat V}\Prob(\hat V\neq V)
    \geq
    1-\frac{C T\pi_{\min}\Delta^2+\log 2}{\log(K-1)}.
\end{align}
Let $r_T:=\sqrt{\frac{\log K}{T\pi_{\min}}}$. 
Choose constants $r_0>0$ and $\Delta_0\in(0,1)$ small enough so that the local separation in~\eqref{eq:fano-gap} holds for all $\Delta\leq\Delta_0$ and $C\Delta_0^2/r_0^2$ is sufficiently small.

If $r_T\le r_0$, take $\Delta=c r_T$ with $c>0$ small enough.
Then $\Delta\le\Delta_0$, so the local quantile separation in~\eqref{eq:fano-gap} is valid.
Moreover,
\begin{align}
    I(V;\cD_T)\le C T\pi_{\min}\Delta^2= Cc^2\log K.
\end{align}
Choosing $c$ small enough makes Fano's lower bound larger than $1/2$, contradicting the testing implication unless the worst-arm expected coverage error is at least $c_\alpha' r_T$.

If $r_T>r_0$, take $\Delta=\Delta_0$. Then $T\pi_{\min}<\log K/r_0^2$, and hence
\begin{align}
    I(V;\cD_T)\le C T\pi_{\min}\Delta_0^2\le C\Delta_0^2\log K/r_0^2.
\end{align}
By the choice of $\Delta_0$ and $r_0$, Fano's lower bound is again larger than $1/2$. The separation in~\eqref{eq:fano-gap} is now a positive constant, so the minimax counterfactual coverage gap is at least a constant. Since $r_T>r_0$, this constant lower bound is also of order $\min\{1,r_T\}$ after reducing constants. Combining the two regimes yields
\begin{align}
    \inf_{\mathcal A_{\mathrm{thr}}}\sup_{P,\pi}
    \E\!\left[\max_{a\in\cA}|\CCov_T(a)-(1-\alpha)|\right]
    \geq
    c'_\alpha
    \min\left\{1,\sqrt{\frac{\log K}{T\pi_{\min}}}\right\}.
\end{align}
This proves the theorem.
\end{proof}

\subsection{Proof of Theorem~\ref{thm:cdq}}\label{app:cdq}

\begin{proof}
Fix $a\in\cA$ and suppress the arm index. Let $F_b:=F_{\mathrm{bias}}^{(a)}$, $F_c:=F_{\mathrm{true}}^{(a)}$, and $Q_b:=F_b^{-1}$. Define
\begin{align}
    \alpha^\star :=1-F_b(q^\star),\qquad q^\star:=F_c^{-1}(1-\alpha).
\end{align}

Then $Q_b(1-\alpha^\star)=q^\star$. Because $\alpha_t(a)$ is projected onto $\mathcal I_a$ and $\alpha^\star\in\mathcal I_a$, the projection map is non-expansive around $\alpha^\star$.

Let $N_t(a):=\sum_{\tau<t}\1\{a_\tau=a\}$. Under stationarity, positivity, and Assumption~\ref{ass:nuc}, the logged scores for action $a$ are i.i.d. draws from $F_b$ conditional on being logged, and $N_t(a)$ has mean at least $(t-1)\pi_{\min}$. A Chernoff bound for $N_t(a)$, the Dvoretzky--Kiefer--Wolfowitz inequality, and the lower density bound on $\widetilde{\mathcal Q}_a$ imply that, with probability at least $1-\delta/3$,

\begin{align}\label{eq:dkw-qt}
    \max_{a\in\cA}\max_{1\leq t\leq T}
    |r_t(a)| \leq C\sqrt{\frac{\log(KT/\delta)}{t\pi_{\min}}}, 
\end{align}
where
\begin{align}
    r_t(a):=\hat q_t(a)-Q_b^{(a)}(1-\alpha_t(a)).
\end{align}
The finitely many early rounds with no logged sample are covered by increasing $C$, since the projected quantile range is bounded. On the same event, after increasing $C$ if necessary, $\hat q_t(a)$ remains in $\widetilde{\mathcal Q}_a$ whenever the density bounds are invoked.

Consider the population drift, for $\beta\in\mathcal I_a$,
\begin{align}
    h(\beta):=F_c(Q_b(1-\beta))-(1-\alpha).
\end{align}
It satisfies $h(\alpha^\star)=0$, and
\begin{align}
    h'(\beta)=-\frac{f_c(Q_b(1-\beta))}{f_b(Q_b(1-\beta))}.
\end{align}
Assumption~\ref{ass:regularity} implies $-\lambda_{\max}\leq h'(\beta)\leq -\lambda_{\min}<0$ on $\mathcal I_a$. Hence, with $e_t(a):=\alpha_t(a)-\alpha^\star(a)$,

\begin{align}\label{eq:drift-contraction}
    e_t(a)h(\alpha_t(a))\leq -\lambda_{\min}e_t(a)^2.
\end{align}
Let $m_t(a):=\E[Z_t(a)\mid\cH_{t-1}]$. Since $\hat q_t(a)$ is predictable and the counterfactual score distribution is $F_c$, the update drift differs from $h(\alpha_t(a))$ only through the empirical quantile error $r_t(a)$. By the upper density bound,
\begin{align}
     \left|
    m_t(a)-\{\alpha-h(\alpha_t(a))\}
    \right|
    \leq C|r_t(a)|.
\end{align}
The projected update can therefore be written, using non-expansiveness of projection, as
\begin{align}
    e_{t+1}(a)^2 \leq \left[e_t(a)+\gamma \{h(\alpha_t(a))+\xi_t(a)+M_t(a)\}\right]^2,
\end{align}
where $|\xi_t(a)|\leq C|r_t(a)|$, $M_t(a):=m_t(a)-Z_t(a)$ is a martingale-difference sequence, $|M_t(a)|\leq C/\pi_{\min}$, and
\begin{align}
    \mathrm{Var}(M_t(a)\mid\cH_{t-1})\leq C/\pi_{\min}.
\end{align}

Let $V_t(a):=e_t(a)^2$. Expanding the previous display, using~\eqref{eq:drift-contraction}, and absorbing the $e_t(a)\xi_t(a)$ term gives, for a small enough numerical constant in the choice of $\gamma$,
\begin{align}\label{eq:cdq-lyapunov}
    \E[V_{t+1}(a)\mid\cH_{t-1}]
    \leq
    (1-c\gamma)V_t(a)+C\gamma r_t(a)^2+C\frac{\gamma^2}{\pi_{\min}}.
\end{align}
Applying the same Lyapunov argument to $\max_a V_t(a)$ and using Freedman's inequality for the martingale remainder yields, on an event of probability at least $1-\delta$,
\begin{align}\label{eq:mean-square-alpha}
    \frac{1}{T}\sum_{t=1}^T\max_{a\in\cA}e_t(a)^2
    \leq
    C\left\{
    \frac{1}{\gamma T}
    +\frac{\gamma\log(KT/\delta)}{\pi_{\min}}
    +\frac{\log^2(KT/\delta)}{T\pi_{\min}}
    \right\}.
\end{align}
The first term is the initial-condition cost, the second is the stochastic-approximation variance, and the third comes from the empirical quantile error in~\eqref{eq:dkw-qt}.

Finally, the inverse-function theorem and the density lower bound give
\begin{align}
    |\hat q_t(a)-q^\star(a)|
    \leq
    |r_t(a)|+\frac{|e_t(a)|}{f_{\min}}.
\end{align}
Therefore,
\begin{align}
    \frac{1}{T}\sum_{t=1}^T\eta_t
    &\leq
    \frac{1}{T}\sum_{t=1}^T\max_a |r_t(a)|
    +\frac{1}{f_{\min}}\frac{1}{T}\sum_{t=1}^T\max_a |e_t(a)| \nonumber\\
    &\leq
    C\sqrt{\frac{\log(KT/\delta)}{T\pi_{\min}}}
    +C\left\{
    \frac{1}{\gamma T}
    +\frac{\gamma\log(KT/\delta)}{\pi_{\min}}
    +\frac{\log^2(KT/\delta)}{T\pi_{\min}}
    \right\}^{1/2},
\end{align}
where the last step uses~\eqref{eq:dkw-qt},~\eqref{eq:mean-square-alpha}, and Jensen's inequality. Choosing $\gamma^\star=c_\gamma\sqrt{\pi_{\min}/T}$ gives
\begin{align}
    \frac{1}{T}\sum_{t=1}^T\eta_t\leq C(T\pi_{\min})^{-1/4}\sqrt{\log(KT/\delta)}.
\end{align}
This proves the theorem after increasing $C$.
\end{proof}

\subsection{Proof of Theorem~\ref{thm:pareto}}\label{app:pareto}
\begin{proof}
Consider two context-free arms. Arm $1$ has $Y_t(1)\sim\cN(0,1)$ under both hypotheses. Arm $2$ has distribution $\cN(0,1)$ under $P_0$ and $\cN(0,(1+\Delta)^2)$ under $P_1$. The loss is $\ell(a,y)=y^2$ and the score is $s(y;a)=|y|$.

Let $F$ be the CDF of $|\cN(0,1)|$, let $p=1-\alpha$, and let $q_\alpha=F^{-1}(p)$. Under $P_1$, the oracle score quantile for arm $2$ is $(1+\Delta)q_\alpha$. For small $\Delta$, the local density of $F$ at $q_\alpha$ gives a coverage separation of order $\Delta$ between thresholds calibrated to $P_0$ and thresholds calibrated to $P_1$.

Fix $\delta\in(0,\delta_0)$ and set $\Delta=c_0\delta$ with $c_0$ large enough. Let $\hat q_t(2)$ be the predictable threshold reported for arm $2$ and define
\begin{align}
    M_T:=\frac{1}{T}\sum_{t=1}^T F(\hat q_t(2)).
\end{align}
Under $P_0$, $M_T$ is the conditional counterfactual coverage of arm $2$ given the reported thresholds. Under $P_1$, the corresponding conditional coverage is
\begin{align}
    M_T^\Delta:=\frac{1}{T}\sum_{t=1}^T F\!\left(\frac{\hat q_t(2)}{1+\Delta}\right).
\end{align}
Uniform expected coverage accuracy at level $\delta$ implies, by Markov's inequality and the local separation above, that the statistic $M_T$ distinguishes $P_0$ from $P_1$ with success probability bounded away from one half. This is the standard testing implication of simultaneous calibration at two separated quantiles.

Let $N_2=\sum_{t=1}^T\1\{a_t=2\}$. Only observations from arm $2$ distinguish the two hypotheses, so
\begin{align}
    \mathrm{KL}(P_1^T\|P_0^T)
    =
    \E_{P_1}[N_2]\,
    \mathrm{KL}\!\left(\cN(0,(1+\Delta)^2)\,\middle\|\,\cN(0,1)\right)
    \leq C\E_{P_1}[N_2]\Delta^2.
\end{align}
By the Bretagnolle--Huber form of Le Cam's lemma, any test with success probability bounded away from one half requires $\mathrm{KL}(P_1^T\|P_0^T)\ge c$. Hence
\begin{align}
    \E_{P_1}[N_2]\geq \frac{c}{\Delta^2}.
\end{align}

Under $P_1$, arm $1$ is optimal for squared loss and each pull of arm $2$ incurs excess risk
\begin{align}
    \E_{P_1}[Y_t(2)^2]-\E_{P_1}[Y_t(1)^2]=(1+\Delta)^2-1\geq 2\Delta .
\end{align}
Therefore
\begin{align}
    \E_{P_1}[\mathrm{Regret}_T]
    \geq
    2\Delta\,\E_{P_1}[N_2]
    \geq
    \frac{c}{\Delta}
    \geq
    \frac{c'}{\delta},
\end{align}
after substituting $\Delta=c_0\delta$. This proves the claimed fixed-accuracy tradeoff.
\end{proof}

\subsection{Proof of Theorem~\ref{thm:coverage_regret_bridge}}
\label{app:bridge}

\begin{proof}
For every action $a$, the Lipschitz assumption gives
\begin{align}\label{eq:hausdorff-risk}
    |d_t(a)-d_t^*(a)|
    \leq
    L\,d_H(\cC_t(a),\cC_t^*(a))
    \leq
    L\eta_t^{\mathrm{set}} .
\end{align}
On exploitation rounds, $a_t=\bar a_t$ and $\bar a_t$ minimizes $d_t(a)$, so
\begin{align}
    d_t(a_t)=d_t(\bar a_t)\leq d_t(a_t^*)\leq d_t^*(a_t^*)+L\eta_t^{\mathrm{set}},
\end{align}
where the second inequality uses~\eqref{eq:hausdorff-risk}. If $Y_t(a_t)\in\cC_t(a_t)$, then $\ell(a_t,Y_t(a_t))\leq d_t(a_t)$. If the outcome is outside the set, assumption~(ii) adds at most $B$. Therefore
\begin{align}
    \ell(a_t,Y_t(a_t))-d_t^*(a_t^*)
    \leq
    L\eta_t^{\mathrm{set}}
    +B\1\{Y_t(a_t)\notin\cC_t(a_t)\}
\end{align}
on exploitation rounds. On exploration rounds, assumption~(iii) adds at most $\Delta_{\max}$ relative to the exploitation action. Summing pathwise, with $I_t^{\mathrm{exp}}$ denoting the exploration indicator, gives
\begin{align}
    \widetilde R_T
    \leq
    L\sum_{t=1}^T\eta_t^{\mathrm{set}}
    +B\sum_{t=1}^T\1\{Y_t(a_t)\notin\cC_t(a_t)\}
    +\Delta_{\max}\sum_{t=1}^T I_t^{\mathrm{exp}}.
\end{align}
The middle sum equals $T(1-\MCov_T)$. Taking expectations and using $\E[\sum_t I_t^{\mathrm{exp}}]=\varepsilon T$ yields
\begin{align}
    \E[\widetilde R_T]
    &\leq
    L\sum_{t=1}^T\E[\eta_t^{\mathrm{set}}]
    +BT(1-\E[\MCov_T])
    +\varepsilon T\Delta_{\max} \nonumber\\
    &\leq
    L\sum_{t=1}^T\E[\eta_t^{\mathrm{set}}]
    +\alpha BT
    +BT\,\E|\MCov_T-(1-\alpha)|
    +\varepsilon T\Delta_{\max}.
\end{align}
This proves~\eqref{eq:regret-bridge}.
\end{proof}

\subsection{Proof of Corollary~\ref{cor:regret-pair}}
\label{app:regret}
\begin{proof}
For the Gaussian score distributions in Theorem~\ref{thm:impossibility}, the counterfactual score density is locally bounded above by $f_{\max}<\infty$ near the oracle quantile. Hence
\begin{align}
    |F_a(\hat q_t(a))-F_a(q_a^*)|
    \leq
    f_{\max}|\hat q_t(a)-q_a^*|.
\end{align}
Theorem~\ref{thm:impossibility} gives a persistent coverage gap of order $\rho$ for logged-quantile calibrators, up to their $O(T^{-1/2})$ threshold-tracking error. The previous display converts this into a threshold gap, and~\eqref{eq:threshold-set-transfer} converts the threshold gap into
\begin{align}
    \E\!\left[\frac{1}{T}\sum_{t=1}^T\eta_t^{\mathrm{set}}\right]
    \geq
    c'(\alpha)\rho-O(T^{-1/2}).
\end{align}
This proves part (a). The additional lower bound on $\E[\widetilde R_T]$ follows directly from the stated local decision-margin condition.

For part (b), Theorem~\ref{thm:cdq} gives
\begin{align}
    \sum_{t=1}^T\E[\eta_t]
    \leq
    O\!\left(T^{3/4}\pi_{\min}^{-1/4}\sqrt{\log(KT)}\right)
\end{align}
after taking $\delta=T^{-2}$ in the high-probability bound and using boundedness of the projected threshold range on the failure event. By the upper side of~\eqref{eq:threshold-set-transfer}, the sharpness term in~\eqref{eq:regret-bridge} is therefore $O(LT^{3/4}\pi_{\min}^{-1/4}\sqrt{\log(KT)})$. Theorem~\ref{thm:pwocp} and the coverage-transfer condition~\eqref{eq:coverage-transfer} give
\begin{align}
    T\,\E|\MCov_T-(1-\alpha)|
    \leq
    O\!\left(\sqrt{\frac{T\log K}{\pi_{\min}}}\right).
\end{align}
Plugging these two estimates into~\eqref{eq:regret-bridge} yields~\eqref{eq:pwocp-decomp}.
\end{proof}

\subsection{Proof of Corollary~\ref{cor:sharpness-comparison}}
\label{app:sharpness-comparison}

\begin{proof}
For logged-quantile calibrators, Theorem~\ref{thm:impossibility} gives a counterfactual coverage gap of order $\rho$ on the Gaussian construction, up to the $O(T^{-1/2})$ threshold-tracking term. Since the counterfactual score density is locally bounded above near $q^*(a)$,
\begin{align}
    |F_{\mathrm{true}}^{(a)}(\hat q_t(a))-F_{\mathrm{true}}^{(a)}(q^*(a))|
    \leq
    f_{\max}|\hat q_t(a)-q^*(a)|.
\end{align}
Thus the persistent coverage gap implies
\begin{align}
    \E\!\left[\frac{1}{T}\sum_{t=1}^T\eta_t\right]
    \geq
    c_\eta(\alpha)\rho-O(T^{-1/2}).
\end{align}

For projected PW-OCP, Theorem~\ref{thm:cdq} gives, with probability at least $1-\delta$,
\begin{align}
    \frac{1}{T}\sum_{t=1}^T\eta_t
    \leq
    C(T\pi_{\min})^{-1/4}\sqrt{\log(KT/\delta)}.
\end{align}
Taking $\delta=T^{-2}$ and using boundedness of the projected threshold range on the failure event yields
\begin{align}
    \E\!\left[\frac{1}{T}\sum_{t=1}^T\eta_t\right]
    \leq
    O\!\left((T\pi_{\min})^{-1/4}\sqrt{\log(KT)}\right).
\end{align}
This proves the comparison.
\end{proof}

\section{Experimental Details}
\label{app:experiments}

\subsection{Benchmark Design}

\textbf{Common settings.} Unless otherwise stated, all experiments use $\alpha=0.1$, $T=20{,}000$, and $10$ random seeds. Baselines retain their default step size $\gamma=0.005$. PW-OCP and DR-OCP use the rate-optimal step size $\gamma^\star=\sqrt{\pi_{\min}/T}$ from Theorem~\ref{thm:pwocp}, with $\pi_{\min}$ taken from the logging policy of each benchmark. Figure error bars are mean $\pm 1.96$ standard errors over seeds.

DR-OCP fits the contextual outcome model $\hat\mu_t(a,X_t)$ in~\eqref{eq:dr-signal} as an exponential moving average (decay $0.01$) of the per-arm-per-context miscoverage indicator. Both PW-OCP and DR-OCP clip the iterate $\alpha_t(a)$ to $[-0.5,1.5]$, realizing Lemma~\ref{lem:alpha-bound}'s envelope on every sample path. The nonconformity score is $s(x,y;a)=|y|$ for the mean-shift simulation and $s(x,y;a)=|y-\hat\mu(x,a)|$ for the variance-shift and multi-arm simulations.

\textbf{Baseline implementation notes.} ACI follows \citep{gibbs2021adaptive} unchanged. ConformalPID implements the proportional$+$tan-saturated-integrator update of \citep{angelopoulos2023conformal} (Scorecaster omitted in pure-bandit settings). SAOCP follows \citep{bhatnagar2023improved} with per-arm scale auto-tuned from the first observed score, matching the salesforce/online\_conformal reference. AgACI follows the BOA pinball loss in \citep{zaffran2022adaptive} on a $k=8$ logspace bank of step sizes. NEx-CP follows \citep{barber2022conformal} with the recommended geometric decay weight. FACI follows \citep{gibbs2022conformal} with $k=8$ alpha-streams and sigma-mixing. ECI follows the quantile-space update of \citep{wu2025error}; we expose it through the alpha-index parameterization with the corresponding sign-corrected error-quantification term. Online COPP is the sliding-window inverse-propensity weighted quantile of Algorithm~\ref{alg:online-copp}.

\textbf{Mean-shift simulation.} Two actions and two contexts, $\cX=\{L,H\}$. Here $\Delta$ denotes the mean-shift magnitude. For $X=L$, $Y(0)\sim\cN(0,1)$ and $Y(1)\sim\cN(\Delta,1)$; the roles swap for $X=H$. The logging policy is $\varepsilon$-greedy with $\varepsilon=0.05$, giving $\pi_{\min}=\varepsilon/2=0.025$.

\textbf{Variance-shift simulation.} Two actions and two contexts with conditional mean fixed at $\mu=5$ throughout. Here $\Delta$ denotes the standard-deviation increment. In context $L$ the standard deviations are $(\sigma_0,\sigma_1)=(1,1+\Delta)$; in context $H$ they swap to $(1+\Delta,1)$. The residual score $|y-\hat\mu(x,a)|$ is paired with a risk-averse logging policy that prefers the lower-variance arm in each context, with an $\varepsilon$-greedy exploration floor of $\varepsilon=0.05$.

\textbf{Multi-arm scaling.} Symmetric setup with $K$ contexts and $K$ actions. Context $k$ favors action $k$ (mean $0$), while every other action has mean-shift magnitude $\Delta=2$. We sweep $K\in\{2,5,10,20,50\}$. The logging policy is $\varepsilon$-greedy: with probability $\varepsilon=0.05$ a uniformly random action is chosen, otherwise the deterministic best arm. This gives $\pi_{\min}=\varepsilon/K$, e.g., $\pi_{\min}\approx 0.0025$ at $K=20$.

\textbf{Propensity misspecification.} The learner uses a biased propensity model
\begin{align}
    \hat\pi=(1-\xi)\pi+\xi/K,\qquad \xi\in[0,1],
\end{align}
while data are generated under the true logging policy $\pi$. The master table reports the severe case $\xi=1$ (uniform propensity model); Figure~\ref{fig:propensity} sweeps $\xi$.

\textbf{Rate verification and prediction-set quality.} For the convergence-rate study, $K=10$ is fixed while $T$ ranges over $\{2{,}000, 5{,}000, 10{,}000, 20{,}000, 50{,}000\}$ and $\pi_{\min}$ ranges over $\{0.003, 0.005, 0.01, 0.02, 0.05\}$. The prediction-set quality study reports the running counterfactual coverage gap and the threshold error $\bar\eta_T:=T^{-1}\sum_t|\hat q_t(a)-q^*(a)|$ from~\eqref{eq:cdq}, restricted to rounds with non-degenerate $\cC_t(a)$.

\subsection{Open Bandit and Financial Logging}
The Open Bandit experiments use the ZOZOTOWN Open Bandit Dataset \citep{saito2020large}. For both Women and Men campaigns, we keep the top $15$ items by Bernoulli Thompson sampling frequency and aggregate user features into four contexts. Random-policy data estimate item-context click rates, which are then used to construct continuous potential rewards $Y(a)\sim\cN(\mu(x,a),\sigma(x,a)^2)$ while preserving observed heterogeneity. The DJIA experiment uses a momentum-biased logging policy over AAPL, MSFT, BA, and KO across $T=2{,}500$ trading days.

\subsection{Additional Appendix Figures}
\begin{figure}[h]
\hspace{1.5em}%
    \begin{subfigure}{0.32\textwidth}\centering\small{Mean shift}\end{subfigure}\hfill
    \begin{subfigure}{0.32\textwidth}\centering\small{Variance shift}\end{subfigure}\hfill
    \begin{subfigure}{0.32\textwidth}\centering\small{Multi-arm scaling}\end{subfigure}
    \\[2pt]
    \rowlabel{\qquad\qquad MCov}%
     \begin{subfigure}[b]{0.32\textwidth}\centering
        \includegraphics[width=\textwidth]{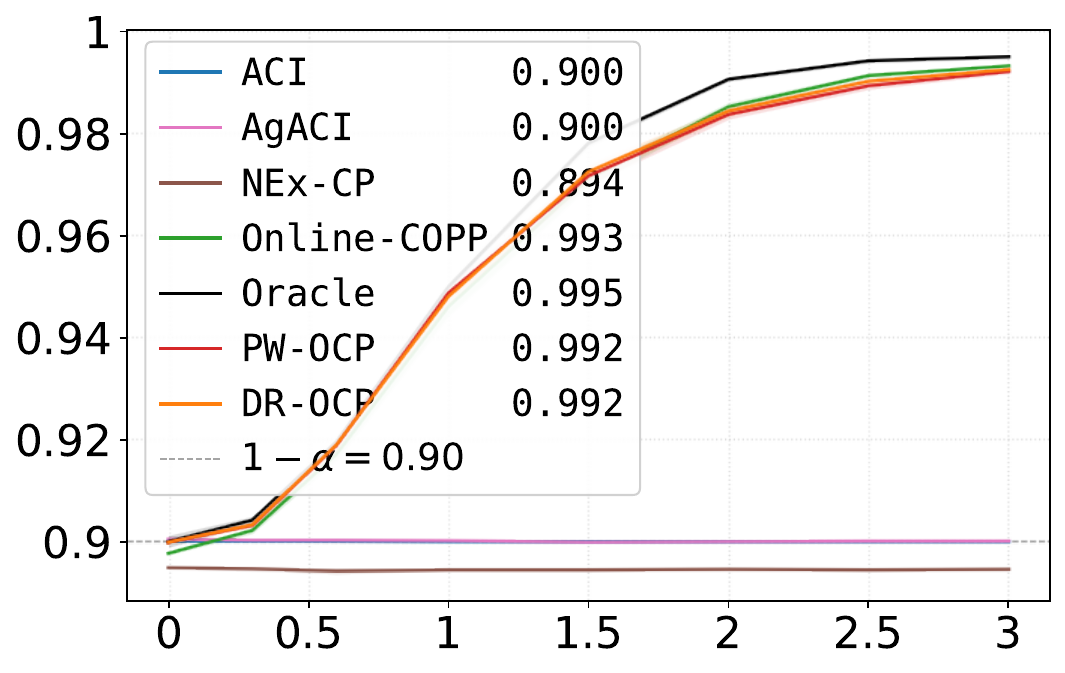}
    \end{subfigure}\hfill
    \begin{subfigure}[b]{0.32\textwidth}\centering
        \includegraphics[width=\textwidth]{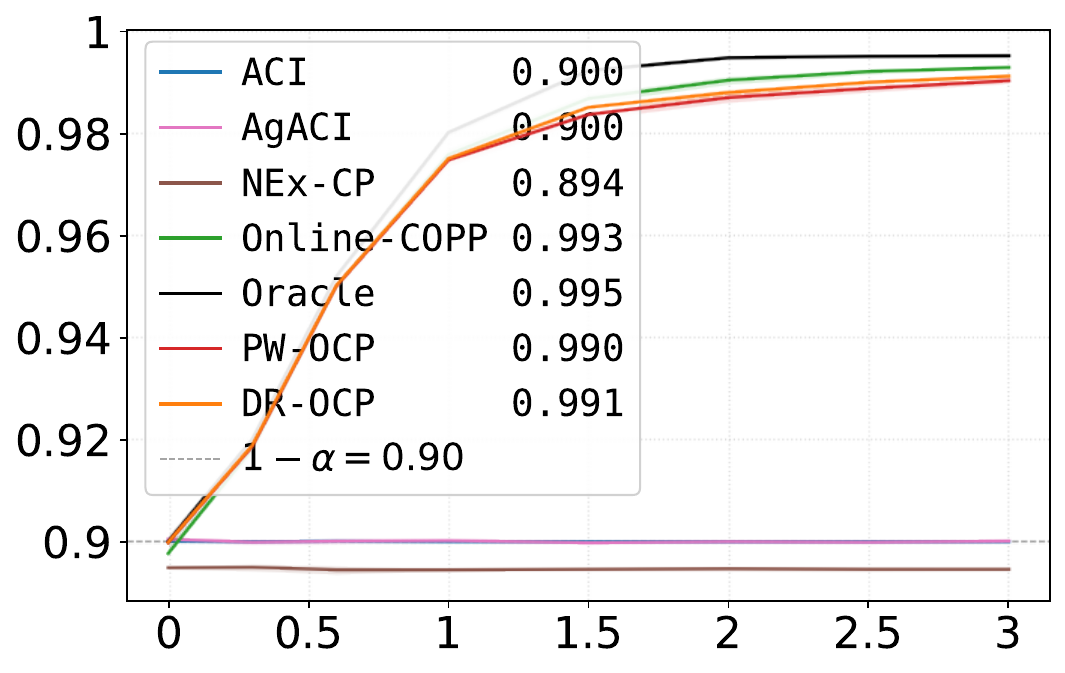}
    \end{subfigure}\hfill
    \begin{subfigure}[b]{0.32\textwidth}\centering
        \includegraphics[width=\textwidth]{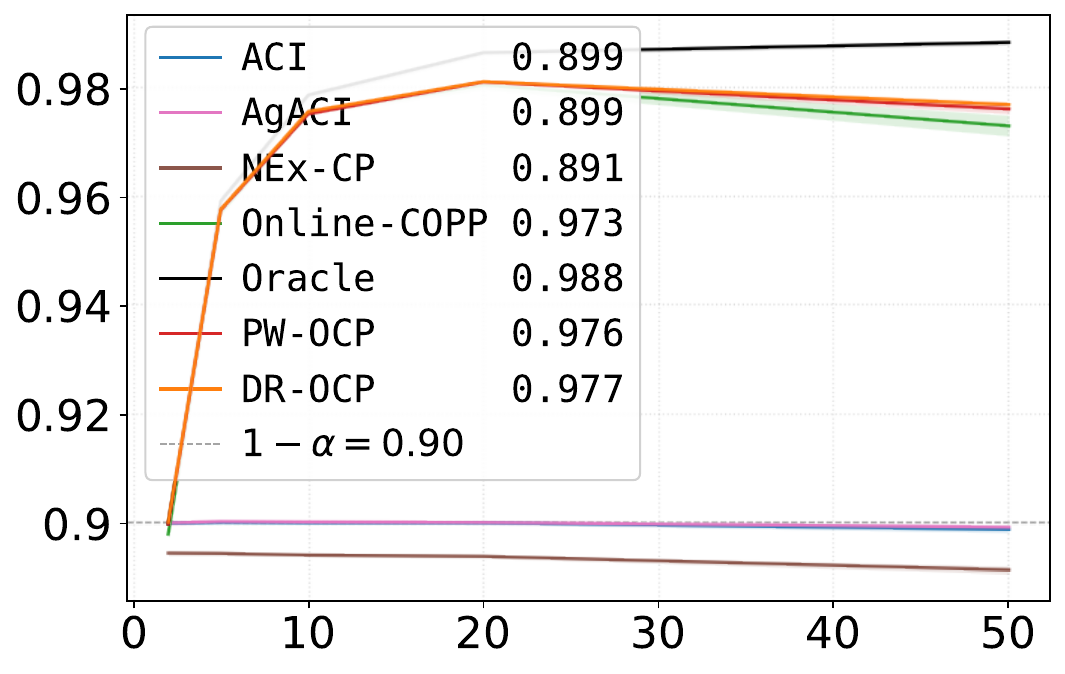}
    \end{subfigure}
    \\[2pt]
    \rowlabel{\qquad\quad ACov Gap}%
     \begin{subfigure}[b]{0.32\textwidth}\centering
        \includegraphics[width=\textwidth]{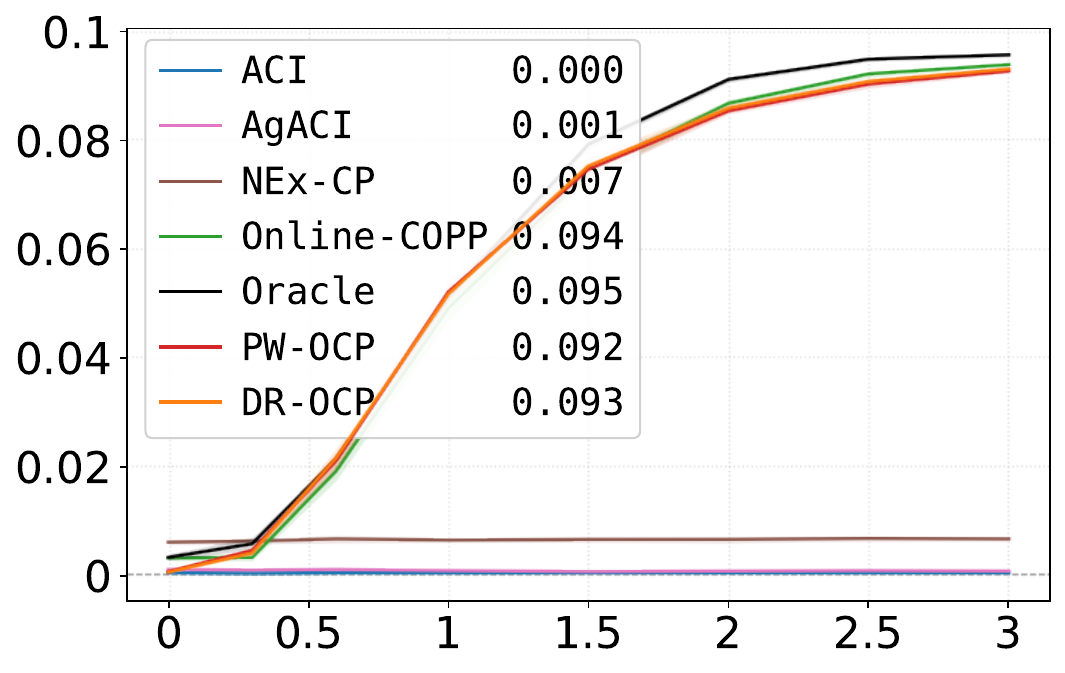}
    \end{subfigure}\hfill
    \begin{subfigure}[b]{0.32\textwidth}\centering
        \includegraphics[width=\textwidth]{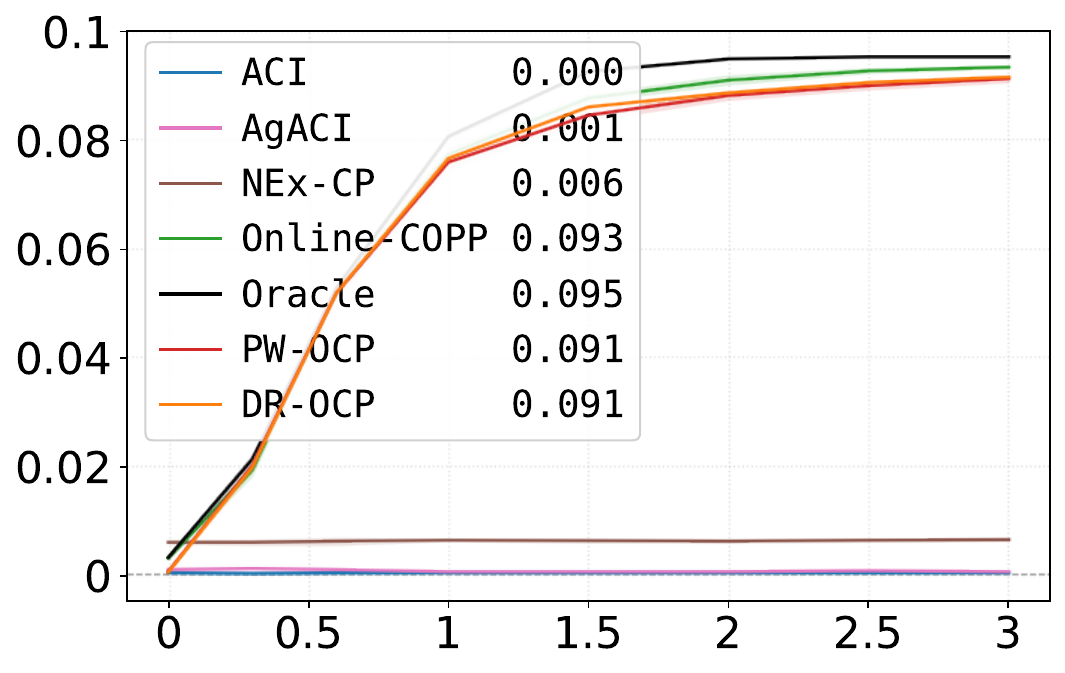}
    \end{subfigure}\hfill
    \begin{subfigure}[b]{0.32\textwidth}\centering
        \includegraphics[width=\textwidth]{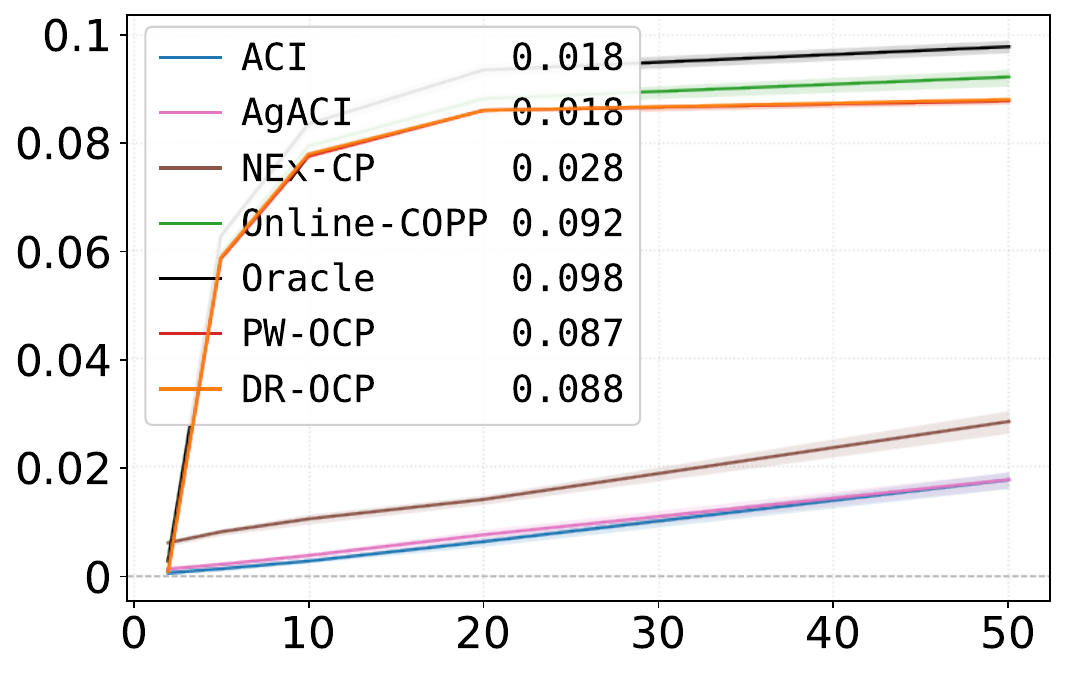}
    \end{subfigure}
    \vskip -0.05in
    \begin{subfigure}{0.32\textwidth}
        \centering
        \qquad \small{\qquad\quad Endogeneity strength}
    \end{subfigure}
    \begin{subfigure}{0.32\textwidth}
        \centering
        \setlength{\parindent}{0.5em}
        \qquad \small{\qquad Endogeneity strength}
    \end{subfigure}
    \begin{subfigure}{0.32\textwidth}
        \centering
        \setlength{\parindent}{1em}
        \quad \small{\quad Number of arms $K$}
    \end{subfigure}
    \vskip -0.05in
    \caption{\textbf{Logged coverage diagnostics across synthetic settings.} The top row reports marginal coverage $\MCov_T$ and the bottom row reports action-conditional coverage gaps for the same mean-shift, variance-shift, and multi-arm settings as Figure~\ref{fig:synthetic-coverage}. These logged diagnostics can appear well calibrated even when the corresponding counterfactual gaps remain large, illustrating why logged metrics alone are insufficient for counterfactual decision-making.}
\label{fig:observable-diagnostics}
\end{figure}

\begin{figure}[h]
\hspace{1.5em}%
    \begin{subfigure}{0.48\textwidth}\centering\normalsize{Mean shift}\end{subfigure}\hfill
    \begin{subfigure}{0.48\textwidth}\centering\normalsize{Variance shift}\end{subfigure}
    \\[2pt] 
    \rowlabel{\qquad\qquad\qquad \qquad \normalsize{MCov}}%
     \begin{subfigure}[b]{0.48\textwidth}\centering
        \includegraphics[width=\textwidth]{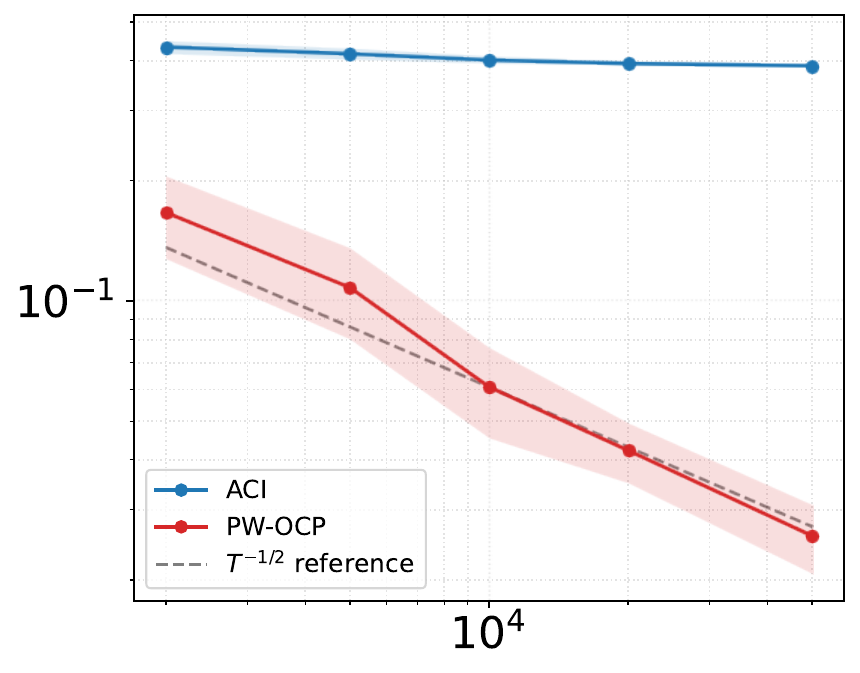}
    \end{subfigure}\hfill
    \begin{subfigure}[b]{0.48\textwidth}\centering
        \includegraphics[width=\textwidth]{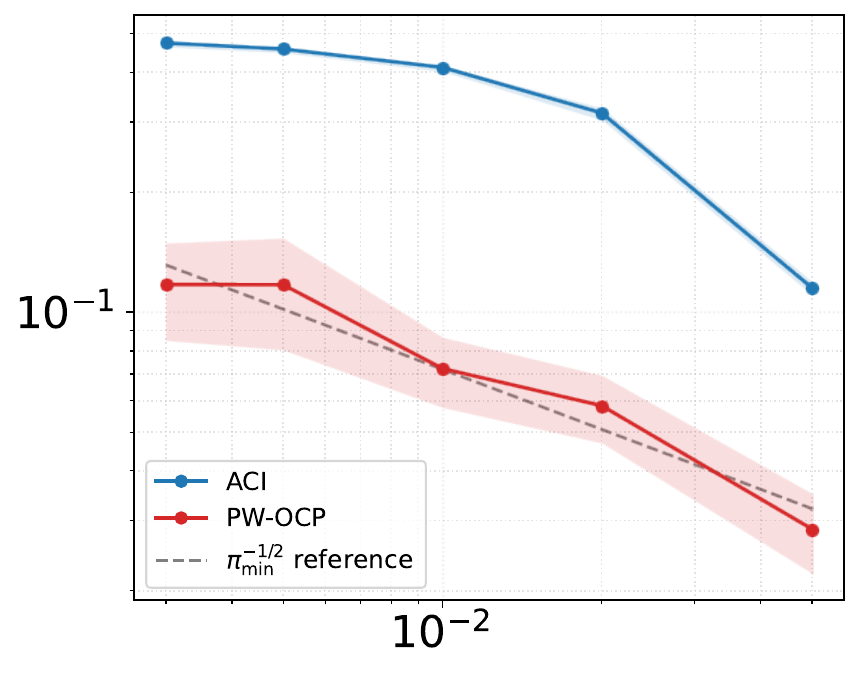}
    \end{subfigure}
    \vskip -0.05in
    \begin{subfigure}{0.48\textwidth}
        \centering
        \qquad \normalsize{\qquad\qquad Number of rounds $T$}
    \end{subfigure}
    \begin{subfigure}{0.48\textwidth}
        \centering
        \setlength{\parindent}{0.5em}
        \qquad \normalsize{\qquad Exploration probability $\pi_{\min}$}
    \end{subfigure}
    \vskip -0.05in
    \caption{\textbf{Rate verification for PW-OCP and DR-OCP.} With $K=10$, the left panel varies $T\in\{2{,}000,\ldots,50{,}000\}$ at fixed $\pi_{\min}=0.05$, and the right panel varies $\pi_{\min}\in\{0.003,\ldots,0.05\}$ at fixed $T=20{,}000$. Counterfactual coverage improves with the effective sample size $T\pi_{\min}$, matching the dependence in Theorem~\ref{thm:pwocp}.}
\label{fig:rate-verification}
\end{figure}

\begin{figure}[h]
\centering
\includegraphics[width=0.70\textwidth]{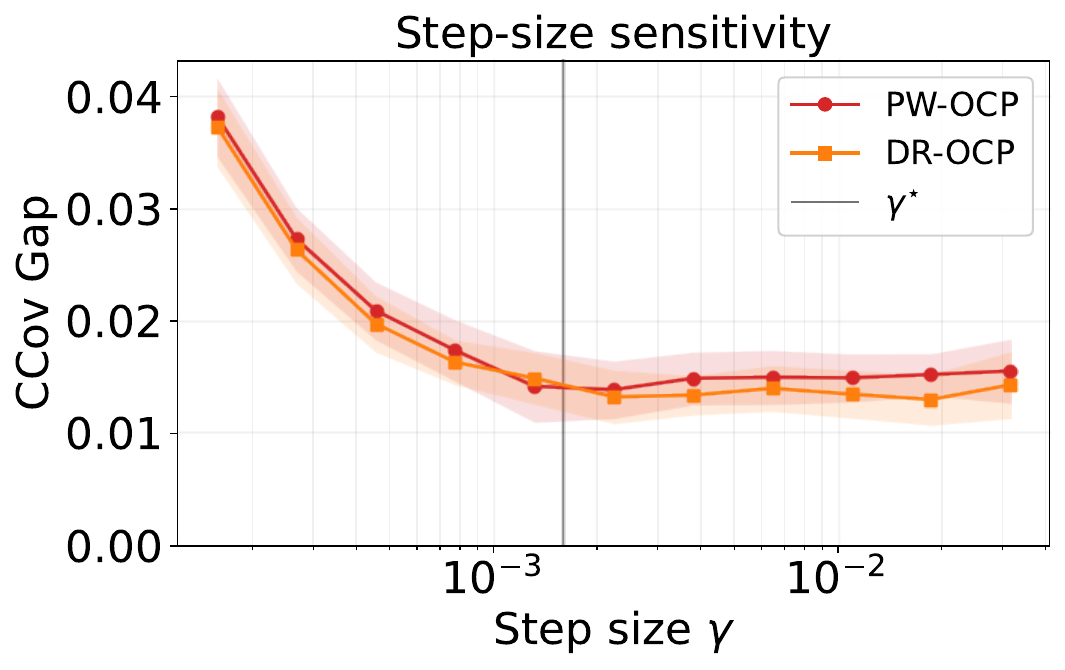}
\caption{\textbf{Step-size sensitivity.} Counterfactual coverage gap is plotted against $\gamma$ for $K=5$, $T=20{,}000$, and $\pi_{\min}=0.05$. The dashed vertical line marks $\gamma^\star=\sqrt{\pi_{\min}/T}\approx1.58\times10^{-3}$. PW-OCP performs best near this scale, and both methods remain stable over a broad neighborhood of $\gamma^\star$. Error bars are mean $\pm95\%$ confidence intervals over $20$ seeds.}

\label{fig:gamma-abl}
\end{figure}

\begin{figure}[h]
\centering
\makebox[\textwidth][c]{\includegraphics[width=1.15\textwidth]{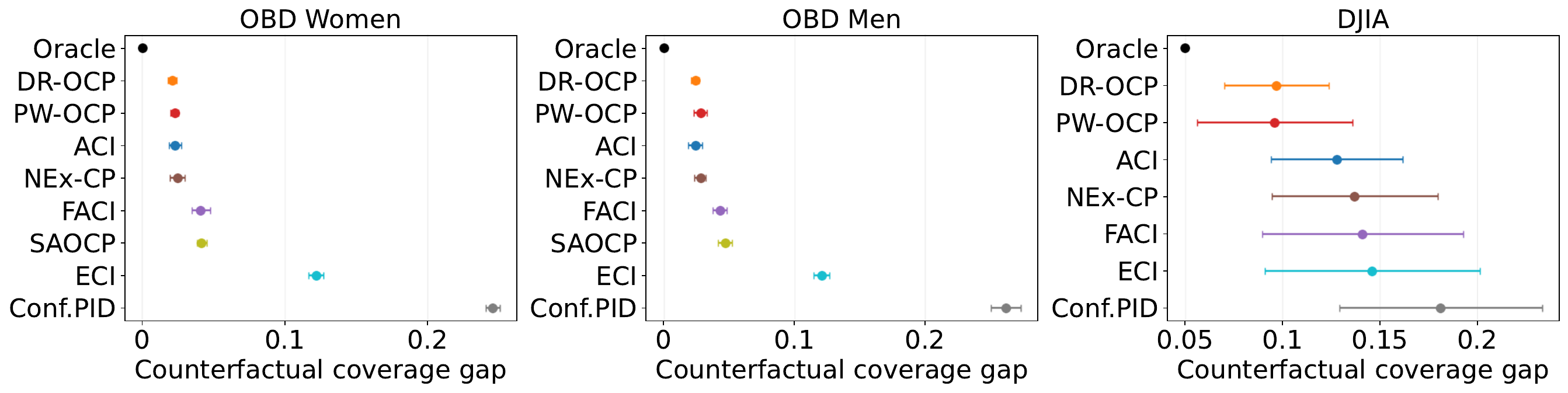}}
\caption{\textbf{Real-data counterfactual coverage on Open Bandit and DJIA tasks.} The three panels report the same counterfactual coverage-gap metric as Table~\ref{tab:master} for OBD Women, OBD Men, and DJIA portfolio rebalancing. Absolute gaps are smaller than in the synthetic stress tests because several baselines already achieve near-nominal logged coverage, but PW-OCP and DR-OCP remain among the best non-oracle methods across all three benchmarks.}
\label{fig:realdata}
\end{figure}

%%%%%%%%%%%%%%%%%%%%%%%%%%%%%%%%%%%%%%%%%%%%%%%%%%%%%%%%%%%%

\end{document}